\let\@fnsymbol\@arabic
\newtheorem{remark}{Remark}
\newtheorem{definition}{Definition}
\newtheorem{lemma}{Lemma}
\newtheorem{theorem}{Theorem}
\newtheorem{corollary}{Corollary}
\newcommand{\bbP}{\mathbb{P}}
\newcommand{\bbR}{\mathbb{R}}
\newcommand{\bbRnonneg}{\mathbb{R}_{\ge 0}}
\newcommand{\E}{\mathbb{E}}
\newcommand{\bbar}{\tilde{b}}
\newcommand{\Kbar}{\bar{K}}
\newcommand{\calD}{\mathcal{D}}
\newcommand{\calL}{\mathcal{L}}
\newcommand{\calN}{\mathcal{N}}
\newcommand{\calP}{\mathcal{P}}
\newcommand{\calT}{\mathcal{T}}
\newcommand{\calThat}{\widehat{\calT}}
\newcommand{\calX}{\mathcal{X}}
\newcommand{\ellhat}{\hat{\ell}}
\newcommand{\phat}{\hat{p}}
\newcommand{\rhat}{\hat{r}}
\newcommand{\Shat}{\hat{S}}
\newcommand{\Uhat}{\hat{U}}
\newcommand{\what}{\hat{w}}
\newcommand{\wtrue}{\bar{w}}
\newcommand{\Xhat}{\hat{X}}
\newcommand{\Scheck}{\check{S}}
\newcommand{\Ucheck}{\check{U}}
\newcommand{\wcheck}{\check{w}}
\newcommand{\Xcheck}{\check{X}}
\newcommand{\Atilde}{\tilde{A}}
\newcommand{\calDtilde}{\tilde{\calD}}
\newcommand{\Gtilde}{\tilde{G}}
\newcommand{\ntilde}{\tilde{n}}
\newcommand{\Otilde}{\tilde{O}}
\newcommand{\rtilde}{\tilde{r}}
\newcommand{\Qtilde}{\tilde{Q}}
\newcommand{\Stilde}{\tilde{S}}
\newcommand{\Utilde}{\tilde{U}}
\newcommand{\Vtilde}{\tilde{V}}
\newcommand{\wtilde}{\tilde{w}}
\newcommand{\Xtilde}{\tilde{X}}
\newcommand{\xtilde}{\tilde{x}}
\newcommand{\Deltatilde}{\tilde{\Delta}}
\newcommand{\Lambdatilde}{\tilde{\Lambda}}
\newcommand{\Sigmatilde}{\tilde{\Sigma}}
\newcommand{\avec}{\vec{a}}
\newcommand{\bvec}{\vec{b}}
\newcommand{\dvec}{\vec{d}}
\newcommand{\gvec}{\vec{g}}
\newcommand{\hvec}{\vec{h}}
\newcommand{\rvec}{\vec{r}}
\newcommand{\zvec}{\vec{z}}
\newcommand{\LS}{\operatorname{LS}}
\newcommand{\ML}{\operatorname{ML}}
\newcommand{\RDPG}{\operatorname{RDPG}}
\newcommand{\ASE}{\operatorname{ASE}}
\newcommand{\LSE}{\operatorname{LSE}}
\newcommand{\Bernoulli}{\operatorname{Bernoulli}}
\newcommand{\supp}{\operatorname{supp}}
\newcommand{\diag}{\operatorname{diag}}
\newcommand{\iid}{\stackrel{\text{i.i.d.}}{\sim}}
\newcommand{\eventually}{\text{ eventually}}
\newcommand{\tti}{2,\infty}
\newcommand{\onevec}{e}
\newcommand{\UP}{U}
\newcommand{\UA}{\Uhat}
\newcommand{\UL}{\Utilde}
\newcommand{\SP}{S}
\newcommand{\SA}{\Shat}
\newcommand{\rls}{r_{\LS}}
\newcommand{\rhatls}{\rhat_{\LS}}
\newcommand{\rtildels}{\rtilde_{\LS}}
\newcommand{\wls}{w_{\LS}}
\newcommand{\whatls}{\what_{\LS}}
\newcommand{\wcheckls}{\wcheck_{\LS}}
\newcommand{\wtildels}{\wtilde_{\LS}}
\newcommand{\whatml}{\what_{\ML}}
\newcommand{\thetals}{\theta_{\LS}}
\newcommand{\nuls}{\nu_{\LS}}
\newcommand{\lambdamin}{\lambda_{\min}}
\newcommand{\projcompXtilde}{\calP_{\Xtilde}^{\perp}}
\newcommand{\inlaw}{\xrightarrow{\calL}}
\newcommand{\inprob}{\xrightarrow{P}}
\title{Limit theorems for out-of-sample extensions of the adjacency and Laplacian spectral embeddings}
\author{
Keith Levin\footnote{Department of Statistics, University of Michigan},
Fred Roosta\footnote{School of Mathematics and Physics, University of Queensland}~\textsuperscript{,}\footnote{International Computer Science Institute},
Minh Tang\footnote{Department of Statistics, North Carolina State University},
Michael W. Mahoney\footnotemark[3]~\textsuperscript{,}\footnote{Department of Statistics, University of California at Berkeley}~
and Carey E. Priebe\footnote{Department of Applied Mathematics and Statistics, Johns Hopkins University}
}
\begin{document}

\maketitle

\abstract{
Graph embeddings, a class of dimensionality reduction techniques designed for relational data, have proven useful in exploring and modeling network structure. Most dimensionality reduction methods allow out-of-sample extensions, by which an embedding can be applied to observations not present in the training set. Applied to graphs, the out-of-sample extension problem concerns how to compute the embedding of a vertex that is added to the graph after an embedding has already been computed. In this paper, we consider the out-of-sample extension problem for two graph embedding procedures: the adjacency spectral embedding and the Laplacian spectral embedding. In both cases, we prove that when the underlying graph is generated according to a latent space model called the random dot product graph, which includes the popular stochastic block model as a special case, an out-of-sample extension based on a least-squares objective obeys a central limit theorem about the true latent position of the out-of-sample vertex. In addition, we prove a concentration inequality for the out-of-sample extension of the adjacency spectral embedding based on a maximum-likelihood objective. Our results also yield a convenient framework in which to analyze trade-offs between estimation accuracy and computational expense, which we explore briefly.
}

\section{Introduction}
\label{sec:intro}

%
%
%
%

Graph embeddings are a class of dimensionality reduction techniques
designed for network data, which have emerged as a popular tool for
exploring and modeling network structure.
Given a graph $G = (V,E)$ on vertex set $V = \{1,2,\dots,n\}$
with adjacency matrix $A \in \{0,1\}^{n \times n}$,
the graph embedding problem concerns how best to map $V$ to
a $d$-dimensional vector space so that geometry in
that vector space captures the topology of $G$.
For example, we may ask that vertices that play similar structural roles in
$G$ be mapped to nearby points.
Two common approaches to graph embedding are
the graph Laplacian embedding \citep{BelNiy2003,CoiLaf2006} and
the adjacency spectral embedding \citep[ASE;][]{SusTanFisPri2012},
both which are based on spectral decompositions of the adjacency matrix
or a transformation thereof.
In many settings, data collection or computational constraints may
dictate that having computed an embedding of the graph $G$,
a practitioner may wish to add vertices to $G$, and compute the corresponding
embeddings of these new vertices.
We call these new vertices {\em out-of-sample} vertices,
in contrast to the {\em in-sample} vertices in $V$.
Since constructing the in-sample embedding typically requires a comparatively
expensive eigenvalue computation, it is preferable to compute this
out-of-sample embedding without computing a new graph embedding from scratch.
This problem is well-studied in the dimensionality reduction literature,
where it is known as the out-of-sample extension problem.
The focus of the present paper is to derive out-of-sample extensions
for the ASE and a slight variant of Laplacian eigenmaps,
and to establish their statistical properties under a particular
natural choice of network model.

Latent space network models are a class of statistical models for graphs
in which unobserved geometry drives network formation.
Each vertex is assigned a {\em latent position}, and pairs of vertices
form edges according to how near their latent positions are to one another.
Under certain latent space models, graph embeddings may be thought of as
estimating these latent positions.
The focus of the present work is the random dot product graph,
a latent position model that subsumes the popular stochastic block model
(see Section~\ref{subsec:background} below).
Under this model, both the ASE
and a slight variant of Laplacian eigenmaps called
the Laplacian spectral embedding \citep[LSE;][]{TanPri2018},
recover all the latent positions of the in-sample vertices
uniformly \citep{LyzSusTanAthPri2014,TanPri2018}.
Specifically, one obtains a bound on the estimation error of order
$n^{-1/2}$ (ignoring logarithmic factors)
that holds uniformly over all $n$ vertices in the graph.
Further, any constant number of vertices jointly obey a CLT,
in that their embeddings are jointly asymptotically normally distributed about
the true latent positions
\citep{AthLyzMarPriSusTan2016,LevAthTanLyzPri2017,TanPri2018}.
In this paper, we show that analogous results hold for the out-of-sample
extensions of both the ASE and LSE.
That is, the out-of-sample extensions of these two
methods recover the latent positions of the out-of-sample vertices
at the same rate as would be obtained
by the computationally more expensive in-sample embedding.

\subsection{Background and Notation} \label{subsec:background}

Most dimensionality reduction and embedding techniques begin with
a collection of training data observations
$\calD = \{z_1,z_2,\dots,z_n\} \subseteq \calX$,
where $\calX$ is the set of all possible observations
(e.g., the set of all possible images, audio signals, etc.).
$\calX$ is endowed with a similarity measure
$K: \calX \times \calX \rightarrow \bbRnonneg$,
and most embedding procedures leverage the eigenstructure of the symmetric
similarity matrix $M = [ K(z_i,z_j) ] \in \bbR^{n \times n}$.
An embedding of the data $\calD$ assigns to each $z_i \in \calD$ a
vector $x_i \in \bbR^d$, where $d$ is the embedding dimension,
with the embeddings $\{x_1,x_2,\dots,x_n\}$ chosen so as to
preserve the structure of the sample $\calD$ as captured by the matrix $M$.
This typically manifests as attempting to ensure that elements
$z_i,z_j \in \calD$ for which $K(z_i,z_j)$ is large
are mapped so that $\| x_i - x_j \|$ is small.
Suppose that, having computed $x_1,x_2,\dots,x_n$, we obtain a new
{\em out-of-sample} observation $z \in \calX$
(which may or may not appear in the training sample $\calD$),
which we would like to embed along with the {\em in-sample}
observations $\calD$.
Letting $\calDtilde = \calD \cup \{z\}$, a na\"ive approach
would simply construct a new embedding
$\{\xtilde_1,\xtilde_2,\dots,\xtilde_n,\xtilde_{n+1} \}$
based on the sample $\calDtilde$.
This would involve computational complexity of the same order as that
required to compute the initial embedding
$\{x_1,x_2,\dots,x_n\}$.
Since computing the embedding $\{x_1,x_2,\dots,x_n\}$ tends to involve
expensive computations, most commonly eigendecompositions,
it would be preferable to avoid paying this computational cost repeatedly,
particularly if there exists a scheme whereby
the embedding $\xtilde_{n+1}$ of out-of-sample observation $z$ can be
well approximated by a less costly computation.
This is the motivation for the out-of-sample (OOS) extension problem,
which concerns how to embed $z$ into the
same embedding space $\bbR^d$ based only on the existing {\em in-sample
embedding} $\{x_1,x_2,\dots,x_n\}$ and the similarity measurements
$\{ K(z,x_i) : i=1,2,\dots,n \}$.
That is, we wish to compute an embedding of $z$ {\em without} making recourse
to the full similarity matrix $M \in \bbR^{n \times n}$.

As an illustrative example, consider the Laplacian eigenmaps embedding
\citep{BelNiy2003,BelNiySin2006}.
Recall that the {\em normalized Laplacian} of graph $G = (V,E)$
with adjacency matrix $A \in \bbR^{n \times n}$ is given by
the matrix $L = D^{-1/2} A D^{-1/2}$,
where $D \in \bbR^{n \times n}$ is the diagonal matrix of degrees,
with $D_{ii} = \sum_{j=1}^n A_{ij}$, and $0^{-1/2} = 0$ by convention
\citep{Chung1997,von2007tutorial,vishnoi2013lx}.
The $d$-dimensional normalized Laplacian eigenmaps embedding of $G$ is
then given by the rows of the matrix $\UL \in \bbR^{n \times d}$, where
the columns of $\UL$ are the orthonormal eigenvectors corresponding to the
top $d$ eigenvectors of $L$, excluding the trivial eigenvalue $1$.
Suppose now that we wish to add a vertex $v$ to the graph, to form
graph $\Gtilde$ with adjacency matrix
\begin{align} 
\label{eq:def:Atilde}
\Atilde = \begin{bmatrix} A & \avec \\
\avec^T & 0 \end{bmatrix},
\end{align}
where $\avec \in \{0,1\}^n$ and has $a_i = 1$ if and only if $v$ forms
and edge with in-sample vertex $i \in [n]$.
Na\"ively, one could simply apply the Laplacian eigenmaps embedding again to
$\Atilde$, at the cost of another eigendecomposition.
Cheaper, however, would be an OOS extension,
such as that given by \cite{BengioETAL2003} or \cite{BelNiySin2006},
that only makes use of the embedding $\UL$ and the vector of edges $\avec$.

Out-of-sample extensions for
multidimensional scaling \citep[MDS;][]{Torgerson1952,BorGro2005},
spectral clustering \citep{Weiss1999,NgJorWei2002},
Laplacian eigenmaps \citep{BelNiy2003}
and ISOMAP \citep{TendeSLan2000} appear in \citet{BengioETAL2003}.
These extensions were obtained by formulating each of the dimensionality
reduction techniques as a least-squares problem, which is possible owing to the
fact that the in-sample embeddings are functions of the eigenvalues and
eigenvectors of a similarity or distance matrix.
Let matrix $M = [ K(x_i,x_j) ]_{i,j=1}^n$ be the similarity matrix
for some similarity function $K$, and let $\{ (\lambda_i, u_i)\}_{i=1}^{n}$
be the eigenvalue-eigenvector pairs of $M$.
\cite{BengioETAL2003} derive the OOS extensions for a number of embeddings
as solutions to the least-squares problem
\begin{equation*}
\min_{ f(x) \in \bbR^d }
\sum_{i=1}^n \left( K(x,x_i) - \frac{1}{n} \sum_{j=1}^d \lambda_{j} f_j(x_i) f_j(x) \right)^2,
\end{equation*}
where $\calD = \{x_1,x_2,\dots,x_n\}$ are the in-sample observations
and $f_j(x_i)$ is the $i$-th component of $u_j$.
A different OOS extension for MDS was considered in \cite{TroPri2008}.
Instead of the least-squares framework of \cite{BengioETAL2003},
\citet{TroPri2008} frame the MDS OOS extension problem as a modification of
the optimization problem solved by the in-sample MDS embedding.

An approach to the Laplacian eigenmaps OOS extension,
different from the one presented here,
was pursued in \cite{BelNiySin2006},
incorporating regularization in both the geometry of the
training data and the geometry of the similarity function $K$.
Their approach can also be extended to regularized least squares, SVM
and a variant of SVM in which a Laplacian penalty
term is added to the SVM objective.
The authors showed that all of these OOS extensions are the solutions to
generalized eigenvalue problems.
\cite{LevJanVan2015} provides an illustrative example of the practical
application of these OOS extensions, using the OOS extension of
\cite{BelNiySin2006} to build an audio search system.
More recent OOS extension techniques have attempted to avoid altogether the
need to solve least squares or eigenvalue problems, instead training a
neural net to learn the embedding, so that at out-of-sample embedding time
one need only feed the out-of-sample observation as input to the neural net
\citep[see, for example,][]{QuiPetHeu2016,JanSelLyz2017}.

As far as we are aware,
the only work to date on the OOS extension for ASE
appears in \citet{TanParPri2013},
in which the authors considered the OOS extension problem for certain
\emph{latent space} models of graphs
\citep[see, for example,][]{HofRafHan2002}.
These are models in which each vertex has an associated latent
vector in a Hilbert space,
with edge probabilities determined by inner products
between the latent vectors in this Hilbert space.
The authors presented an OOS extension based on a least-squares objective
and proved a result, analogous to our Theorem~\ref{thm:ase:lsrate},
given the rate of growth of the error between this out-of-sample embedding
and the true out-of-sample latent position.
Theorem~\ref{thm:ase:lsrate} yields a simplification of the proof of the
result originally appearing in \citet{TanParPri2013}, specialized to the
random dot product graph model
(see Definition~\ref{def:rdpg} below).
We note, however, that our results can be extended to more general latent
space network models under suitable conditions on the inner product.

Largely missing from the literature, but of particular importance to the
assessment of OOS extensions, is the comparison of the OOS estimate's
performance compared to its in-sample counter-part.
That is, for training sample $\calD$ and out-of-sample observation
$z \in \calX$ (both drawn, perhaps, from a probability distribution on $\calX$),
how closely does the out-of-sample embedding approximate
its in-sample counterpart computed based on $\calDtilde = \calD \cup \{z\}$?
In this work, we address this question as it pertains to the adjacency spectral
embedding (ASE) and the Laplacian spectral embedding (LSE; an embedding
closely related to the Laplacian eigenmaps embedding but more amenable to
analysis; see Section~\ref{sec:oos}).
In particular, we show the following:
\begin{itemize}
\item Two different approaches to the ASE OOS extension problem
        yield OOS extensions that recover the true out-of-sample latent
        position at a rate that matches the in-sample estimation error rate.
        The first (Theorem~\ref{thm:ase:lsrate}),
        based on a linear least squares objective,
        holds under essentially no conditions on the model.
        The second (Theorem~\ref{thm:ase:mlrate}),
        based on a maximum-likelihood objective,
        requires mild regularity conditions.
\item An LSE OOS extension based on a linear least-squares objective that,
        similarly to the ASE OOS extensions,
        recovers the true out-of-sample latent position at the same rate
        as the in-sample embedding
        (Theorem~\ref{thm:lse:lsrate}).
\item Both of the LLS-based OOS extensions obey central limit theorems
        (Theorems~\ref{thm:ase:lsclt} and \ref{thm:lse:lsclt}),
	with each OOS extension 
        asymptotically normally distributed about the true latent position
        (in the case of ASE) or a transformation thereof (in the case of LSE).
\end{itemize}

We believe that analogous central limit theorems can be obtained for other
OOS extensions such as those presented in~\citet{BengioETAL2003}
and for the maximum-likelihood ASE OOS extension,
but do not pursue this generalization here.

\subsection{Notation} \label{subsec:notation}

Before continuing, we pause to establish notation.
For a matrix $M \in \bbR^{n_1 \times n_2}$, we denote by $\sigma_i(M)$
the $i$-th singular value of $M$, so that
$\sigma_1(M) \ge \sigma_2(M) \ge \dots \ge \sigma_k(M) \ge 0$,
where $k = \min\{n_1,n_2\}$.
For integer $k > 0$, we let $[k] = \{1,2,\dots,n\}$.
Throughout the paper, $n$ will denote the number of vertices
in the observed graph $G$. For a vector $x$, the unadorned norm
$\| x \|$ will denote the Euclidean norm of $x$,
while for all $p > 0$, $\| x \|_p$ will denote the $p$-norm of $x$,
where $\| x \|_\infty = \max_i | x_i |$.
For a matrix $M$, $\| M \|_F$ will denote the Frobenius norm,
$\| M \|$ will denote the spectral norm
\begin{equation*}
\| M \| = \sup_{x : \|x\|=1 } \| M x \|,
\end{equation*}
and $\| M \|_{\tti}$ will denote the $2$-to-$\infty$ norm,
\begin{equation*}
\| M \|_{\tti} = \sup_{x : \|x\|=1 } \| M x \|_\infty.
\end{equation*}

Most of our results will concern the behavior of certain
quantities as the number of vertices $n$ increases to $\infty$.
We will often, for ease of notation, suppress this dependence on $n$, but
it should be assumed throughout that all quantities are dependent on $n$,
with the exception of the distribution $F$ and the latent space dimension $d$.
Thus, for example, we will in several places refer to a
``sequence of matrices'' $Q \in \bbR^{d \times d}$, where we suppress what
ought to be, say, a subscript $n$.
Throughout, $C > 0$ denotes a positive constant, not depending on $n$,
whose value may change from line to line or even, occasionally, within the
same line.
Given an event $E$, we let $E^c$ denote its complement, and let
$\Pr[ E ]$ denote the probability of event $E$ (the probability measure in
question will always be clear from context).
Given a collection of events $\{ E_n \}$ indexed by $n$,
suppose that with probability $1$ there exists $n_0$ such that
$E_n$ occurs whenever $n \ge n_0$.
If this is the case, we say that $E_n$ occurs eventually
or, by a slight abuse of terminology, say simply that $E_n$ occurs.

We make standard use of the big-$O$, big-$\Omega$ and big-$\Theta$
notation. Thus, for example, we write $f(n) = O(g(n))$ to denote the existence
of a constant $C > 0$ such that for all suitably large $n$, $f(n) \le Cg(n)$.
We write $f(n) = \Otilde(g(n))$ to mean that $f(n) = O(g(n))$ ignoring
logarithmic factors.
That is, if there exists a $c > 0$ such that $f(n) = O(g(n) \log^c n)$
(throughout the paper, $c$ is never larger than 2 or 3 and is typically $1/2$).
Our one slight abuse of this notation is in the case where,
letting $\{ Z_n \}$ be a sequence of random variables,
we write $Z_n = O(g(n))$ to mean that there exists a constant $C > 0$ such
that almost surely there exists $n_0$ such that
$|Z_n| \le C g(n)$ for all $n \ge n_0$, replacing the modulus with an
appropriate norm when $Z_n$ is a vector or matrix.
Most results in this paper are of this form.
We note that throughout, we prove these results by showing first
that $\Pr[ |Z_n| \ge Cg(n) ] \le Cn^{-(1+\epsilon)}$
is summable for all suitably small $\epsilon > 0$.
We then use the independence of $\{ Z_n : n=1,2,\dots \}$
to invoke the Borel-Cantelli
lemma \citep{Billingsley1995} to conclude that $Z_n = O(g(n))$.
Thus, though many of our results are stated as holding asymptotically,
they all have finite-sample analogues obtained in the course of their proofs.

\subsection{Roadmap}
The remainder of this paper is structured as follows.
In Section~\ref{sec:oos}, we formalize the graph out-of-sample extension
problem, and introduce a few methods for constructing such extensions.
In Section~\ref{sec:theory},
we present our main theoretical results, proving concentration and asymptotic
distributions for these extensions.
Section~\ref{sec:expts} gives an experimental investigation of the properties
of these embeddings.
We conclude in Section~\ref{sec:discussion} with a brief discussion of
directions for future work.

\section{Out-of-sample Extension for ASE and LSE}
\label{sec:oos}
Given a graph $G = ([n],E)$ with adjacency matrix $A \in \{0,1\}^{n \times n}$,
the adjacency spectral embedding \citep[ASE;][]{SusTanFisPri2012}
and the Laplacian spectral embedding \citep[LSE;][]{TanPri2018}
each provide a mapping of the $n$ vertices of $G$ into $\bbR^d$.
The ASE maps the vertices of $G$ to $d$-dimensional representations
$\Xhat_1,\Xhat_2,\dots,\Xhat_n \in \bbR^d$ given by the rows of the matrix
\begin{equation} \label{eq:def:ase}
        \Xhat = \ASE(A,d) = \Uhat \Shat^{1/2} \in \bbR^{n \times d},
\end{equation}
where $\Shat \in \bbR^{d \times d}$ is the diagonal matrix with entries given
by the top $d$ eigenvalues of $A$ and
the columns of $\Uhat \in \bbR^{n \times d}$ are the corresponding
orthonormal eigenvectors.
The Laplacian spectral embedding \cite[LSE;][]{TanPri2018}
proceeds according to a similar eigenvalue truncation,
applied to the normalized graph Laplacian,
\begin{equation*} \label{eq:def:laplacian}
  L = \calL(A) := D^{-1/2} A D^{-1/2},
\end{equation*}
where $D \in \bbR^{n \times n}$ is the diagonal degree matrix,
with $D_{i,i} = \sum_{j=1}^n A_{i,j}$, with $0^{-1/2} = 0$ by convention.
The LSE embeds the vertices of $G$ as
$\Xcheck_1,\Xcheck_2,\dots\,\Xcheck_n \in \bbR^d$
given by the rows of the matrix
\begin{equation} \label{eq:def:lse}
  \Xcheck = \LSE(A,d) = \Ucheck \Scheck^{1/2} \in \bbR^{n \times d},
\end{equation}
where $\Scheck \in \bbR^{d \times d}$ is the diagonal matrix formed of
the $d$ largest-magnitude eigenvalues of the graph Laplacian $L$ and
$\Ucheck \in \bbR^{n \times d}$ is the matrix
formed of the $d$ corresponding orthonormal eigenvectors.
The well-known Laplacian eigenmaps embedding \citep{BelNiy2003}
corresponds to a rescaling of the LSE,
in that the Laplacian eigenmaps embedding is given by the rows of
$\Uhat \in \bbR^{n \times d}$. As such, results similar to those presented
here for the LSE can be obtained for the Laplacian eigenmaps embedding as well.

We note that in both of the embeddings just described, there may be a concern
that the $d$ largest-magnitude eigenvalues need not all be positive,
and hence square roots $\Shat^{1/2}$ and $\Scheck^{1/2}$ will be ill-defined.
As a result, it may be preferable, in general, to consider
instead the top-$d$ singular values of $A$ and $L$.
We will not consider this issue in the present work, since
under the model considered in this paper (see Definition~\ref{def:rdpg} below),
with probability $1$
the $d$ largest-magnitude eigenvalues will be positive
for all suitably large $n$.

\begin{remark}[Comparing ASE and LSE]
Both the ASE and LSE yield low-dimensional representations                      of the vertices of $G$,
and it is natural to ask which embedding is preferable.
The answer, in general, is dependent on the precise model under consideration
and the intended downstream task.
For example, one can show that neither the ASE nor the Laplacian embedding
strictly dominates in a vertex classification task.
Section 4 of \cite{TanPri2018} demonstrates that ASE performs
better than the Laplacian embedding when applied to graphs with a
core-periphery structure.
Such structures are ubiquitous in real networks;
see, for example, \cite{LesLanDasMah2009} and \cite{JeuETAL2015}.
We refer the interested reader to \cite{CapTanPri2018}
for a more thorough theoretical treatment of this point.
\end{remark}

The two embeddings just discussed
are especially well-suited to the random dot product
graph \citep[RDPG;][]{YouSch2007,RDPGsurvey}, a model
in which graph structure is driven by
the geometry of latent positions associated to the vertices.
\begin{definition} {\em (Inner product distribution)}
	A distribution $F$ on $\bbR^d$ is a
	{\em $d$-dimensional inner product distribution }
	if $0 \le x^T y \le 1$ whenever $x,y \in \supp F$.
\end{definition}
\begin{definition}\label{def:rdpg}
        \emph{(Random Dot Product Graph)}
        Let $F$ be a $d$-dimensional inner product distribution,
	and let $X_1,X_2,\dots,X_n \iid F$ be collected in the rows of
	$X \in \bbR^{n \times d}$.
	Let $G$ be a random graph with adjacency matrix
	$A \in \{0,1\}^{n \times n}$.
	We say that $G$ is a {\em random dot product graph} (RDPG)
	with {\em latent positions} $X_1,X_2,\dots,X_n \in \bbR^d$,
	if the edges of $G$ are independent conditioned on
	$\{X_1,X_2,\dots,X_n\}$, with
        \begin{equation} \label{eq:rdpg}
                \Pr[ A | X ]=
                \prod_{1 \le i<j \le n}
		(X_i^TX_j)^{A_{i,j}}(1-X_i^TX_j)^{1-A_{i,j}}.
        \end{equation}
	We say that $X_i$ is the latent position associated to the $i$-th
	vertex in $G$, and write
	$(A,X) \sim \RDPG(F,n)$ to mean that the rows of
	$X \in \bbR^{n \times d}$ are drawn i.i.d.\ from $F$ and
	that $A \in \{0,1\}^{n \times n}$ is generated according to
	Equation~\eqref{eq:rdpg} conditional on $X$.
\end{definition}
Note that the RDPG has an inherent nonidentifiability, owing to the fact that
the distribution of $A$
is unchanged by an orthogonal rotation of the latent positions:
for latent position matrix $X \in \bbR^{n \times d}$ and
orthogonal matrix $W \in \bbR^{d \times d}$,
both $X \in \bbR^{n \times d}$ and $X W \in \bbR^{n \times d}$ give rise
to the same distribution over adjacency matrices, in that
$\E[ A \mid X] = X X^T = X W (XW)^T$.
Thus, we can only ever hope to recover the latent positions of the RDPG
up to some orthogonal transformation.
Throughout this work, we denote by
$\Delta = \E X_1 X_1^T \in \bbR^{d \times d}$
the second moment matix of the latent position distribution $F$.
Our results require that $\Delta$ be of full rank, an assumption that we
make without loss of generality owing to the fact that if $\Delta$ is of,
say, rank $d' < d$, then we may equivalently think of $F$ as a
$d'$-dimensional inner product distribution by restricting our attention to
an appropriate $d'$-dimensional subspace of $\bbR^d$.

\begin{remark} {\em (Extension to other graph models)}
As alluded to above, the RDPG as defined here only captures graphs with
positive semi-definite expected adjacency matrices.
This limitation can be avoided
by considering the {\em generalized} RDPG \citep{RubPriTanCap2017}.
The results stated in the present work can for the most part be extended
to this model, at the expense of additional notational complexity,
which we prefer to avoid here.
Similarly, using standard concentration inequalities, most of the results
presented here can be extended beyond binary edges
to consider independent edges that are unbiased ($\E A_{i,j} = X_i^T X_j$)
with sub-Gaussian or sub-gamma tails \citep{BLM,Tropp2015}.
\end{remark}

Throughout this paper, we will assume that $(A,X) \sim \RDPG(F,n)$ for
some $d$-dimensional inner product distribution $F$,
and write $P = \E[ A \mid X] = X X^T$.
Under this setting, it is clear that $\Xhat = \ASE(A,d)$ is a natural
estimate of the matrix of true latent positions $X$.
Further, $\Xcheck = \LSE(A,d)$ is a natural estimate of
$\Xtilde = T^{-1/2} X$, where $T \in \bbR^{n \times n}$ is a diagonal matrix
with entries $T_{i,i} = \sum_j X_j^T X_i$.
The rows of $\Xtilde$ can be thought of as the Laplacian spectral embeddings
of the matrix $P = X X^T$, in the sense that
$\Xtilde \Xtilde^T = \calL( P )$.
Indeed, it has been shown previously that the ASE consistently estimates the
latent positions in the RDPG \citep{SusTanFisPri2012,TanSusPri2013},
and successfully recovers community structure in the (positive semi-definite)
stochastic block model \citep{LyzSusTanAthPri2014},
which can be recovered as a special case of the RDPG by taking
the distribution $F$ to be a mixture of point masses.
Similar results can be shown for the LSE \citep{TanPri2018}.
\begin{lemma} \label{lem:2toinfty}
Let $(A,X) \sim \RDPG(F,n)$ for some $d$-dimensional inner product distribution
$F$ and let $\Xhat, \Xcheck, \Xtilde \in \bbR^{n \times d}$ be as above.
Then there exists a sequence of orthogonal matrices
$Q \in \bbR^{d \times d}$ such that
\begin{equation} \label{eq:2toinfty:ase}
\| \Xhat - X Q \|_{2,\infty} = O\left( \frac{ \log n }{ \sqrt{n} } \right).
\end{equation}
Further, if there exists a constant $\eta > 0$ such that
$\eta \le x^T y \le 1-\eta$ whenever $x,y \in \supp F$, then
there exists a sequence of orthogonal matrices
$\Qtilde \in \bbR^{d \times d}$ such that
\begin{equation} \label{eq:2toinfty:lse}
\| \Xcheck - \Xtilde \Qtilde \|_{2,\infty}
	= O\left( \frac{ \log^{1/2} n }{ n } \right).
\end{equation}
\end{lemma}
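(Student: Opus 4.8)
The plan is to recognize both displays as restatements of known two-to-infinity perturbation bounds for the RDPG: Equation~\eqref{eq:2toinfty:ase} is the ASE bound of \citet{LyzSusTanAthPri2014} and Equation~\eqref{eq:2toinfty:lse} is the LSE bound of \citet{TanPri2018}. I would therefore either cite these directly or reproduce their perturbation argument, whose skeleton is the same in both cases and rests on three ingredients: control of the signal eigenvalues, spectral-norm concentration of the noise matrix, and a fine row-wise (rather than spectral) control of the eigenvector perturbation. For the ASE, I first fix the population objects. Writing $P = XX^T = \E[A\mid X]$ with rank-$d$ eigendecomposition $P = U_P S_P U_P^T$, we have $X = U_P S_P^{1/2} W_X$ for some orthogonal $W_X$. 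Since $\Delta = \E X_1 X_1^T$ is full rank, concentration of $n^{-1}X^TX \to \Delta$ shows every nonzero eigenvalue of $P$ is $\Theta(n)$, so $\sigma_d(P) = \Omega(n)$ and $\sigma_1(P) = O(n)$; and because the rows of $X$ are bounded, the rows of $U_P = X W_X^{-1} S_P^{-1/2}$ satisfy $\|U_P\|_{\tti} = O(n^{-1/2})$. This smallness is what ultimately drives the row-wise estimate.

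Second, I would concentrate the noise. Writing $E = A - P$, a matrix Bernstein / Oliveira-type inequality, using the conditional entrywise independence of $A$, gives $\|E\| = O(\sqrt{n})$ up to logarithmic factors, almost surely eventually. Combined with the eigengap $\sigma_d(P) = \Omega(n)$, a Davis--Kahan argument yields an orthogonal $W$ with $\|\Uhat - U_P W\| = O(\|E\|/\sigma_d(P))$ in spectral norm. This is the easy half; the genuine content is upgrading it to the $2$-to-$\infty$ norm. To that end I would expand the eigenvector relation $A\Uhat = \Uhat\Shat$ as
\begin{equation*}
\Uhat = U_P S_P U_P^T \Uhat \Shat^{-1} + E \Uhat \Shat^{-1}.
\end{equation*}
After aligning via $W$, the leading term reduces to $E U_P S_P^{-1}$, and the $i$-th row of $E U_P$ is the independent-sum $\sum_j E_{ij}(U_P)_{j\cdot}$, whose expected squared norm is at most $\|U_P\|_F^2 = O(1)$. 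A row-wise Bernstein bound with a union over the $n$ rows then gives $\|E U_P\|_{\tti} = O(\sqrt{\log n})$, and the $\|S_P^{-1}\| = O(1/n)$ scaling yields $\|\Uhat - U_P W\|_{\tti} = O(\sqrt{\log n}/n)$. Multiplying by $\|\Shat^{1/2}\| = O(\sqrt n)$ and accounting for the eigenvalue mismatch (controlled by $\|U_P\|_{\tti}$ times an eigenvalue perturbation) produces the stated $\|\Xhat - XQ\|_{\tti} = O(\log n/\sqrt n)$.

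The main obstacle is the remainder term $E(\Uhat - U_P W)$: the independent-sum trick fails here because $\Uhat$ depends on the very randomness in $E$. I expect this to be the crux, and I would resolve it with a leave-one-out construction. For each row $i$, define $A^{(i)}$ by replacing the $i$-th row and column of $A$ with their expectations, with eigenvectors $\Uhat^{(i)}$ now independent of the $i$-th row of $E$; then bound $\|E(\Uhat - U_P W)\|_{\tti}$ through the triangle inequality by controlling $\|\Uhat - \Uhat^{(i)}\|$ (a low-rank perturbation governed by a single row/column of $E$) and $\|E_{i\cdot}(\Uhat^{(i)} - U_P W)\|$, which is again an independent sum. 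This decoupling, together with the spectral bound from Davis--Kahan, closes the estimate.

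For the LSE bound~\eqref{eq:2toinfty:lse}, the same three-step skeleton applies with $P$ replaced by $\calL(P) = \Xtilde\Xtilde^T$ and $A$ by $L = \calL(A)$. The additional difficulty is the random normalization: $L = D^{-1/2}AD^{-1/2}$ uses the random degrees $D$, whereas $\calL(P) = T^{-1/2}PT^{-1/2}$ uses the population degrees $T$. The hypothesis $\eta \le x^T y \le 1-\eta$ guarantees $D_{ii}, T_{ii} = \Theta(n)$ uniformly and keeps edge probabilities bounded away from $0$ and $1$, so that $\|D^{-1/2} - T^{-1/2}\|$ and the induced perturbation of $L$ about $\calL(P)$ concentrate. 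Because the Laplacian normalization effectively divides the noise by a degree of order $n$, the rows of $\Xtilde$ are $\Theta(n^{-1/2})$ and the row-wise perturbation is a factor of order $n^{1/2}$ smaller than in the ASE case, which is precisely what yields the improved rate $O(\log^{1/2}n/n)$. The bookkeeping for the normalization terms is heavier, but introduces no conceptually new obstacle beyond the leave-one-out argument already needed for the ASE.
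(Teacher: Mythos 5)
Your proposal is correct in outline, but it takes a genuinely different route from the paper. For the ASE bound~\eqref{eq:2toinfty:ase} the paper does exactly the first thing you suggest: it cites Lemma 5 of \citet{LyzSusTanAthPri2014} and proves nothing. For the LSE bound~\eqref{eq:2toinfty:lse} the paper does not redo the eigenvector perturbation analysis at all; instead it imports the row-wise expansion from Appendix B.1 of \citet{TanPri2018} (Equation~\eqref{eq:LSE:B6B7}), which already isolates the $i$-th row of $\Xcheck - \Xtilde\Qtilde$ as an explicit linear statistic $\sum_{j\neq i}(A_{i,j}-P_{i,j})(\cdot)$ plus an $o(n^{-1})$ remainder, and then finishes with a conditional Hoeffding bound, union bounds over $k\in[d]$ and $i\in[n]$, degree concentration (Lemma~\ref{lem:degreegrowth}), and the law of large numbers for $\Xtilde^T\Xtilde$. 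In other words, the crux you correctly identify --- decoupling the noise from the noise-dependent eigenvectors --- is inherited from the cited literature rather than re-proved. Your self-contained leave-one-out derivation would work and is more robust and generalizable, but two remarks are in order. First, for the ASE bound at the stated $\log n$ rate the leave-one-out machinery is overkill: since $\| M \|_{\tti} \le \| M \|$, the crude bound $\| E(\Uhat - \UP W)\Shat^{-1} \| \le \| E \| \, \| \Uhat - \UP W \| \, \| \Shat^{-1} \| = O(\sqrt{n\log n}) \cdot O(\sqrt{\log n / n}) \cdot O(n^{-1}) = O(\log n / n)$ (with $E = A - P$) already suffices after multiplying by $\| \Shat^{1/2} \| = O(\sqrt{n})$, which is essentially how the cited rate is obtained without any decoupling. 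Second, the decoupling (or an equivalent finer expansion) \emph{is} genuinely needed for the sharper LSE rate $O(\log^{1/2} n / n)$, where the crude spectral argument loses a $\log^{1/2} n$ factor --- and that is precisely the part of the argument the paper outsources to \citet{TanPri2018} rather than carrying out itself. So your plan trades the paper's brevity for self-containment; it proves somewhat more than is needed for~\eqref{eq:2toinfty:ase} and re-derives from scratch the one step the paper deliberately borrows for~\eqref{eq:2toinfty:lse}.
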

\begin{proof}
The bound in Equation~\eqref{eq:2toinfty:ase} is
Lemma 5 in \cite{LyzSusTanAthPri2014}.
A proof of Equation~\eqref{eq:2toinfty:lse}
can be found in Appendix~\ref{apx:general}.
\end{proof}

Suppose that graph $G = ([n],E)$
with adjacency matrix $A \in \bbR^{n \times n}$
is a random dot product graph, so that $(A,X) \sim \RDPG(F,n)$,
and we compute
\begin{equation*} 
\Xhat = \ASE(A,d) = [\Xhat_1 \Xhat_2 \cdots \Xhat_n ]^T \in \bbR^{n \times d}
\text{ and }
\Xcheck = \LSE(A,d)
	= [\Xcheck_1 \Xcheck_2 \cdots \Xcheck_n ]^T \in \bbR^{n \times d},
\end{equation*}
where $\Xhat_i, \Xcheck_i \in \bbR^d$ are embeddings of the $i$-th vertex
under ASE and LSE, respectively.
Suppose now that a vertex $v$ having latent position $\wtrue \in \supp F$
is added to the graph $G$ to form $\Gtilde = ([n] \cup \{v\}, E \cup E_v)$,
where $E_v \subseteq \{ \{i,v\} : i=1,2,\dots,n\}$.
The edges between the out-of-sample vertex $v$ and the in-sample vertices
$\{1,2,\dots,n\}$ are specified by
a vector $\avec \in \{0,1\}^n$ such that $a_i = 1$ if $\{i,v\} \in E_v$
and $a_i = 0$ otherwise.
Thus, $\Gtilde$ has adjacency matrix 
$\Atilde$ as in Equation~\eqref{eq:def:Atilde} above.
Having computed an embedding $\Xhat$ or $\Xcheck$, we would like to embed
the vertex $v$ to obtain an estimate
of the true latent position $\wtrue$ (in the case of ASE) or,
in the case of LSE,
its Laplacian spectral embedding
$\wtilde = \wtrue/\sqrt{n\mu^T \wtrue} \in \bbR^d$,
where $\mu = \E X_1$ is the mean of $F$.
In the case of ASE,
the out-of-sample extension problem concerns how to compute an estimate
of $\wtrue$ based only on $\Xhat$ and $\avec$.
Similarly, in the case of LSE,
the out-of-sample extension problem requires computing an estimate of $\wtilde$
based only on the information in $\Xhat$, $\avec$
and, for reasons that will become clear below,
the vector of in-sample vertex degrees, $\dvec \in \bbR^n$.

\subsection{Out-of-sample extension for ASE}

Two natural approaches to the out-of-sample extension of ASE suggest
themselves. The first, following \cite{BengioETAL2003},
involves embedding the out-of-sample vertex $v$ as
\begin{equation} \label{eq:def:ase:llshat}
  \whatls
  = \arg \min_{w \in \bbR^d} \sum_{i=1}^n \left( a_i - \Xhat_i^T w \right)^2,
\end{equation}
where $a_i$ is the $i$-th component of the vector $\avec \in \bbR^n$ of
edges between the out-of-sample vertex and the in-sample vertices.
We refer to $\whatls$ as the {\em linear least squares out-of-sample}
(LLS OOS) extension of adjacency spectral embedding.

An alternative approach to the OOS extension problem, perhaps more
appealing from a statistical perspective, but more computationally expensive,
is to cast the OOS extension as a maximum-likelihood problem.
Letting $X_1,X_2,\dots,X_n \in \bbR^d$ be the true latent positions of the
in-sample vertices and $\wtrue \in \bbR^d$ be the true latent position of
the out-of-sample vertex, the entries of $\avec$
are independent Bernoulli random variables, with
$a_i \sim \Bernoulli( X_i^T \wtrue )$.
Thus, the log likelihood (conditional on the in-sample latent positions) is
\begin{equation*}
\ell(w) = \sum_{i=1}^n a_i \log X_i^T w + (1-a_i) \log(1-X_i^T w).
\end{equation*}
Of course, in practice we observe the latent positions only through
their ASE estimates $\{ \Xhat_i \}_{i=1}^n \subseteq \bbR^d$.
Thus, we define the maximum-likelihood out-of-sample extension for ASE
as the maximizer of the plug-in likelihood, i.e., as the solution to
\begin{equation} \label{eq:def:eigenlikhatunbound}
        \max_{w \in \bbR^d} \sum_{i=1}^n a_i \log \Xhat_i^T w 
        + (1-a_i)\log \left( 1 - \Xhat_i^T w \right).
\end{equation}
Unfortunately, this objective need not achieve its optimum inside the
support of $F$. Indeed, the objective need not even be bounded.
Thus, we will settle for a slight reformulation of this objective,
and define the maximum-likelihood out-of-sample (ML OOS) extension for ASE
to be the solution to a constrained maximum-likelihood problem,
\begin{equation} \label{eq:def:eigenlikhat}
  \whatml = \arg \max_{w \in \calThat_\epsilon}
        \sum_{i=1}^n a_i \log \Xhat_i^T w 
        + (1-a_i)\log \left( 1 - \Xhat_i^T w \right),
\end{equation}
where $\calThat_\epsilon =
\{ w \in \bbR^d : \epsilon \le \Xhat_i^T w \le 1-\epsilon, i \in [n] \}$,
and $\epsilon > 0$ is some small constant.
We note that we call this the maximum-likelihood OOS extension, though
it is, strictly speaking, based on a plug-in approximation to the true
likelihood given in Equation~\eqref{eq:def:eigenlikhatunbound}.

Note that, as required by the out-of-sample problem, both
$\whatls$ and $\whatml$ are functions only of the in-sample embedding
$\Xhat \in \bbR^{n \times d}$ and the edges between the out-of-sample vertex
$v$ and the in-sample vertices $[n]$,
as encoded in the vector $\avec \in \bbR^n$.

\subsection{Out-of-sample extension for LSE}

Recall that given the adjacency matrix $A$ of graph $G = ([n],E)$,
we form the sample graph Laplacian $L = \calL(A) = D^{-1/2} A D^{-1/2}$
and embed in-sample vertex $i \in [n]$ as $\Xcheck_i \in \bbR^d$,
the $i$-th row of
\begin{equation*}
  \Xcheck = \Ucheck \Scheck^{1/2} \in \bbR^{n\times d},
\end{equation*}
where we remind the reader that $\Ucheck \in \bbR^{n \times d}$ denotes the
matrix formed by the top $d$ orthonormal eigenvectors of $L$ with their
corresponding eigenvalues collected in the diagonal matrix
$\Scheck \in \bbR^{d \times d}$.
Conditional on the latent positions $X_1,X_2,\dots, X_n \iid F$,
we have $\E[ A \vert X] = X X^T = P \in \bbR^{n \times n}$, and we view
$L = \calL(A)$ as an estimate of
$\calL(P) = T^{-1/2} P T^{-1/2}$,
where $T \in \bbR^{n \times n}$ is the matrix of
(conditional) expected degrees,
$T_{i,i} = \sum_{j=1}^n P_{i,j} = \sum_{j=1}^n X_i^T X_j$.
Applying the LSE to $\calL(P)$, we may think of the rows of
\begin{equation*}
 \Xtilde = \Utilde \Stilde^{1/2} \in \bbR^{n \times d}
\end{equation*}
as the ``true'' Laplacian spectral embedding, and view $\Xcheck$ as an
estimate of this quantity.

Given out-of-sample vertex $v$ with latent position $\wtrue \in \bbR^d$,
the natural Laplacian embedding of $v$,
in light of the definition of $\Xtilde$,
is given by $\wtilde = \wtrue/\sqrt{n \mu^T \wtrue}$,
where $\mu = \E X_1 \in \bbR^d$ is the mean of $F$.
Of course, in practice we must compute the out-of-sample embedding of $v$
based on $\Xcheck \in \bbR^{n\times d}$
and the vector of edges $\avec \in \bbR^n$
to obtain an estimate of $\wtilde$.
In applying the least-squares approach suggested by
Equation~\eqref{eq:def:ase:llshat} and used in \cite{BengioETAL2003},
it is most natural to consider the minimizer
\begin{equation} \label{eq:def:lse:llshat}
\wcheckls =
        \arg \min_{w \in \bbR^d} \sum_{i=1}^n
        \left( \frac{a_i}{ \sqrt{ d_v d_i } }
                - \Xcheck_i^T w \right)^2,
\end{equation}
where $d_i = \sum_{j=1}^n A_{i,j}$ is the degree of the $i$-th in-sample
vertex, and $d_v = \sum_i a_i$ is the degree of the out-of-sample vertex $v$.
We refer to $\wcheckls$ as the
LLS OOS extension of the Laplacian spectral eembedding.
We note that Equation~\eqref{eq:def:lse:llshat} requires that we keep
in-sample vertex degree information for use in the out-of-sample
extension, which violtates the typical requirement that
we compute the out-of-sample extension using only $\Xcheck$ and $\avec$.
Nonetheless,
it is reasonable to allow the use of the vector $\dvec$,
since typically the embedding dimension $d$ is of a smaller order than $n$
and thus the space required to store node
degrees is of the same or smaller order as that required to
store $\Xcheck \in \bbR^{n \times d}$.
We note that one could avoid this additional storage by replacing $d_i$ with
$\sum_{j=1}^n \Xcheck_j^T \Xcheck_i$
and all our results below would go through (see Lemma~\ref{lem:degreegrowth}),
but this would come at the
expense of notational inconvenience and longer proofs below.
The motivation for the least-squares objective in
Equation~\eqref{eq:def:lse:llshat} becomes clear if we think of
$d_v^{-1/2} d_i^{-1/2} a_i$ as an estimate of the normalized kernel
\begin{equation*}
\Kbar( i, v ) = \frac{ X_i^T \wtrue }
        { n \sqrt{ X_i^T\mu \wtrue^T\mu } },
\end{equation*}
where $\mu \in \bbR^d$ is again the mean of $F$.

\section{Theoretical Results}
\label{sec:theory}

The main results of this paper concern concentration inequalities and
central limit theorems for the OOS extensions introduced in
Section~\ref{sec:oos}. We first present the concentration inequalities,
which allow us to control the rate of convergence of the OOS extension
to the parameter of interest,
given by the true OOS latent position $\wtrue$ in the case of ASE,
and by the transformed latent position $\wtilde = \wtrue/\sqrt{n\mu^T\wtrue}$
in the case of LSE.

\subsection{Rates of convergence for OOS extensions}

A first question surrounding the OOS extensions presented in the preceding
section concerns their quality as estimators of their respective true
parameters.
Interestingly, all of the OOS extensions presented above recover their
respective target parameters at asymptotic rates that match that of
the full-graph embedding.

We begin by considering the ASE OOS extensions defined in
Equations~\eqref{eq:def:ase:llshat} and~\eqref{eq:def:eigenlikhat}.
Both of these estimates recover the true out-of-sample latent position $\wtrue$
at the same asymptotic rate
(see Theorems~\ref{thm:ase:lsrate} and~\ref{thm:ase:mlrate} below),
and this rate matches the one we would
obtain if we were to compute the ASE of the augmented graph
$\Gtilde$ with adjacency matrix $\Atilde$,
given in Lemma~\ref{lem:2toinfty}.
We find that the estimation error between the least squares OOS extension for
ASE $\whatls$ and the true latent position $\wtrue$ follows the same rate.
\begin{theorem} \label{thm:ase:lsrate}
	Let $F$ be a $d$-dimensional inner-product distribution and
	suppose $(A,X) \sim \RDPG(F,n)$. Let $v$ denote the out-of-sample
	vertex, and denote its latent position by $\wtrue \in \supp F$.
	Let $\whatls$ denote the LS-based OOS extension for ASE
	based on $\Xhat = \ASE(A,d)$ and the vector of edges $\avec \in \bbR^n$
	between $v$ and the in-sample vertices,
	as defined in Equation~\eqref{eq:def:ase:llshat}.
	There exists a sequence of orthogonal matrices $Q \in \bbR^{d\times d}$
	such that
        \begin{equation*}
	\| Q \whatls - \wtrue \| = O( n^{-1/2} \log n),
	\end{equation*}
	and this matrix $Q$ is the same one guaranteed
	by Lemma~\ref{lem:2toinfty}.
\end{theorem}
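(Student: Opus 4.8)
The plan is to treat $\whatls$ as an ordinary least-squares solution and to compare it against an ``oracle'' estimator built from the true (rotated) latent positions, splitting the total error into an oracle-concentration term and a perturbation term. Since the objective in Equation~\eqref{eq:def:ase:llshat} is quadratic in $w$, its minimizer is $\whatls = (\Xhat^T\Xhat)^{-1}\Xhat^T\avec$, which is well defined once $\Xhat$ has full column rank. Let $Q$ be the orthogonal matrix supplied by Lemma~\ref{lem:2toinfty}, write $Y = XQ$ and $E = \Xhat - Y$, so that $\|E\|_{\tti} = O(\log n/\sqrt n)$ and hence $\|E\|_F \le \sqrt n\,\|E\|_{\tti} = O(\log n)$. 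Because $a_i \sim \Bernoulli(X_i^T\wtrue)$ independently conditional on $X$, we may write $\avec = X\wtrue + \xi$, where $\xi$ has independent, mean-zero, bounded entries with $\VAR[\xi_i \mid X] \le 1/4$. Setting $\beta = (X^TX)^{-1}X^T\avec$, it suffices to prove $\|\beta - \wtrue\| = O(n^{-1/2}\log^{1/2} n)$ and $\|Q\whatls - \beta\| = O(n^{-1/2}\log n)$, and then to apply the triangle inequality.

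For the first (oracle) bound I would write $\beta - \wtrue = (X^TX)^{-1}X^T\xi$. The strong law gives $n^{-1}X^TX \to \Delta$ almost surely, and since $\Delta$ is of full rank this yields $\lambdamin(X^TX) = \Theta(n)$ and $\|(X^TX)^{-1}\| = O(n^{-1})$ eventually. Conditioning on $X$, the quantity $X^T\xi = \sum_i X_i\xi_i$ is a sum of independent, mean-zero, bounded vectors (bounded because $\supp F$ is bounded for an inner product distribution), with conditional second moment $\sum_i \|X_i\|^2\VAR[\xi_i\mid X] = O(n)$; a vector Bernstein inequality then gives $\|X^T\xi\| = O(\sqrt{n\log n})$ with probability at least $1 - Cn^{-(1+\eps)}$. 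Multiplying the two bounds gives $\|\beta - \wtrue\| = O(n^{-1/2}\log^{1/2} n)$, and in particular $\|\beta\| = O(1)$ eventually.

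For the perturbation bound, note that $Q\whatls = \beta$ would hold exactly if $\Xhat$ equalled $Y$, so the gap is driven entirely by $E$. Writing $G = Y^TY$, $\Xhat^T\Xhat = G + H$ with $H = Y^TE + E^TY + E^TE$, and $\Xhat^T\avec = Y^T\avec + E^T\avec$, subtracting the normal equations for $\whatls$ and for $(Y^TY)^{-1}Y^T\avec = Q^T\beta$ gives the identity
\begin{equation*}
(G + H)\bigl(\whatls - Q^T\beta\bigr) = E^T\avec - H\,Q^T\beta .
\end{equation*}
Using $\|Y\| = \|X\| = O(\sqrt n)$ and $\|E\| \le \|E\|_F = O(\log n)$ gives $\|H\| = O(\sqrt n\log n)$, while $\|E^T\avec\| \le \|E\|_F\|\avec\| = O(\sqrt n\log n)$ since $\|\avec\|^2 = \sum_i a_i \le n$. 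Because $\|H\| = o(n) = o(\lambdamin(G))$, the matrix $G + H$ is invertible with $\|(G+H)^{-1}\| = O(n^{-1})$ eventually; combined with $\|Q^T\beta\| = \|\beta\| = O(1)$ and the fact that $Q$ is an isometry, this yields $\|Q\whatls - \beta\| = \|\whatls - Q^T\beta\| = O(n^{-1})\cdot O(\sqrt n\log n) = O(n^{-1/2}\log n)$.

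The two pieces combine to give $\|Q\whatls - \wtrue\| = O(n^{-1/2}\log n)$, with the perturbation term (carrying $\|E\|_F = O(\log n)$) dominating and fixing the final logarithmic factor. Each probabilistic estimate above has the form $\Pr[\,\cdot\,] \le Cn^{-(1+\eps)}$, which is summable, so Borel--Cantelli upgrades the bounds to hold eventually almost surely, matching the paper's $O(\cdot)$ convention for random sequences. The main obstacle I anticipate is the perturbation step: one must verify that the Gram-matrix perturbation $H$ is of order $o(n)$ so that $\Xhat^T\Xhat$ remains well conditioned with $\|(\Xhat^T\Xhat)^{-1}\| = O(n^{-1})$; the remaining manipulations are routine bookkeeping, and the only genuinely stochastic input beyond Lemma~\ref{lem:2toinfty} is the vector Bernstein bound on $X^T\xi$.
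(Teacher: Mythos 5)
Your proposal is correct, and its overall architecture---comparing $\whatls$ to the oracle least-squares solution $\beta = (X^TX)^{-1}X^T\avec$ (the paper's $\wls$) and then comparing $\beta$ to $\wtrue$, finishing with the triangle inequality---is exactly the paper's decomposition. Where you genuinely depart from the paper is in the perturbation step. The paper invokes Theorem 5.3.1 of Golub and Van Loan on perturbed least-squares problems, which forces it to verify several side conditions: non-vanishing of $\avec$, $\rls$, $\wls$; the bound $\|\Xhat - XQ\| < \lambda_d(X)$; control of $\kappa_2(XQ)$ and $\nuls$; and---the most work---a separate probabilistic argument (Lemma~\ref{lem:gvlresidual}) showing the residual angle satisfies $\cos\thetals \ge \gamma > 0$, i.e.\ that $\|\rls\|/\|\avec\|$ stays bounded away from $1$. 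Your route replaces all of this with the exact normal-equations identity $(G+H)(\whatls - Q^T\beta) = E^T\avec - HQ^T\beta$ and three elementary estimates: $\lambdamin(G) = \Theta(n)$, $\|H\| = O(\sqrt{n}\log n)$, and $\|E^T\avec\| = O(\sqrt{n}\log n)$. The algebra checks out: $GQ^T\beta = Y^T\avec$ because $Q^T\beta = (Y^TY)^{-1}Y^T\avec$, and since $\|H\| = o(n)$, Weyl's inequality gives $\|(G+H)^{-1}\| = O(n^{-1})$, whence $\|\whatls - Q^T\beta\| = O(n^{-1/2}\log n)$. Your oracle step is essentially the paper's Lemma~\ref{lem:ls:true} (your vector-Bernstein bound gives a slightly sharper $\log^{1/2} n$ where the paper's coordinate-wise Hoeffding plus union bound gives $\log n$; either suffices, since the perturbation term dominates the final rate). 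What the paper's approach buys is reliance on a citable off-the-shelf theorem from numerical linear algebra; what yours buys is a shorter, self-contained argument with fewer side conditions---in particular, you never need the residual to be a nontrivial fraction of $\|\avec\|$ (no analogue of Lemma~\ref{lem:gvlresidual}), only spectral lower bounds on the Gram matrices, which come directly from Lemma~\ref{lem:Pspecgrowth}.
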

\begin{proof}
	A standard result for solutions of perturbed linear systems allows us
	to show that with high probability,
        $\| Q \whatls - \wls \| \le Cn^{-1/2} \log n$,
	where $Q \in \bbR^{d \times d}$
        is the orthogonal matrix guaranteed by Lemma~\ref{lem:2toinfty} above
	and $\wls$ is the least-squares minimizer obtained if one uses the
	true latent positions $\{ X_i \}$ rather than the ASE estimates
	$\{ \Xhat_i \}$ in Equation~\eqref{eq:def:ase:llshat}.
	Hoeffding's inequality implies that
	$\| \wls - \wtrue \| = O(n^{-1/2} \log n)$.
	The result then follows by a triangle inequality applied to
	$\| Q\whatls - \wtrue\|$.
	A detailed proof can be found in Appendix~\ref{apx:ase:lsconc}.
\end{proof}

In a similar vein, the ML-based OOS extension also recovers the true
out-of-sample latent position at a rate that matches that
of the in-sample embedding,
given by Equation~\eqref{eq:2toinfty:ase} in Lemma~\ref{lem:2toinfty}.
\begin{theorem} \label{thm:ase:mlrate}
	Let $F$ be a $d$-dimensional inner-product distribution
	for which there exists a constant $\eta > 0$
	such that $\eta < x^T y < 1-\eta$ for all $x,y \in \supp F$.
	Suppose that $(A,X) \sim \RDPG(F,n)$ and let $v$ be an
	out-of-sample vertex with latent position $\wtrue \in \supp F$.
        Let $\whatml$ be the out-of-sample embedding defined in
        Equation~\eqref{eq:def:eigenlikhat}, with $\epsilon >0$
	chosen so that $\epsilon < \eta$.
	Then there exists a sequence of orthogonal matrices
	$Q \in \bbR^{d \times d}$ such that
        \begin{equation*}
	\| Q \whatml - \wtrue \| = O( n^{-1/2} \log n ),
	\end{equation*}
        and this matrix $Q$ is the same one guaranteed by
	Lemma~\ref{lem:2toinfty}.
\end{theorem}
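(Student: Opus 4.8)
The plan is to mirror the oracle-comparison scheme used for Theorem~\ref{thm:ase:lsrate}. I introduce the \emph{oracle} constrained maximum-likelihood estimator
$\wml = \arg\max_{w \in \calT_\epsilon} \sum_{i=1}^n a_i \log X_i^T w + (1-a_i)\log(1 - X_i^T w)$
computed from the true latent positions rather than their estimates, where $\calT_\epsilon = \{w : \epsilon \le X_i^T w \le 1-\epsilon,\ i \in [n]\}$. Writing $Q$ for the orthogonal matrix of Lemma~\ref{lem:2toinfty}, I would bound $\|Q\whatml - \wtrue\|$ by the triangle inequality through $\wml$, controlling the oracle error $\|\wml - \wtrue\|$ and the plug-in perturbation $\|Q\whatml - \wml\|$ separately. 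Both bounds rest on a single convex-analytic fact: if $f$ is $\beta$-strongly concave on a convex set with constrained maximizer $x^{*}$, then for every feasible $\theta$ one has $\|x^{*} - \theta\| \le \beta^{-1}\|\nabla f(\theta)\|$, obtained by summing the strong-concavity inequalities at $\theta$ and $x^{*}$ and invoking first-order optimality $\langle \nabla f(x^{*}), \theta - x^{*} \rangle \le 0$. The strong-concavity constant is $\Omega(n)$ in every case, because the negative Hessian of each log-likelihood equals $\sum_i h_i(w) X_i X_i^T$ (resp.\ with $\Xhat_i$) with weights $h_i(w) = a_i/(X_i^T w)^2 + (1-a_i)/(1 - X_i^T w)^2 \ge 1$ on the feasible region, so it dominates $X^TX$ (resp.\ $\Xhat^T\Xhat$), whose least eigenvalue is $\Omega(n)$ eventually since $n^{-1}X^TX \to \Delta \succ 0$ almost surely and $\Xhat^T\Xhat$ is close to $Q^T X^T X Q$ by Lemma~\ref{lem:2toinfty}.

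For the oracle error I apply the fact with $x^{*} = \wml$ and $\theta = \wtrue$, which lies in $\calT_\epsilon$ since $\epsilon < \eta$. The score $\nabla\ell(\wtrue) = \sum_i \frac{a_i - X_i^T\wtrue}{(X_i^T\wtrue)(1-X_i^T\wtrue)} X_i$ is a sum of independent, mean-zero, bounded vectors, so a coordinatewise Hoeffding bound with a union bound over the $d$ coordinates gives $\|\nabla\ell(\wtrue)\| = O(\sqrt{n\log n})$, whence $\|\wml - \wtrue\| = O(n^{-1/2}\log^{1/2}n)$. In particular $\wml \to \wtrue$, so $X_i^T\wml \in (\epsilon, 1-\epsilon)$ uniformly in $i$ eventually; thus $\wml$ is interior to $\calT_\epsilon$ and $\nabla\ell(\wml) = 0$.

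For the plug-in perturbation I apply the fact with $f = \ellhat$, $x^{*} = \whatml$, and $\theta = Q^T\wml$. Combining Lemma~\ref{lem:2toinfty} ($\max_i \|\Xhat_i - Q^T X_i\| = O(\log n/\sqrt n)$) with $\wml \to \wtrue$ and $\epsilon < \eta$ shows $\theta \in \calThat_\epsilon$ eventually. The crux is the gradient bound. Since $\nabla\ell(\wml) = 0$ yields $\sum_i [\,a_i/(X_i^T\wml) - (1-a_i)/(1-X_i^T\wml)\,]\, Q^T X_i = Q^T\nabla\ell(\wml) = 0$, the gradient $\nabla\ellhat(\theta)$ is a sum of $n$ differences of the form $g_i(\Xhat_i^T\theta)\,\Xhat_i - g_i((Q^TX_i)^T\theta)\,(Q^T X_i)$, where $g_i(p) = a_i/p - (1-a_i)/(1-p)$ is bounded and Lipschitz on $[\epsilon,1-\epsilon]$. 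Splitting each difference into one part controlled by $\|\Xhat_i - Q^T X_i\|$ and one controlled by the Lipschitz modulus of $g_i$ times $|(\Xhat_i - Q^T X_i)^T\theta|$, each summand is $O(\log n/\sqrt n)$, so $\|\nabla\ellhat(\theta)\| = O(\sqrt n \log n)$. Dividing by $\beta = \Omega(n)$ gives $\|Q\whatml - \wml\| = O(n^{-1/2}\log n)$, and the triangle inequality completes the bound.

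I expect the plug-in gradient bound to be the main obstacle, both because it is where the $2$-to-$\infty$ control of Lemma~\ref{lem:2toinfty} must be threaded through the nonlinear score, and because it forces the bookkeeping needed to guarantee that $\wml$ is interior to $\calT_\epsilon$ and $Q^T\wml$ interior to $\calThat_\epsilon$ — a step relying delicately on the hypothesis $\epsilon < \eta$ and on the oracle consistency proved first. The remaining ingredients (the strong-concavity inequality, the Fisher-information lower bound, and the score concentration) are standard, and as elsewhere in the paper the high-probability statements are upgraded to almost-sure eventual bounds via summability and Borel--Cantelli.
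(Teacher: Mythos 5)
Your proof is correct, and it reaches the theorem by a decomposition that is genuinely different from the paper's, although the two arguments share all of their basic ingredients: the $\Omega(n)$ lower bound on the negative Hessian over the constraint set, Hoeffding concentration of the true score, and the two-to-infinity control of Lemma~\ref{lem:2toinfty}. The paper (Appendix~\ref{apx:ase:mlconc}) never introduces the oracle estimator $\wml$: it applies the strong-concavity/gradient bound exactly once, with $f=\ellhat$ and $\theta = Q^T\wtrue$ (Lemma~\ref{lem:cvxopt}), and then bounds $\|\nabla\ellhat(Q^T\wtrue)\|$ by a \emph{gradient-level} triangle inequality through $\nabla\ell(\wtrue)$ (Lemma~\ref{lem:nabla}): the score term is $O(\sqrt{n}\log^{1/2}n)$ by Hoeffding, while the perturbation term $\|\nabla\ellhat(Q^T\wtrue)-\nabla\ell(\wtrue)\|$ is split into a conditional expectation given $(A,X)$, controlled by Lemma~\ref{lem:2toinfty}, plus a fluctuation controlled by a conditional Hoeffding bound. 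You instead place the triangle inequality at the level of the \emph{estimators}, mirroring the proof of Theorem~\ref{thm:ase:lsrate}: two applications of the convex bound, one giving $\|\wml-\wtrue\| = O(n^{-1/2}\log^{1/2}n)$ via score concentration, and one at $\theta = Q^T\wml$, where the exact vanishing of the oracle score makes your bound on $\|\nabla\ellhat(Q^T\wml)\|$ purely deterministic given Lemma~\ref{lem:2toinfty}, using only boundedness and Lipschitz continuity of $g_i(p)=a_i/p-(1-a_i)/(1-p)$ on $[\epsilon,1-\epsilon]$ summed over $i$. What your route buys: no second concentration argument is needed, and your term-by-term bound in fact shows that the paper's conditional-Hoeffding refinement is not required to attain the stated rate; your statement of the convex-analytic fact (summing the two strong-concavity inequalities and invoking first-order optimality) is also the clean version of the Taylor-expansion display in Lemma~\ref{lem:cvxopt}, whose signs are written loosely. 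What it costs: the oracle bookkeeping---consistency of $\wml$, its interiority in $\calT_\epsilon$ so that $\nabla\ell(\wml)=0$, and the eventual feasibility $Q^T\wml\in\calThat_\epsilon$---which the paper avoids entirely; note, though, that the paper implicitly needs the analogous feasibility $Q^T\wtrue\in\calThat_\epsilon$ eventually for its first-order-optimality step, a check your write-up makes explicit. Both routes deliver the same $O(n^{-1/2}\log n)$ bound.
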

\begin{proof}
	Using the definition of $\calThat_\epsilon$ and a standard argument
	from convex optimization, one can show that with probability $1$,
	it holds for all suitably large $n$ that
	\begin{equation*}
	\| Q \whatml - \wtrue \| \le
	\frac{ C\| \nabla \ellhat( Q^T \wtrue ) \| }{ n }.
	\end{equation*}
	An application of the triangle inequality and standard concentration
	inequalities yields
	\begin{equation*}
	\| \nabla \ellhat( Q^T \wtrue ) \| = O( \sqrt{n} \log n ).
	\end{equation*}
	A detailed proof can be found in Appendix~\ref{apx:ase:mlconc}.
\end{proof}

In keeping with the above two results,
the least-squares LSE OOS extension given in
Equation~\eqref{eq:def:lse:llshat} recovers the true out-of-sample Laplacian
embedding $\wtilde$ at a rate that matches that of the
Laplacian spectral embedding $\wtilde$ of the augmented graph $\Gtilde$,
given by Equation~\eqref{eq:2toinfty:lse} in Lemma~\ref{lem:2toinfty}.
\begin{theorem} \label{thm:lse:lsrate}
Let $F$ be a $d$-dimensional inner-product distribution
with mean $\mu = \E X_1$,
and suppose that there exists a constant $\eta > 0$
such that $\eta < x^T y < 1-\eta$ for all $x,y \in \supp F$.
Let $(A,X) \sim \RDPG(F,n)$, let $v$ be an
out-of-sample vertex with latent position $\wtrue \in \supp F$,
and let $\wtilde = \wtrue/\sqrt{n \mu^T \wtrue}$ be the Laplacian spectral
embedding of this latent position.
Then there exists a sequence of orthogonal matrices
$\Qtilde \in \bbR^{d \times d}$ such that
\begin{equation*}
\| \Qtilde \wcheckls - \wtilde \| \le Cn^{-1} \log^{1/2} n ,
\end{equation*}
and this matrix $\Qtilde$ is the same one guaranteed by
Lemma~\ref{lem:2toinfty}.
\end{theorem}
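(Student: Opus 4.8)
The plan is to follow the structure of the proof of Theorem~\ref{thm:ase:lsrate}: write $\wcheckls$ through the normal equations of Equation~\eqref{eq:def:lse:llshat} and control its deviation from the rotated target $w^\star := \Qtilde^T\wtilde$, where $\Qtilde$ is the orthogonal matrix supplied by Lemma~\ref{lem:2toinfty}. Collecting the embeddings in $\Xcheck \in \bbR^{n \times d}$ (with rows $\Xcheck_i^T$) and writing $\bvec \in \bbR^n$ for the response with entries $b_i = a_i/\sqrt{d_v d_i}$, the minimizer is $\wcheckls = (\Xcheck^T\Xcheck)^{-1}\Xcheck^T\bvec$, so
\begin{equation*}
\wcheckls - w^\star = (\Xcheck^T\Xcheck)^{-1}\Xcheck^T(\bvec - \Xcheck w^\star).
\end{equation*}
Because $\Qtilde$ is orthogonal, $\|\Qtilde\wcheckls - \wtilde\| = \|\wcheckls - w^\star\|$, so it suffices to bound the right-hand side. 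First I would establish $\|(\Xcheck^T\Xcheck)^{-1}\| = O(1)$: adopting the convention $\Xtilde = T^{-1/2}X$ under which $\wtilde$ is defined, the rows satisfy $\Xtilde_i = X_i/\sqrt{n X_i^T\bar{X}}$ with $\bar{X} = n^{-1}\sum_j X_j \to \mu$, whence $\Xtilde^T\Xtilde \to \E[X_1 X_1^T/(X_1^T\mu)]$, which is positive definite since $\Delta$ is full rank and $X_1^T\mu > \eta$; the $2\to\infty$ bound of Lemma~\ref{lem:2toinfty} then transfers well-conditioning (and $\|\Xcheck\| = O(1)$) to $\Xcheck$. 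It remains to show $\|\Xcheck^T(\bvec - \Xcheck w^\star)\| = O(n^{-1}\log^{1/2}n)$.

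Next I would split $\bvec - \Xcheck w^\star = (\bvec - \kappa) + (\kappa - \Xtilde\wtilde) + (\Xtilde\wtilde - \Xcheck w^\star)$, where $\kappa \in \bbR^n$ has entries $\Kbar(i,v)$. Setting $R := \Xcheck - \Xtilde\Qtilde$, the third piece is pure embedding error, $\Xtilde\wtilde - \Xcheck w^\star = -R\Qtilde^T\wtilde$, of norm at most $\|R\|\,\|\wtilde\| = O(n^{-1/2}\log^{1/2}n)\cdot O(n^{-1/2})$, using $\|R\| \le \sqrt{n}\,\|R\|_{\tti}$ from Lemma~\ref{lem:2toinfty} and $\|\wtilde\| = O(n^{-1/2})$. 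The second piece is deterministic given $X$: since $\Xtilde_i^T\wtilde = X_i^T\wtrue/(n\sqrt{X_i^T\bar{X}}\sqrt{\mu^T\wtrue})$,
\begin{equation*}
\Kbar(i,v) - \Xtilde_i^T\wtilde
 = \frac{X_i^T\wtrue}{n\sqrt{\wtrue^T\mu}}
   \left[ (X_i^T\mu)^{-1/2} - (X_i^T\bar{X})^{-1/2} \right],
\end{equation*}
and as $t \mapsto t^{-1/2}$ is Lipschitz on $[\eta,\infty)$ while $\|\bar{X} - \mu\| = O(n^{-1/2}\log^{1/2}n)$, each entry is $O(n^{-3/2}\log^{1/2}n)$, giving $\|\kappa - \Xtilde\wtilde\| = O(n^{-1}\log^{1/2}n)$. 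Premultiplying these two pieces by $\Xcheck^T$ (with $\|\Xcheck\| = O(1)$) keeps them at order $n^{-1}\log^{1/2}n$.

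The essential term is the stochastic residual $\bvec - \kappa$, which I would treat entrywise using the deterministic scale $s_i := n\sqrt{X_i^T\mu \cdot \wtrue^T\mu}$, so that $\Kbar(i,v) = X_i^T\wtrue/s_i$ and
\begin{equation*}
b_i - \Kbar(i,v) = \frac{a_i - X_i^T\wtrue}{s_i}
  + \frac{a_i\,(\rho_i - 1)}{s_i},
  \qquad \rho_i := \frac{s_i}{\sqrt{d_v d_i}}.
\end{equation*}
In the first summand the denominator is nonrandom given $X$ and $\wtrue$, so $\sum_i \Xtilde_i (a_i - X_i^T\wtrue)/s_i$ is a sum of conditionally independent, mean-zero vectors of norm $O(n^{-3/2})$ and total variance $O(n^{-2})$; a vector Bernstein inequality yields $O(n^{-1}\log^{1/2}n)$, and this is the step that produces $\log^{1/2}n$ rather than $\log n$. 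For the second summand I would invoke the uniform degree concentration $\max_i |d_i - \E d_i| = O(\sqrt{n\log n})$ and $|d_v - \E d_v| = O(\sqrt{n\log n})$ (Hoeffding and a union bound, with $\E d_i,\E d_v = \Theta(n)$ by the $\eta$-condition), which gives $\max_i |\rho_i - 1| = O(n^{-1/2}\log^{1/2}n)$; then the crude bound $\sum_i \|\Xtilde_i\|\,a_i|\rho_i - 1|/s_i = O(n^{-2}\log^{1/2}n)\,d_v = O(n^{-1}\log^{1/2}n)$ suffices since $d_v = O(n)$. Finally, replacing $\Xtilde$ by $\Xcheck$ in both sums costs at most $\|R\|_{\tti}\sum_i |b_i - \Kbar(i,v)| = O(n^{-1}\log^{1/2}n)$, since $|b_i - \Kbar(i,v)| = O(n^{-1})$ uniformly.

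I expect the main obstacle to be exactly this stochastic term: the numerator $a_i$ and the denominator $d_v = \sum_j a_j$ are both functions of the out-of-sample edges, so the $b_i$ are not independent across $i$. The decomposition above sidesteps the coupling by isolating a genuinely independent piece (carrying the deterministic scale $s_i$), to which concentration applies cleanly, and relegating the degree fluctuations to a term controlled by deterministic triangle-inequality bounds that need only uniform control of the degrees. Combining the three residual bounds and premultiplying by $(\Xcheck^T\Xcheck)^{-1}$ gives $\|\wcheckls - w^\star\| = O(n^{-1}\log^{1/2}n)$; since each deviation probability is summable in $n$, the Borel--Cantelli argument of Section~\ref{subsec:notation} upgrades this to the almost-sure eventual bound claimed.
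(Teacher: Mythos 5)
Your proposal is correct, and it reaches the stated bound by a genuinely different route than the paper in the key comparison step. The paper introduces an intermediate estimator $\wtildels$ (the least-squares solution with the true design $\Xtilde$ but the same response $\bvec$), proves $\|\wtildels - \wtilde\| = O(n^{-1}\log^{1/2}n)$ via the least-squares optimality inequality and entrywise Hoeffding bounds (Lemma~\ref{lem:lse:wtildels2wtilde}), and then compares $\wcheckls$ to $\wtildels$ through the Golub--Van Loan perturbed least-squares theorem (Theorem~\ref{thm:gvl}); that second step forces the paper to verify the hypotheses of Theorem~\ref{thm:gvl}, in particular to bound the residual angle $\sin\thetals$ away from $1$, which is the longest and most delicate part of Lemma~\ref{lem:lse:wtildels2wcheck}. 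You bypass Theorem~\ref{thm:gvl} entirely: since $\Xcheck^T\Xcheck = \Scheck$ has $\Theta(1)$ spectrum (your well-conditioning step, which the paper supplies in Lemmas~\ref{lem:lapspecgrowth} and~\ref{lem:Scheck}), the normal equations give the exact identity $\wcheckls - \Qtilde^T\wtilde = (\Xcheck^T\Xcheck)^{-1}\Xcheck^T\bigl(\bvec - \Xcheck\Qtilde^T\wtilde\bigr)$, and your three-way split of the residual --- stochastic part $\bvec - \kappa$, deterministic kernel-versus-embedding part $\kappa - \Xtilde\wtilde$, and pure embedding error controlled by the $\|\cdot\|_{\tti}$ bound of Lemma~\ref{lem:2toinfty} --- finishes with triangle inequalities. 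Your handling of the stochastic part, isolating the conditionally independent sum carrying the deterministic scale $s_i$ and relegating the coupled $d_v, d_i$ fluctuations to a term controlled uniformly by Lemma~\ref{lem:degreegrowth}, is essentially the same computation as in the paper's Lemma~\ref{lem:lse:wtildels2wtilde} (its Equation~\eqref{eq:wtildels:Ybreak} performs the same split, with $\sqrt{t_i}\sqrt{n\mu^T\wtrue}$ playing the role of your $s_i$), so the probabilistic content coincides. What your route buys is a shorter proof with no residual-angle analysis and an explicit accounting of where the $\log^{1/2} n$ factor arises (the Bernstein/Hoeffding step on the independent piece); what the paper's route buys is uniformity of machinery with its ASE concentration proof (Theorem~\ref{thm:ase:lsrate}), which invokes Theorem~\ref{thm:gvl} in the same way, at the cost of the extra intermediate quantity and the $\cos\thetals$ bound.
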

\begin{proof}
Letting $\wtildels$ denote the LLS OOS solution if we had access to the
true latent positions,
the triangle inequality and unitary invariance of Euclidean norm bound
\begin{equation*}
\| \Qtilde \wcheckls - \wtilde \|
\le \| \Qtilde \wcheckls - \wtildels \| + \| \wtildels - \wtilde \|.
\end{equation*}
Both of these terms can be bounded using standard concentration inequalities
and properties of linear least-squares solutions.
A detailed proof is given in Appendix~\ref{apx:lseconc}.
\end{proof}

\subsection{Central limit theorems for the OOS extensions}

We now turn our attention to the question of the asymptotic distribution
of the OOS extensions introduced in Section~\ref{sec:oos}.
Once again, we state the results for the case of Bernoulli edges, but
similar results can be shown for a broader class of edge noise models,
provided that noise model and the latent position distribution $F$ obey
suitable moment conditions.

\begin{theorem} \label{thm:ase:lsclt}
Let $F$ be a $d$-dimensional inner-product distribution and suppose
that $(A,X) \sim \RDPG(F,n)$ and let $v$ be the out-of-sample vertex
with latent position $\wtrue \in \supp F$.
Let $\whatls$ be the least-squares OOS extension as defined in
Equation~\eqref{eq:def:ase:llshat}. Then there exists a sequence of orthogonal
$d$-by-$d$ matrices $Q$ such that
\begin{equation*}
\sqrt{n}( Q \whatls - \wtrue ) \inlaw \calN(0, \Sigma_{F,\wtrue}),
\end{equation*}
where for any $w \in \supp F$, we define
\begin{equation} \label{eq:def:Sigma}
\Sigma_{F,w}
        = \Delta^{-1} \E\left[X_1^T w(1-X_1^T w)X_1 X_1^T \right]
        \Delta^{-1}, \end{equation}
and $\Delta = \E X_1 X_1^T$ is the second moment matrix of $F$.
\end{theorem}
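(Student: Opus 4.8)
The plan is to compare $\whatls$ with an oracle estimator and to invoke Slutsky's theorem. Writing the least-squares solution explicitly as $\whatls = (\Xhat^T\Xhat)^{-1}\Xhat^T\avec$, I let $\wls = (X^TX)^{-1}X^T\avec$ denote the minimizer one would obtain from the true latent positions. I would establish the theorem in two pieces: first, a clean CLT for the oracle $\wls$; and second, the negligibility $\sqrt{n}(Q\whatls - \wls) \inprob 0$, where $Q$ is the orthogonal matrix supplied by Lemma~\ref{lem:2toinfty}. Because the limiting covariance $\Sigma_{F,\wtrue}$ carries no dependence on $Q$, the $n$-dependence of $Q$ is harmless once the second piece is in hand, and the two combine through Slutsky.

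For the oracle CLT, note that conditional on $X_i$ the pairs $(X_i, a_i)$ are i.i.d.\ with $a_i \sim \Bernoulli(X_i^T\wtrue)$, and
\begin{equation*}
\sqrt{n}(\wls - \wtrue) = \Big( \tfrac{1}{n} X^TX \Big)^{-1} \tfrac{1}{\sqrt{n}} \sum_{i=1}^n X_i(a_i - X_i^T\wtrue).
\end{equation*}
The summands $\xi_i = X_i(a_i - X_i^T\wtrue)$ are i.i.d.\ and mean zero, since $\E[a_i - X_i^T\wtrue \mid X_i] = 0$, with covariance $\E[\xi_1\xi_1^T] = \E[X_1^T\wtrue(1 - X_1^T\wtrue) X_1 X_1^T]$, which is finite as $F$ has bounded support. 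The multivariate CLT gives $\tfrac{1}{\sqrt{n}}\sum_i \xi_i \inlaw \calN(0, \E[X_1^T\wtrue(1 - X_1^T\wtrue) X_1 X_1^T])$, while $\tfrac{1}{n} X^TX \convergesas \Delta$ by the strong law; Slutsky then delivers $\sqrt{n}(\wls - \wtrue) \inlaw \calN(0, \Sigma_{F,\wtrue})$.

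For the negligibility, I would substitute $\avec = X\wtrue + (\avec - X\wtrue)$ and $X = (\Xhat - R)Q^T$ with $R := \Xhat - XQ$, after which a short computation yields
\begin{equation*}
Q\whatls - \wls = -Q(\Xhat^T\Xhat)^{-1}\Xhat^T R\, Q^T\wtrue + \big[ Q(\Xhat^T\Xhat)^{-1}\Xhat^T - (X^TX)^{-1}X^T \big](\avec - X\wtrue).
\end{equation*}
The second (noise) term is handled by conditioning on $(X,A)$: since $\avec$ is independent of $A$ given the latent positions, both hat matrices are fixed under this conditioning, so the term has conditional mean zero and conditional variance at most $\tfrac{1}{4}\| Q(\Xhat^T\Xhat)^{-1}\Xhat^T - (X^TX)^{-1}X^T \|_F^2$. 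Expanding $(\Xhat^T\Xhat)^{-1}$ around $(X^TX)^{-1}$ and using $\|R\|_F = \Otilde(1)$, which follows from the $2\to\infty$ bound of Lemma~\ref{lem:2toinfty} via $\|R\|_F \le \sqrt{n}\,\|R\|_{2,\infty}$, shows this Frobenius norm is $\Otilde(1/n)$, so the noise term is $\Otilde(1/n) = o(n^{-1/2})$.

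The hard part is the first (bias) term. The crude bound $\|\Xhat^T R\| \le \|\Xhat\|\,\|R\|_F = \Otilde(\sqrt{n})$ only yields $\Otilde(n^{-1/2})$ for this term, which fails to be $o(n^{-1/2})$ after multiplication by $\sqrt{n}$; the $2\to\infty$ concentration bound of Lemma~\ref{lem:2toinfty} is by itself insufficient. To sharpen it I would invoke the first-order spectral expansion of the adjacency spectral embedding, $\Xhat - XQ = (A-P)X(X^TX)^{-1}W + R'$ for an orthogonal $W$ and a remainder $R'$ of smaller order, standard in the random dot product graph literature. Then $X^TR = X^T(A-P)X(X^TX)^{-1}W + X^TR'$, and a concentration bound for the quadratic form $X^T(A-P)X$ gives $\|X^T(A-P)X\| = \Otilde(n)$, so the leading piece is $\Otilde(n)\cdot\Otilde(1/n) = \Otilde(1)$, with the remainder contributing at lower order. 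Hence $\|\Xhat^T R\| = \Otilde(1)$, and since $\|(\Xhat^T\Xhat)^{-1}\| = O(1/n)$, the bias term is $\Otilde(1/n) = o(n^{-1/2})$. Combining the oracle CLT with this negligibility through Slutsky completes the argument; the bulk of the genuine work lies not in the oracle CLT but in this sharpened control of $X^T(\Xhat - XQ)$.
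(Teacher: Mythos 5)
Your proof is correct, and its skeleton matches the paper's: an oracle CLT for $\wls=(X^TX)^{-1}X^T\avec$ plus Slutsky, with all the real work in showing $\sqrt{n}(Q\whatls-\wls)\inprob 0$. In fact your two-term decomposition is algebraically identical to the paper's Equation~\eqref{eq:ase:clt:decomp}: since $(\Xhat^T\Xhat)^{-1}\Xhat^T(\Xhat-XQ)=I-\SA^{-1/2}\UA^TXQ$, your bias term equals the paper's term $\sqrt{n}\,Q(\SA^{-1/2}\UA^TX-Q^T)\wtrue$, and (under the paper's identification $X=\UP\SP^{1/2}$) your noise term is exactly the sum of the paper's other two remainder terms, while your oracle CLT is the paper's Lemma~\ref{lem:inlaw}. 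Where you genuinely diverge is in the tools used to kill these terms. For the noise term, the paper needs Lemmas~\ref{lem:OMNI:Qdef} and~\ref{lem:unitaryhoeff} plus a bespoke argument (Lemma~\ref{lem:exchangeable}: an auxiliary matrix with i.i.d.\ columns, conditional Markov, matrix Bernstein, and a Davis--Kahan bound); your direct conditioning on $(X,A)$ --- conditional mean zero, conditional second moment at most $\tfrac14\|M\|_F^2$ with $M=Q(\Xhat^T\Xhat)^{-1}\Xhat^T-(X^TX)^{-1}X^T$ --- accomplishes the same thing more cleanly, and your claim $\|M\|_F=\Otilde(n^{-1})$ does follow from $\|\Xhat-XQ\|_F=\Otilde(1)$ alone, since the crude bound $\|X^T(\Xhat-XQ)\|=\Otilde(\sqrt{n})$ suffices in that perturbation expansion. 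For the bias term, the paper stays in spectral coordinates and extracts the cancellation from two alignment lemmas imported from prior work ($\|\UP^T\UA-Q\|_F=O(n^{-1}\log n)$, Lemma~\ref{lem:OMNI:Qdef}, together with the approximate-commutation bounds of Lemma~\ref{lem:approxcommute}); you instead import the first-order residual expansion $\Xhat-XQ=(A-P)X(X^TX)^{-1}W+R'$ with $\|R'\|_F=\Otilde(n^{-1/2})$ and add a quadratic-form concentration bound $\|X^T(A-P)X\|=\Otilde(n)$ (which the paper's Theorem~\ref{thm:asymbern} delivers). Both routes lean on second-order spectral results proved elsewhere in the RDPG literature, so your importing is no worse than the paper's; to make it airtight you should cite a quantified version of the expansion (e.g.\ Lyzinski et al.\ or Tang--Priebe) and verify that its rotation $W$ can be taken compatible with the $Q$ of Lemma~\ref{lem:2toinfty}, which is precisely the content of bounds like Lemma~\ref{lem:OMNI:Qdef}. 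What each approach buys: yours exposes the probabilistic structure (the noise term is an honest conditional-variance computation) and concentrates the second-order difficulty in one citable expansion; the paper's avoids ever stating that expansion, at the cost of the more intricate exchangeability lemma and the chain of commutation estimates.
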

\begin{proof}
This theorem follows by writing the ASE least-squares OOS extension
as a sum of two vectors, one of which converges in probability to $0$
using arguments similar to Theorem~\ref{thm:ase:lsrate},
and the other of which converges in distribution to a normal,
and applying Slutsky's lemma.
A detailed proof can be found in Appendix~\ref{apx:ase:lsclt}.
\end{proof}
If the latent position $\wtrue$ of the OOS vertex $v$
is itself distributed according to $F$,
integrating $\wtrue$ above with respect to $F$ yields the following corollary.
\begin{corollary} \label{cor:ase:lsclt:mixture}
Assume the same setup as Theorem~\ref{thm:ase:lsclt}, but suppose that
the true latent position of the out-of-sample vertex $v$ is given by
$\wtrue \sim F$, independent of $(A,X)$.
Then there exists a sequence of orthogonal matrices $Q \in \bbR^{d \times d}$
such that
\begin{equation*}
\sqrt{n} Q \whatls \inlaw
	\int \calN(w, \Sigma_{F,w}) dF(w),
\end{equation*}
where $\Sigma_{F,w}$ is as defined in Equation~\eqref{eq:def:Sigma}.
That is, $\sqrt{n} Q \whatls$ converges in distribution to a
mixture of normals with mixing distribution $F$.
\end{corollary}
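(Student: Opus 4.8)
The plan is to derive the corollary from Theorem~\ref{thm:ase:lsclt} by conditioning on the random out-of-sample latent position $\wtrue$ and then integrating the resulting family of Gaussian limits against $F$ by means of the L\'evy continuity theorem. The key structural observation is that the orthogonal matrix $Q$ supplied by Lemma~\ref{lem:2toinfty} is a function of $(A,X)$ alone and does not involve the out-of-sample vertex. Since $\wtrue \sim F$ is drawn independently of $(A,X)$, conditioning on the event $\{\wtrue = w\}$ leaves the joint law of $(A,X,\avec,Q)$ exactly equal to the law appearing in Theorem~\ref{thm:ase:lsclt} with fixed out-of-sample position $w$. Consequently, for $F$-almost every $w \in \supp F$, Theorem~\ref{thm:ase:lsclt} yields the conditional convergence $\sqrt{n}\,(Q\whatls - w) \inlaw \calN(0, \Sigma_{F,w})$ given $\{\wtrue = w\}$; adding the vector $w$, which is a constant under this conditioning, upgrades the conditional limit to $\calN(w, \Sigma_{F,w})$.

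First I would pass to characteristic functions. Fix $t \in \bbR^d$ and, using the independence of $\wtrue$ and $(A,X)$ together with the tower property, write the characteristic function of the statistic of interest as an average over $w$ of conditional characteristic functions,
\begin{equation*}
\E\Bigl[ \exp\Bigl( i\, t^T \bigl( \wtrue + \sqrt{n}\,(Q\whatls - \wtrue) \bigr) \Bigr) \Bigr]
= \int_{\supp F} e^{\, i\, t^T w}\,
  \E\Bigl[ \exp\bigl( i\, t^T \sqrt{n}\,(Q\whatls - w) \bigr) \Bigm| \wtrue = w \Bigr]\, dF(w).
\end{equation*}
By the conditional statement of the previous paragraph, for $F$-almost every $w$ the inner expectation converges, as $n \to \infty$, to $\exp(-\tfrac12 t^T \Sigma_{F,w}\, t)$, the characteristic function of $\calN(0,\Sigma_{F,w})$. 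Hence the full integrand converges pointwise in $w$ to $\exp( i\, t^T w - \tfrac12 t^T \Sigma_{F,w}\, t )$, which is precisely the characteristic function of $\calN(w, \Sigma_{F,w})$.

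Next I would justify passing the limit inside the integral. Each inner expectation has modulus bounded by $1$ uniformly in $n$ and $w$, and $F$ is a probability measure, so the integrand is dominated by the constant $1$; the bounded convergence theorem then gives
\begin{equation*}
\lim_{n \to \infty}
\E\Bigl[ \exp\Bigl( i\, t^T \bigl( \wtrue + \sqrt{n}\,(Q\whatls - \wtrue) \bigr) \Bigr) \Bigr]
= \int_{\supp F} \exp\Bigl( i\, t^T w - \tfrac12\, t^T \Sigma_{F,w}\, t \Bigr)\, dF(w),
\end{equation*}
and the right-hand side is exactly the characteristic function of the mixture $\int \calN(w,\Sigma_{F,w})\,dF(w)$. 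Since this limit holds for every $t \in \bbR^d$ and the limiting function, being the characteristic function of a genuine probability law, is continuous at the origin, the L\'evy continuity theorem delivers the claimed convergence in distribution to the mixture of normals with mixing distribution $F$.

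I expect the main obstacle to be the measure-theoretic bookkeeping needed to make the conditioning argument rigorous: one must verify that the map $w \mapsto \E\bigl[ \exp( i\, t^T \sqrt{n}\,(Q\whatls - w) ) \bigm| \wtrue = w \bigr]$ is measurable, and that the pointwise convergence supplied by Theorem~\ref{thm:ase:lsclt}, which is stated for each fixed $w$, may legitimately be invoked for $F$-almost every $w$ inside the integral. This is where the independence of $\wtrue$ from $(A,X)$ and the fact that $Q$ does not depend on $w$ are essential, since together they guarantee that the regular conditional distribution given $\{\wtrue = w\}$ coincides, for $F$-almost every $w$, with the fixed-$w$ model of Theorem~\ref{thm:ase:lsclt}. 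Once this coincidence is established, the remaining steps are the routine application of bounded convergence and the continuity theorem described above.
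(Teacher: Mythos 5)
Your proposal is correct and follows essentially the same route as the paper, which offers no separate proof beyond the remark that ``integrating $\wtrue$ above with respect to $F$ yields the corollary'': your conditioning-on-$\{\wtrue = w\}$ argument, using that $Q$ is a function of $(A,X)$ alone and that $\wtrue$ is independent of $(A,X)$, followed by bounded convergence of characteristic functions and the L\'evy continuity theorem, is precisely the rigorous fleshing-out of that one-line remark. Your reading of the display as the limit of $\wtrue + \sqrt{n}\,(Q\whatls - \wtrue)$ is also the intended interpretation of the (informally stated) conclusion.
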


Turning our attention to the LSE, we can obtain a similar CLT result for
the LSE OOS extension, once we adjust for the fact that the LSE does not
estimate the latent position $\wtrue$ but instead estimates the
vector $\wtilde = \wtrue/\sqrt{n \mu^T \wtrue}$, where $\mu \in \bbR^d$ is
the mean of the inner-product distribution $F$.
We note that the scaling of $\wtilde$ by the square root of the expected
degree means that we must scale by $n$ instead of the $\sqrt{n}$
scaling in the ASE CLTs above.
\begin{theorem} \label{thm:lse:lsclt}
Let $F$ be a $d$-dimensional inner-product distribution for which there
exists a constant $\eta > 0$ such that
$\eta \le x^T y \le 1-\eta$ whenever $x,y \in \supp F$.
Let $(A,X) \sim \RDPG(F,n)$ and let $v$ be the out-of-sample vertex
with latent position $\wtrue \in \supp F$.
Let $\wcheckls \in \bbR^d$ denote the least-squares OOS extension
of LSE as defined in Equation~\eqref{eq:def:lse:llshat}.
Then there exists a sequence of orthogonal matrices
$\Qtilde \in \bbR^{d \times d}$ such that
\begin{equation*}
n(\Qtilde \wcheckls - \wtilde) \inlaw \calN(0, \Sigmatilde_{F,\wtrue} ),
\end{equation*}
where for any $w \in \supp F$ we define
\begin{equation} \label{eq:def:Sigmatilde}
\Sigmatilde_{F,\wtrue} 
= \E \left[ \frac{ X_j^T\wtrue(1-X_j^T\wtrue) }{ \mu^T\wtrue }
        \left( \frac{\Deltatilde^{-1} X_j}{X_j^T \mu}                                                         - \frac{ \wtrue}{2\mu^T \wtrue } \right)
        \left( \frac{\Deltatilde^{-1} X_j}{X_j^T \mu}
			- \frac{ \wtrue}{2\mu^T \wtrue } \right)^T \right],
\end{equation}
with $\Deltatilde = \E X_1 X_1^T / \mu^T X_1$.
\end{theorem}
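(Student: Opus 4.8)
The plan is to mirror the strategy used for Theorem~\ref{thm:ase:lsclt}: write $n(\Qtilde\wcheckls - \wtilde)$ as a sum of a term that converges in probability to $0$ and a term that converges in distribution to the claimed Gaussian, and then invoke Slutsky's lemma. The first step is to pass from the estimated embedding $\Xcheck$ to the population embedding $\Xtilde$. Writing the least-squares solution of~\eqref{eq:def:lse:llshat} through its normal equations, $\wcheckls = (\Xcheck^T\Xcheck)^{-1}\Xcheck^T\bvec$ with responses $b_i = a_i/\sqrt{d_v d_i}$ collected in $\bvec$, and letting $\wtildels = (\Xtilde^T\Xtilde)^{-1}\Xtilde^T\bvec$ denote the oracle solution built from the true Laplacian embedding $\Xtilde$, the bound $\|\Xcheck - \Xtilde\Qtilde\|_{2,\infty} = O(n^{-1}\log^{1/2}n)$ from Lemma~\ref{lem:2toinfty}, together with the perturbed-linear-system estimates already used for Theorem~\ref{thm:lse:lsrate}, should give $\|\Qtilde\wcheckls - \wtildels\| = o(n^{-1})$ almost surely, since the discrepancy between $\Xcheck$ and $\Xtilde\Qtilde$ enters only at second order. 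Scaling by $n$, this term vanishes, so it suffices to prove the CLT for $n(\wtildels - \wtilde)$.

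The crucial structural observation is that the latent-position randomness entering through $\Xtilde^T\Xtilde$ cancels. Splitting the edge indicators as $a_i = X_i^T\wtrue + \eta_i$ with $\eta_i = a_i - X_i^T\wtrue$, and recalling $\Xtilde_i = X_i/\sqrt{T_{ii}}$ with $T_{ii} = \sum_j X_i^T X_j = n X_i^T\bar X$ for $\bar X = n^{-1}\sum_j X_j$, I would first replace each in-sample degree $d_i$ by its conditional mean $T_{ii}$; the discrepancy $d_i - T_{ii}$ is in-sample edge noise that, sitting inside the sum and multiplied by the already small responses, contributes only at order $o(n^{-1})$ once the global $d_v^{-1/2}$ factor is accounted for. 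With this replacement the signal part of $\Xtilde^T\bvec$ becomes $d_v^{-1/2}(\Xtilde^T\Xtilde)\wtrue$, so that $(\Xtilde^T\Xtilde)^{-1}$ telescopes and
\begin{equation*}
\wtildels = \frac{\wtrue}{\sqrt{d_v}} + (\Xtilde^T\Xtilde)^{-1}\sum_{i=1}^n \frac{\Xtilde_i\,\eta_i}{\sqrt{d_v\,T_{ii}}} .
\end{equation*}
By the strong law $\Xtilde^T\Xtilde = n^{-1}\sum_i X_iX_i^T/(X_i^T\bar X) \to \Deltatilde$, so $(\Xtilde^T\Xtilde)^{-1}\to\Deltatilde^{-1}$, and in the noise term one substitutes $T_{ii}\approx nX_i^T\mu$ and $d_v\approx n\mu^T\wtrue$.

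It then remains to linearize the two surviving pieces and collect them into a single i.i.d.\ sum. The noise term becomes $\Deltatilde^{-1}\sum_i \Xtilde_i\eta_i/\sqrt{d_v T_{ii}}\approx n^{-3/2}(\mu^T\wtrue)^{-1/2}\sum_i \Deltatilde^{-1}X_i\eta_i/(X_i^T\mu)$, supplying the $\Deltatilde^{-1}X_j/(X_j^T\mu)$ factor. For the first piece, a delta-method expansion of $d_v^{-1/2}=(\sum_i a_i)^{-1/2}$ about $\sum_i X_i^T\wtrue$ contributes the $-\wtrue/(2\mu^T\wtrue)$ factor and supplies the edge-noise weights $a_i - X_i^T\wtrue$; this is where the single shared normalization by the out-of-sample degree $d_v$ makes a non-negligible contribution. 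Combining, one expects $n(\wtildels - \wtilde) = n^{-1/2}\sum_{i=1}^n \zeta_i + o_P(1)$ with
\begin{equation*}
\zeta_i = \frac{1}{\sqrt{\mu^T\wtrue}}\left(\frac{\Deltatilde^{-1}X_i}{X_i^T\mu} - \frac{\wtrue}{2\mu^T\wtrue}\right)(a_i - X_i^T\wtrue),
\end{equation*}
where the $\zeta_i$ are i.i.d.\ and mean zero because $\E[a_i - X_i^T\wtrue\mid X_i]=0$, with common covariance obtained by conditioning on $X_i$ and using $\VAR(a_i\mid X_i)=X_i^T\wtrue(1-X_i^T\wtrue)$; this covariance is exactly $\Sigmatilde_{F,\wtrue}$ of~\eqref{eq:def:Sigmatilde}. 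A multivariate CLT for i.i.d.\ sums (via the Cram\'er--Wold device and Lindeberg's condition, the latter from $X_i^T\mu$ being bounded away from $0$ under the hypothesis $\eta\le x^Ty\le 1-\eta$) then gives $n^{-1/2}\sum_i\zeta_i\inlaw\calN(0,\Sigmatilde_{F,\wtrue})$, and Slutsky's lemma concludes.

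The main obstacle is the delicate accounting at order $n^{-1}$. The normalization $d_v^{-1/2}$, the in-sample degrees $d_i$, the denominators $X_i^T\bar X$ in $T_{ii}$, and the Gram matrix $\Xtilde^T\Xtilde$ all fluctuate at order $n^{-1/2}$ and hence a priori contribute at order $n^{-1}$ once multiplied into the $O(n^{-1/2})$ leading term, which is precisely the scale that survives multiplication by $n$. The argument hinges on showing that the \emph{latent-position-average} fluctuations (those in $\Xtilde^T\Xtilde$, in the $X_i^T\bar X$ denominators, and in matching $\sum_i X_i^T\wtrue$ to $n\mu^T\wtrue$) cancel through the telescoping identity above and do not leave a residual variance contribution, so that only the edge noise and the $d_v$ normalization remain. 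Turning each ``$\approx$'' into a rigorous $o_P(n^{-1})$ remainder bound, uniformly enough to survive the $n$ scaling, is the technical heart of the proof.
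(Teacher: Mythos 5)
Your high-level architecture --- a vanishing remainder plus an i.i.d.\ sum driven by the out-of-sample edge noise $a_j - X_j^T\wtrue$, with the $-\wtrue/(2\mu^T\wtrue)$ term produced by a delta-method expansion of $d_v^{-1/2}$, followed by Slutsky --- is the same as the paper's, and your identification of the summands $\zeta_j$ and of the covariance $\Sigmatilde_{F,\wtrue}$ is correct. However, two reductions that you treat as routine are genuine gaps: in both cases the bounds you invoke give $O(n^{-1}\log^{1/2}n)$, not the $o(n^{-1})$ you claim, and that is fatal here because after the $n$-scaling of the CLT such a term diverges like $\log^{1/2}n$ rather than vanishing.

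First, the reduction from $\wcheckls$ to the oracle solution $\wtildels$. The ``perturbed-linear-system estimates already used for Theorem~\ref{thm:lse:lsrate}'' are Theorem~\ref{thm:gvl}, and what they yield is exactly $\|\Qtilde\wcheckls - \wtildels\| = O(n^{-1}\log^{1/2}n)$ (this is Lemma~\ref{lem:lse:wtildels2wcheck}); the perturbation does \emph{not} enter only at second order, since the first-order term of \eqref{eq:GVLbound} is of size $\|\wtildels\| \cdot \|\Xcheck - \Xtilde\Qtilde\|/\|\Xtilde\| = O(n^{-1/2}) \cdot O(n^{-1/2}\log^{1/2}n)$. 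For precisely this reason the paper never compares $\wcheckls$ with $\wtildels$. Instead it uses the spectral alignment lemmas (Lemmas~\ref{lem:LSE:Qdef} and~\ref{lem:approxcommute}) to rewrite $n\Qtilde^T\wtilde = n\Scheck^{-1}\Xcheck^T\Xtilde\wtilde + O(n^{-1/2})$, so that the error matrix $\Xcheck - \Xtilde\Qtilde$ only ever multiplies the \emph{residual} $D^{-1/2}\avec/\sqrt{d_v} - \Xtilde\wtilde$, which is both small and conditionally centered; a Hoeffding argument then bounds that product by $O(n^{-3/2}\log n)$. The structural fact your route cannot see through Theorem~\ref{thm:gvl} is the near-orthogonality $\|(\Xcheck - \Xtilde\Qtilde)^T\Xtilde\| = O(n^{-1})$, which is what actually kills the first-order term.

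Second, the in-sample degrees. You assert that replacing $d_i$ by $t_i$ costs $o(n^{-1})$ because the discrepancy ``is multiplied by the already small responses.'' Size-counting alone gives only: $n$ summands, each bounded via Lemma~\ref{lem:degreegrowth} by $O(n^{-1/2})\cdot O(n^{-1/2})\cdot O(n^{-1}\log^{1/2}n)$, i.e.\ a total of $O(n^{-1}\log^{1/2}n)$ --- again divergent after multiplication by $n$. The paper's treatment of this term (the argument culminating in Equation~\eqref{eq:finisher:2}) must open up $t_i - d_i = \sum_j (X_j^TX_i - A_{ij})$ and exploit that these increments are centered and independent across pairs $i<j$: a conditional Hoeffding bound over the double sum buys the extra factor of $n^{-1/2}$ that closes the gap. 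Your conclusion is right, but the cancellation argument is the proof, and your proposal does not supply it. Relatedly, you are correct to flag the matching of $t_v = \sum_j X_j^T\wtrue$ with $n\mu^T\wtrue$ as the crux: expanding $d_v^{-1/2}$ about $t_v$ leaves the term $n\,\wtrue\,\bigl(t_v^{-1/2} - (n\mu^T\wtrue)^{-1/2}\bigr)$, which is driven by the centered latent-position sum $\sum_j (X_j-\mu)^T\wtrue$ of exact order $\sqrt{n}$, hence of exactly the order that survives the $n$-scaling; it cannot be dismissed by concentration, no telescoping identity you have written makes it cancel, and by your own admission your proposal leaves it unresolved.
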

\begin{proof}
The proof follows similarly to that of Theorem~\ref{thm:ase:lsclt},
though it requires a more careful analysis to control convergence of the
degrees. Details are given in Appendix~\ref{apx:lseclt}.
\end{proof}

\section{Experiments}
\label{sec:expts}
In this section, we briefly explore our results through simulations.
We leave a more thorough experimental examination of our results,
particularly as they apply to real-world data, for future work.
We first give a brief exploration of how quickly the asymptotic
distribution in Theorem~\ref{thm:ase:lsclt} becomes a good approximation.
Toward this end, let us consider a simple mixture of point masses,
$ F = F_{\lambda,x_1,x_2} = \lambda \delta_{x_1} + (1-\lambda) \delta_{x_2}$,
where $x_1,x_2 \in \bbR^2$ and $\lambda \in (0,1)$.
This corresponds to a two-block stochastic block model
\citep{Holland1983},
in which the block probability matrix is given by
$$ \begin{bmatrix} x_1^T x_1 & x_1^T x_2 \\
                x_1^Tx_2 & x_2^Tx_2 \end{bmatrix}. $$
Corollary~\ref{cor:ase:lsclt:mixture} implies that if all latent positions
(including the OOS vertex) are drawn according to $F$,
then the OOS estimate should be distributed as a mixture
of normals centered at $x_1$ and $x_2$, with respective mixing coefficients
$\lambda$ and $1-\lambda$.

\begin{figure*}[ht!]
  \centering
  \includegraphics[width=\textwidth]{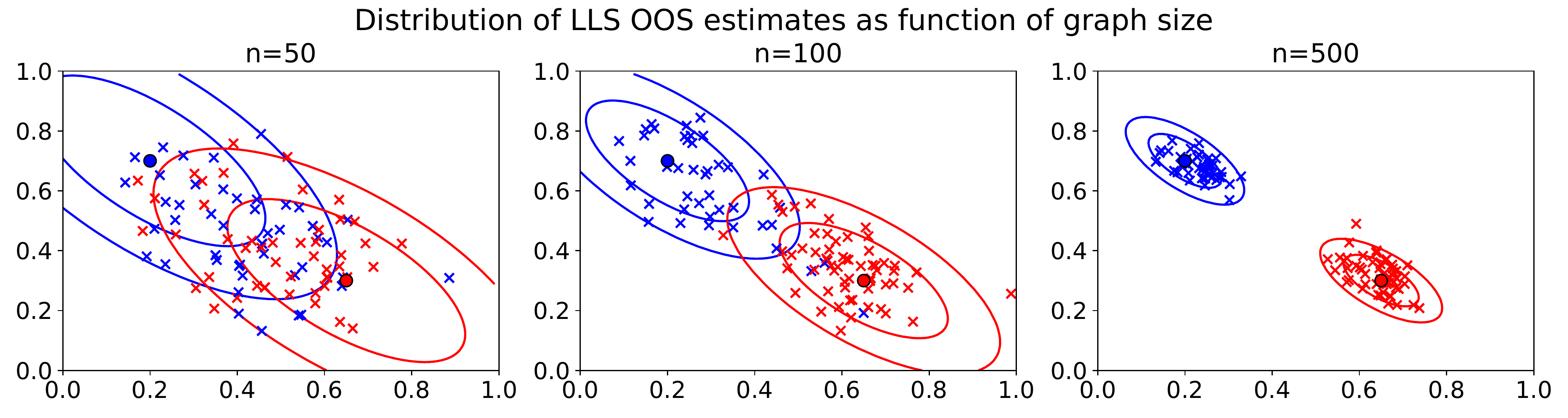}
  \vspace{-5mm}
  \caption{Observed distribution of the LLS OOS estimate for 100
        independent trials for number of vertices
        $n=50$ (left), $n=100$ (middle) and $n=500$ (right).
        Each plot shows the positions of 100 independent OOS
        embeddings, indicated by crosses,
        and colored according to cluster membership.
        Contours indicate two generalized standard deviations of the
        multivariate normal (i.e., 68\% and 95\% of the probability mass)
        about the true latent positions, which are indicated by solid circles.
        We note that even with merely 100 vertices, the normal
        approximation is already quite reasonable.}
  \label{fig:lls_nplot}
\end{figure*}

To assess how well the asymptotic distribution predicted by
Theorem~\ref{thm:ase:lsclt} and Corollary~\ref{cor:ase:lsclt:mixture} holds,
we generate RDPGs with latent positions
drawn i.i.d.\ from distribution $F = F_{\lambda,x_1,x_2}$ defined above, with
$$ \lambda = 0.4,~x_1 = (0.2,0.7)^T, \text{ and } x_2 = (0.65, 0.3)^T. $$
For each trial, we draw $n+1$ independent latent positions from
$F$, and generate a binary adjacency matrix from these latent positions.
We let the $(n+1)$-th vertex be the OOS vertex.
Retaining the subgraph induced by the first $n$ vertices, we obtain
an estimate $\Xhat \in \bbR^{n \times 2}$ via ASE,
from which we obtain an estimate for the OOS vertex
via the LS OOS extension as defined in~\eqref{eq:def:ase:llshat}.
We remind the reader that for each RDPG draw,
we initially recover the latent positions only up to a rotation.
Thus, for each trial, we compute a Procrustes alignment \citep{GowDij2004}
of the in-sample estimates $\Xhat$ to their true latent positions.
This yields a rotation matrix $R$, which we apply to the OOS estimate.
Thus, the OOS estimates are sensibly comparable across trials.
Figure~\ref{fig:lls_nplot}
shows the empirical distribution of the OOS embeddings
of 100 independent RDPG draws, for $n=50$ (left),
$n=100$ (center) and $n=500$ (right) in-sample vertices.
Each cross is the location of the OOS estimate for a single
draw from the RDPG with latent position distribution $F$,
colored according to true latent position.
OOS estimates with true latent position
$x_1$ are plotted as blue crosses, while OOS estimates
with true latent position $x_2$ are plotted as red crosses.
The true latent positions $x_1$ and $x_2$ are plotted as solid circles,
colored accordingly.
The plot includes contours for the two normals centered at $x_1$ and $x_2$
predicted by Theorem~\ref{thm:ase:lsclt} and Corollary~\ref{cor:ase:lsclt:mixture},
with the ellipses indicating the isoclines corresponding
to one and two (generalized) standard deviations.

Examining Figure~\ref{fig:lls_nplot}, we see that even with only 100 vertices,
the mixture of normal distributions predicted by Theorem~\ref{thm:ase:lsclt}
holds quite well, with the exception of a few gross outliers
from the blue cluster. With $n=500$ vertices, the approximation is
particularly good. Indeed, the $n=500$ case appears to be slightly
under-dispersed, possibly due to the Procrustes alignment.
It is natural to wonder whether a similarly good fit is exhibited by the
ML-based OOS extension.
We conjectured at the end of Section~\ref{sec:theory} that a CLT
similar to that in Theorem~\ref{thm:ase:lsclt} would also hold
for the ML-based OOS extension as defined in
Equation~\eqref{eq:def:eigenlikhat}.
Figure~\ref{fig:ml_nplot} shows the empirical distribution of 100 independent
OOS estimates, under the same experimental setup as Figure~\ref{fig:lls_nplot},
but using the ML OOS extension
rather than the linear least-squares extension.
The plot supports our conjecture that the ML-based OOS estimates are
also approximately normally distributed about the true latent positions.
Broadly similar patterns hold for the same experiment applied to the
least-squares LSE OOS extension, as predicted by
Theorem~\ref{thm:lse:lsclt}.

\begin{figure*}[ht!]
  \centering
  \includegraphics[width=\textwidth]{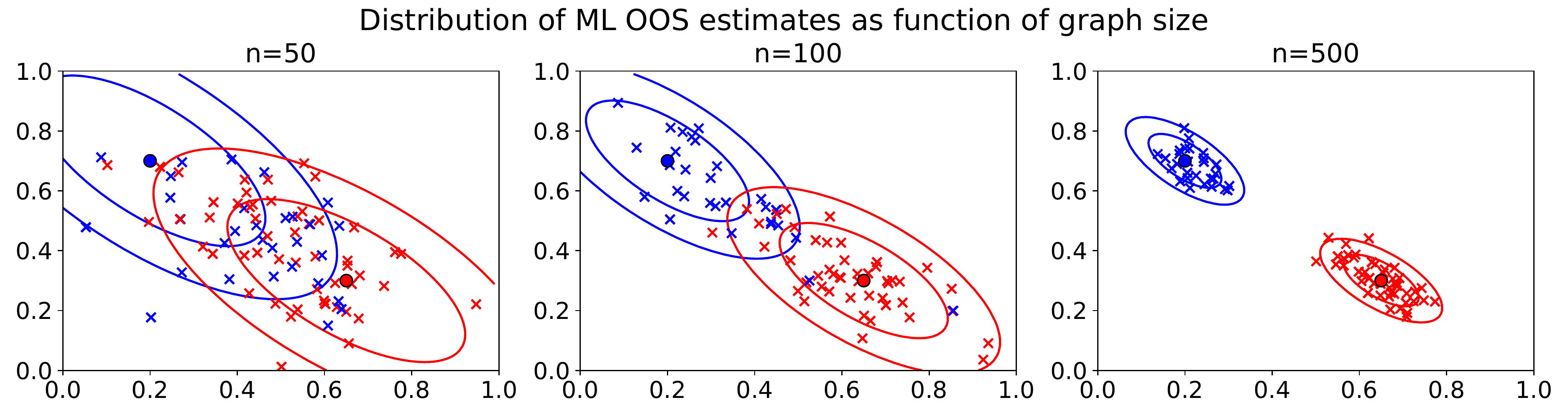}
  \vspace{-5mm}
  \caption{Observed distribution of the ML OOS estimate for 100
        independent trials for number of vertices
        $n=50$ (left), $n=100$ (middle) and $n=500$ (right).
        Each plot shows the positions of 100 independent OOS
        embeddings, indicated by crosses,
        and colored according to cluster membership.
        Contours indicate two generalized standard deviations of the
        multivariate normal
        about the true latent positions, which are indicated by solid circles.
        Once again, even with merely 100 vertices, the normal
        approximation is already quite reasonable, supporting our conjecture
        that the ML OOS estimates also distributed as a mixture of normals
        according to the latent position distribution $F$.}
  \label{fig:ml_nplot}
\end{figure*}

Figure~\ref{fig:lse_nplot} plots the same experiment as that performed
in Figures~\ref{fig:lls_nplot} and~\ref{fig:ml_nplot}, this time for
the linear least squares OOS extension of the Laplacian spectral embedding.
Recall that Theorem~\ref{thm:lse:lsclt} predicts that the out-of-sample
extension should be asymptotically normally distributed about the
true (rescaled) latent position $\wtilde = \wtrue/\sqrt{n \wtilde^T \mu}$.
Compared to the previous two experiments, it is evident that the
asymptotics are slightly slower to kick in, but modulo the same
Procrustes-induced underdispersion observed previously,
the theorem appears to hold quite well with $n=500$ vertices.

\begin{figure*}[ht!]
  \centering
  \includegraphics[width=\textwidth]{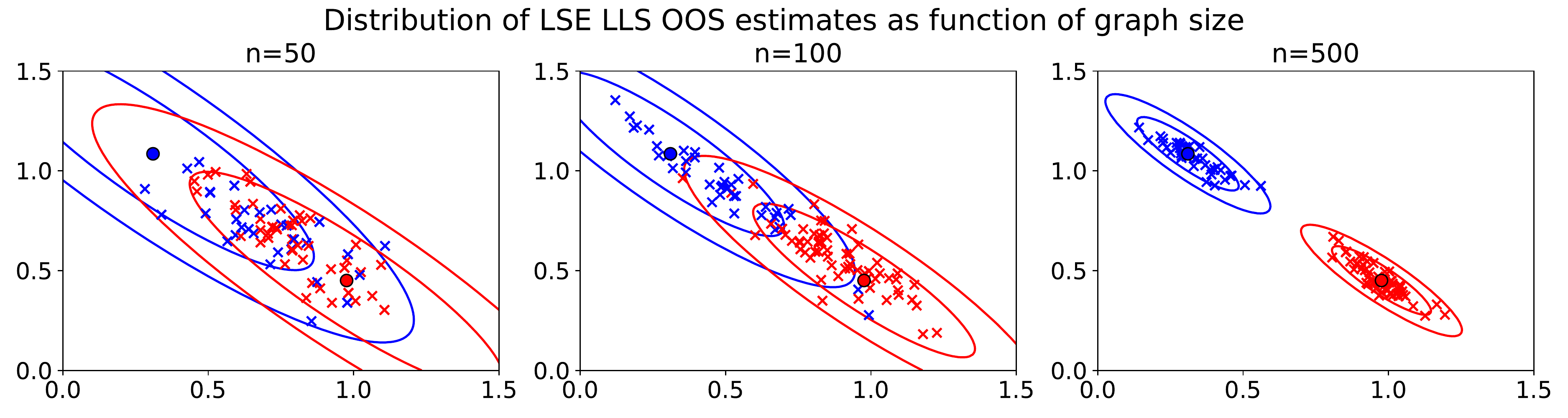}
  \vspace{-5mm}
  \caption{Observed distribution of the LSE OOS estimate for 100
        independent trials for number of vertices
        $n=50$ (left), $n=100$ (middle) and $n=500$ (right).
        Each plot shows the positions of 100 independent OOS
        embeddings, indicated by crosses,
        and colored according to cluster membership.
        Contours indicate two generalized standard deviations of the
        multivariate normal
        about the true latent positions, which are indicated by solid circles.}
  \label{fig:lse_nplot}
\end{figure*}

Figure~\ref{fig:lls_nplot} suggests that we may be confident in applying the
large-sample approximation suggested by Theorem~\ref{thm:ase:lsclt}
and Corollary~\ref{cor:ase:lsclt:mixture}.
Applying this approximation allows us to investigate the trade-offs
between computational cost and classification accuracy,
to which we now turn our attention.
The mixture distribution $F_{\lambda,x_1,x_2}$ above suggests a
task in which, given an adjacency matrix $A$, we wish to classify
the vertices according to which of two clusters or communities they belong.
That is, we will view two vertices as belonging to the same community if
their latent positions are the same
\citep[i.e., the latent positions specify an SBM,]{Holland1983}.
More generally, one may view the task of recovering vertex block memberships
in a stochastic block model as a clustering problem.
\citet{LyzSusTanAthPri2014} showed that applying ASE to such a graph,
followed by $k$-means clustering of the estimated latent positions,
correctly recovers community memberships of all the vertices
(i.e., correctly assigns all vertices to their true latent positions)
with high probability.

For concreteness, let us consider a still simpler mixture model,
$F = F_{\lambda,p,q} = \lambda \delta_p + (1-\lambda) \delta_q$,
where $0 < p < q < 1$,
and draw an RDPG $(\Atilde,X) \sim \RDPG(F,n+m)$,
taking the first $n$ vertices to be in-sample,
with induced adjacency matrix $A \in \bbR^{n \times n}$.
That is, we draw the full matrix
$$ \Atilde = \begin{bmatrix} A & B \\
                B^T & C \end{bmatrix}, $$
where $C \in \bbR^{m \times m}$
is the adjacency matrix of the subgraph induced by the $m$ OOS vertices
and $B \in \bbR^{n \times m}$ encodes the edges between the in-sample
vertices and the OOS vertices.
The latent positions $p$ and $q$ encode a community structure in
the graph $\Atilde$, and, as alluded to above,
a common task in network statistics
is to recover this community structure.
Let $\wtrue^{(1)}, \wtrue^{(2)}, \dots, \wtrue^{(m)} \in \{p,q\}$ denote the
true latent positions of the $m$ OOS vertices,
with respective least-squares OOS estimates
$\whatls^{(1)}, \whatls^{(2)}, \dots, \whatls^{(m)}$,
each obtained from the in-sample ASE $\Xhat \in \bbR^n$ of $A$.
We note that one could devise a different OOS embedding
procedure that makes use of the subgraph $C$ induced by these $m$
OOS vertices, but we leave the development of such a method
to future work.
Corollary~\ref{cor:ase:lsclt:mixture}
implies that each $\whatls^{(t)}$ for $t \in [m]$
is marginally (approximately) distributed as
$$ \whatls^{(t)} \sim \lambda \calN(p,(n+1)^{-1}\sigma^2_p)
+ (1-\lambda)\calN(q,(n+1)^{-1}\sigma^2_q), $$
where
\begin{equation*} \begin{aligned}
\sigma^2_p &= \Delta^{-2} \left(\lambda p^2(1-p^2)p^2
        + (1-\lambda)pq(1-pq)q^2 \right), \\
\sigma^2_q &= \Delta^{-2} \left( \lambda pq(1-pq)p^2
+ (1-\lambda)q^2(1-q^2)q^2 \right), \\
\text{ and } \Delta &= \lambda p^2 + (1-\lambda)q^2.
\end{aligned} \end{equation*}
Classifying the $t$-th OOS vertex based on $\whatls^{(t)}$
via likelihood ratio thus has (approximate) probability of error
\begin{equation*}
\eta_{n,p,q}
= \lambda(1 - \Phi\left( \frac{\sqrt{n+1}(x_{n+1,p,q} - p) }{ \sigma_p } \right)
+
(1-\lambda)\Phi\left( \frac{\sqrt{n+1}(x_{n+1,p,q} - q)}{ \sigma_q  } \right),
\end{equation*}
where $\Phi$ denotes the cdf of the standard normal and
$x_{n,p,q}$ is the value of $x$ solving
\begin{equation*}
\lambda \sigma_p^{-1} \exp\{ n(x-p)^2/(2\sigma^2_p) \}
= (1-\lambda) \sigma_q^{-1} \exp\{ n(x-q)^2/(2\sigma^2_q) \} ,
\end{equation*}
and hence our overall error rate
when classifying the $m$ OOS vertices will grow as $m \eta_{n+1,p,q}$.

As discussed previously,
the OOS extension allows us to avoid
the expense of computing the ASE of the full matrix
$$ \Atilde = \begin{bmatrix} A & B \\
                B^T & C \end{bmatrix}. $$
The LLS OOS extension is computationally inexpensive,
requiring only the computation of the
matrix-vector product $\SA^{-1/2} \UA^T \avec$,
with a time complexity $O( d^2 n )$ (assuming one does not precompute
the product $\SA^{-1/2} \UA^T$).
The eigenvalue computation required for embedding
$\Atilde$ is far more expensive
than the LLS OOS extension.
Nonetheless, if one were intent on reducing the OOS classification error
$\eta_{n+1,p,q}$, one might consider paying the computational
expense of embedding $\Atilde$ to obtain estimates
$\wtilde^{(1)}, \wtilde^{(2)}, \dots, \wtilde^{(m)}$
of the $m$ OOS vertices.
That is, we obtain estimates for the $m$ OOS vertices
by making them in-sample vertices, at the expense of solving
an eigenproblem on the $(m+n)$-by-$(m+n)$ adjacency matrix.
Of course, the entire motivation of our approach is that the in-sample
matrix $A$ may not be available.
Nonetheless, a comparison against this baseline,
in which all data is used to compute our embeddings, is instructive.

Theorem 1 in \citet{AthLyzMarPriSusTan2016} implies that the
$\wtilde^{(t)}$ estimates based on embedding the full matrix $\Atilde$ are
(approximately) marginally distributed as
$$ \wtilde^{(t)} \sim \lambda \calN(p,(n+m)^{-1}\sigma^2_p)
+ (1-\lambda)\calN(q,(n+m)^{-1}\sigma^2_q), $$
with classification error
\begin{equation*}
\eta_{n+m,p,q}
= \lambda \Phi\left( \frac{ p - x_{n+m,p,q} }{ \sigma_p } \right)
+ (1-\lambda)\Phi\left( \frac{ x_{n+m,p,q} - q }{ \sigma_q  } \right),
\end{equation*}
where $x_{n+m,p,q}$ is the value of $x$ solving
\begin{equation*}
\lambda \sigma_p^{-1} \exp\{ (m+n)(x-p)^2/(2\sigma^2_p) \}
=
(1-\lambda) \sigma_q^{-1} \exp\{ (m+n)(x-q)^2/(2\sigma^2_q) \} ,
\end{equation*}
and it can be checked that
$\eta_{n+m,q,p} < \eta_{n,q,p}$ when $m > 1$.
Thus, at the cost of computing the ASE of $\Atilde$,
we may obtain a better estimate.
How much does this additional computation improve
classification the OOS vertices?
Figure~\ref{fig:ratio} explores this question.

\begin{figure}
  \centering
  \includegraphics[width=0.6\columnwidth]{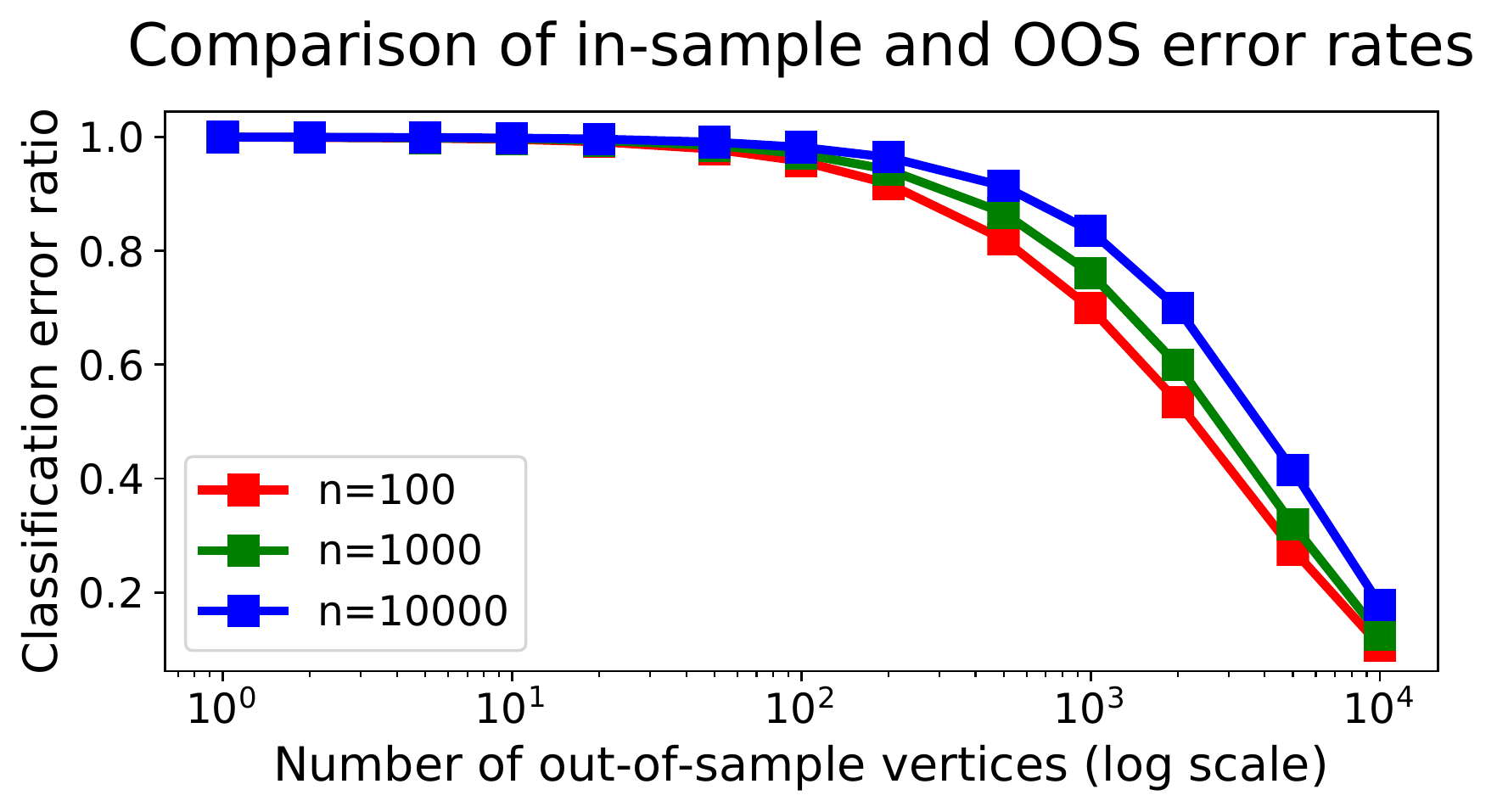}
  \vspace{-5mm}
  \caption{Ratio of the OOS classification error
        to the in-sample classification error as a function of the
        number of OOS vertices $m$, for $n=100$ vertices,
        $n=1000$ vertices and $n=10000$ vertices. We see that for $m \le 100$,
        the expensive in-sample embedding does not improve appreciably
        on the OOS classification error.
        However, when many hundreds or thousands of OOS vertices
        are available simultaneously (i.e., $m \ge 100$), we see that
        the in-sample embedding may improve upon the OOS
        estimate by a significant multiplicative factor.}
  \label{fig:ratio}
  \vspace{-5mm}
\end{figure}

Figure~\ref{fig:ratio} compares the error rates of the in-sample
and OOS estimates as a function of $m$ and $n$ in the model just described,
with $\lambda = 0.4, p=0.6$ and $q=0.61$.
The plot depicts the ratio of the (approximate) in-sample classification
error $\eta_{(n+m),p,q}$ to the (approximate) OOS classification
error $\eta_{(n+1),p,q}$, as a function of the number of OOS
vertices $m$, for differently-sized in-sample graphs,
$n=100, 1000,$ and $10000$.
We see that over several magnitudes of graph size,
the in-sample embedding does not improve appreciably over the
OOS embedding except when multiple hundreds of OOS
vertices are available.
When hundreds or thousands of OOS vertices are available
simultaneously, we see in the right-hand side of Figure~\ref{fig:ratio}
that the in-sample embedding classification error may improve upon
the OOS classification error by a large multiplicative factor.
Whether or not this improvement is worth the additional computational
expense will, depend upon the available resources and desired accuracy,
but this suggests that the additional
expense associated with performing a second ASE computation
is only worthwhile in the event that hundreds or thousands of OOS
vertices are available simultaneously.
This surfeit of OOS vertices is rather divorced from the typical setting
of OOS extension problems, where one typically wishes to embed at most a few
previously unseen observations.

\section{Discussion and Conclusion}
\label{sec:discussion}
We have presented theoretical results for out-of-sample extensions of
graph embeddings, the adjacency spectral embedding and the Laplacian
spectral embedding. In both cases, we have shown that under the random dot
product graph, a least squares-based OOS extension recovers the true latent
position at the same rate as the more expensive in-sample embedding.
Further, this linear least squares OOS extension obeys a CLT, whereby the
OOS embedding is normally distributed about the true latent position.
We have also presented results for an ASE OOS extension based on
a maximum-likelihood obective function showing that this embedding
recovers the true out-of-sample latent position at the same rate as the
in-sample embedding.
Experiments suggest that convergence to the predicted normal distribution
is fairly fast, being a good approximation with only a few hundred vertices.
Finally, we have briefly investigated how the approximation introduced by
these OOS extensions might be traded off against the computational expense
associated with computing the more expensive full graph embedding
by investigating how the approximate classification error predicted by our
CLT depends on the size of the size of the in-sample and the number of
out-of-sample vertices.

The results in this work suggest a number of interesting directions for future
work, a few of which we briefly enumerate here.
Firstly, though all of the OOS extensions presented in this paper match
the asymptotic estimation error rates of their respective in-sample embeddings,
our results say little about the constants associated with those rates
or about finite-sample behavior of those OOS extensions (aside from their
obvious restatements as finite-sample results alluded to briefly in
Section~\ref{subsec:notation}). A more thorough investigation of how these
different OOS extensions behave for different sizes of the in-sample
graph and for different latent position distributions $F$                       would be of particular interest to practitioners faced with choosing between
these different embeddings and OOS extensions as they apply to real data.
Our discussion surrounding Figure~\ref{fig:ratio}
makes an initial step in this direction, but only suggests rules of thumb
for when the speed/accuracy trade-off associated with out-of-sample extension
is likely to be favorable.

A related line of questioning concerns how one should, when possible,
select the in-sample vertices so as to yield optimal (as measured by, e.g.,
vertex classification or estimation accuracy of the latent positions)
out-of-sample embeddings.
Consider the setting where one has a graph $\Gtilde$ of size $\ntilde = n+m$    that is far too large to be embedded via ASE or LSE.
If $n$ is the largest number of vertices that can be feasibly embedded as a
full in-sample graph, it is natural to choose $n$ vertices from $\Gtilde$
to serve as the in-sample vertices, and embed the remaining $m$ vertices
via one of the out-of-sample extensions discussed in this paper.
In this setting, how should one choose these $n$ vertices from $\Gtilde$?
Problems of a similar nature have been considered elsewhere in the literature
under the heading of {\em anchor graphs} or choosing {\em anchor points}
\citep[see, e.g.,][]{LiuHeCha2010},
but we are not aware of any work in this area as it pertains to the ASE and LSE.
This also suggests the problem of how best to embed $m$ out-of-sample vertices
jointly, rather than applying an OOS extension to each of them in isolation,
particularly in the setting where we have access to the subgraph induced by
these $m$ out-of-sample vertices.
Of most import here is the question, also explored by Figure~\ref{fig:ratio}
of how large the out-of-sample size $m$ must be before one should prefer
the expense of the full-graph embedding, and whether an embedding that
makes use of this out-of-sample induced graph might bridge the gap between
these two extremes by providing an embedding which, while more expensive
than performing $m$ OOS extensions in isolation, is still far less
computationally intensive than embedding a graph of size $m+n$.
A more thorough exploration of this trade-off from both a theoretical and
empirical standpoint is the subject of on-going work.

%
%


%

\appendix

\section{Technical Results for the Random Dot Product Graph}
\label{apx:general}
Here we collect a number of basic results that will be useful in our
subsequent proofs of the main theorems.
Most of the results in this section are adapted from existing results in
\cite{LevAthTanLyzPri2017}, \cite{LyzSusTanAthPri2014} and \cite{TanPri2018}.
We refer the interested reader to \cite{RDPGsurvey} for a more thorough
overview of the RDPG and the statistical problems that arise in relation to it.

\begin{lemma}[\cite{LevAthTanLyzPri2017}, Observation 2] \label{lem:Pspecgrowth}
Let $(A,X) \sim \RDPG(F,n)$
for some $d$-dimensional inner product distribution $F$.
There exists constants $0 < C_1 < C_2$, depending only on $F$,
such that with probability $1$ is holds for all suitably large $n$ that
\begin{equation*} \begin{aligned}
  C_1 n &\le \lambda_d(P) \le \lambda_1(P) \le C_2 n \text{ and } \\
  C_1 \sqrt{n} &\le \lambda_d(X) \le  \lambda_1(X) \le C_2 \sqrt{n}.
\end{aligned} \end{equation*}
\end{lemma}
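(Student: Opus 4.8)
The plan is to reduce the claim to a statement about the Gram matrix $X^T X$ and then invoke the strong law of large numbers. Since $P = XX^T$, the nonzero eigenvalues of $P$ coincide with the eigenvalues of the $d \times d$ matrix $X^T X = \sum_{i=1}^n X_i X_i^T$, and these in turn are the squared singular values of $X$. Consequently the second displayed pair of inequalities follows from the first simply by taking square roots (and relabeling the constants $C_1, C_2$), so it suffices to show that the $d$ eigenvalues of $X^T X$ all grow linearly in $n$, bounded above and below by constant multiples of $n$.

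First I would observe that because $F$ is an inner product distribution, taking $x = y \in \supp F$ gives $0 \le \|x\|^2 \le 1$, so that $\|X_1\| \le 1$ almost surely and each entry of the random matrix $X_1 X_1^T$ is bounded. The entrywise strong law of large numbers then gives
\begin{equation*}
\frac{1}{n} X^T X = \frac{1}{n} \sum_{i=1}^n X_i X_i^T \convergesas \E X_1 X_1^T = \Delta,
\end{equation*}
and since this is convergence of a fixed-size $d \times d$ matrix, it holds in spectral norm as well: $\| n^{-1} X^T X - \Delta \| \to 0$ almost surely.

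Next I would transfer this to the eigenvalues via Weyl's inequality, $|\lambda_j(n^{-1} X^T X) - \lambda_j(\Delta)| \le \| n^{-1} X^T X - \Delta \|$ for each $j \in [d]$, which forces $\lambda_j(n^{-1} X^T X) \to \lambda_j(\Delta)$ almost surely. The full-rank assumption on $\Delta$ guarantees $\lambda_d(\Delta) > 0$, so setting $C_1 = \lambda_d(\Delta)/2$ and $C_2 = 2\lambda_1(\Delta)$, the convergence implies that with probability $1$ we have $C_1 \le \lambda_j(n^{-1} X^T X) \le C_2$ for all $j \in [d]$ once $n$ is large enough; multiplying through by $n$ yields the stated bounds on $\lambda_d(P)$ and $\lambda_1(P)$.

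This argument has no serious obstacle: the only points requiring care are the identification of the nonzero spectrum of $P = XX^T$ with that of the Gram matrix $X^T X$, and the boundedness $\|X_1\| \le 1$ that legitimizes the strong law. If one instead wanted the finite-sample, summable-tail version favored elsewhere in the paper, one could replace the strong law by a matrix Chernoff or Bernstein bound on $\| n^{-1} X^T X - \Delta \|$, whose tail decays fast enough to invoke the Borel--Cantelli lemma; the deterministic eigenvalue step via Weyl's inequality would be unchanged.
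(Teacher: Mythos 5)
Your proof is correct. Note that the paper itself does not prove this lemma at all --- it is imported wholesale as Observation 2 of \cite{LevAthTanLyzPri2017} --- so there is no internal proof to compare against; your argument (identify the top $d$ eigenvalues of $P = XX^T$ with the spectrum of the Gram matrix $X^TX$, apply the strong law of large numbers entrywise to get $n^{-1}X^TX \convergesas \Delta$, transfer to eigenvalues via Weyl's inequality, and use $\lambda_d(\Delta) > 0$) is the standard one and is essentially what the cited reference does. One point worth making explicit: the positivity of $\lambda_d(\Delta)$ is not automatic from $F$ being an inner product distribution but is the paper's standing full-rank assumption on $\Delta$ (stated in Section~\ref{sec:oos}), and it is genuinely necessary --- without it $\lambda_d(P)$ need not grow linearly --- so your proof correctly isolates where that hypothesis enters. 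Your closing remark that a matrix Bernstein bound plus Borel--Cantelli would give the summable-tail, finite-sample variant is also consistent with how the paper handles its other almost-sure statements.
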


\begin{lemma}[\cite{LevAthTanLyzPri2017}, Lemma 3]
\label{lem:OMNI:Qdef}
With notation as above, let $V_1 \Lambda V_2^T$
be the SVD of $\UP^T \UA \in \bbR^{d \times d}$,
and define $Q =  V_1V_2^T$.
Then
\begin{equation*}
\| \UP^T \UA - Q \|_F = O( n^{-1} \log n ).
\end{equation*}
\end{lemma}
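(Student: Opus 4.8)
The plan is to exploit the fact that $Q = V_1 V_2^T$ is precisely the orthogonal polar factor of $\UP^T \UA$, so that $\UP^T\UA - Q = V_1(\Lambda - I)V_2^T$ and, by orthogonal invariance of the Frobenius norm,
\[
\|\UP^T\UA - Q\|_F = \|\Lambda - I\|_F = \Big(\textstyle\sum_{i=1}^d (1 - \sigma_i)^2\Big)^{1/2},
\]
where $\sigma_1 \ge \cdots \ge \sigma_d \ge 0$ are the diagonal entries of $\Lambda$, i.e.\ the singular values of $\UP^T\UA$. Since $\UP$ and $\UA$ have orthonormal columns, $\|\UP^T\UA\| \le 1$, so these singular values lie in $[0,1]$ and are exactly the cosines $\cos\theta_i$ of the principal angles $\theta_1,\dots,\theta_d \in [0,\pi/2]$ between the column spaces of $\UP$ and $\UA$. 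The whole problem thus reduces to bounding $\sum_i (1-\cos\theta_i)^2$.

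The key step, and the source of the improved $n^{-1}$ (rather than $n^{-1/2}$) rate, is the elementary identity $1 - \cos\theta = \sin^2\theta/(1+\cos\theta) \le \sin^2\theta$, valid because $\cos\theta_i \ge 0$. This converts a first-order quantity into a second-order one: writing $\Theta = \diag(\theta_1,\dots,\theta_d)$, I would bound
\[
\|\UP^T\UA - Q\|_F^2 = \sum_{i=1}^d (1-\cos\theta_i)^2 \le \sum_{i=1}^d \sin^4\theta_i \le \Big(\sum_{i=1}^d \sin^2\theta_i\Big)^2 = \|\sin\Theta\|_F^4,
\]
so that $\|\UP^T\UA - Q\|_F \le \|\sin\Theta\|_F^2$. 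It therefore suffices to prove $\|\sin\Theta\|_F = O(n^{-1/2}\log^{1/2}n)$.

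This last bound is exactly what the Davis--Kahan $\sin\Theta$ theorem delivers. Here $\UP$ spans the top-$d$ eigenspace of $P = XX^T$, which has rank $d$, so the eigenvalues of $P$ beyond the first $d$ all vanish and the relevant spectral gap is $\lambda_d(P) - \lambda_{d+1}(P) = \lambda_d(P)$. By Lemma~\ref{lem:Pspecgrowth}, $\lambda_d(P) \ge C_1 n$ eventually, while standard RDPG concentration gives $\|A - P\| = O(\sqrt{n\log n})$ almost surely eventually. A Frobenius-norm Davis--Kahan bound then yields
\[
\|\sin\Theta\|_F \le \frac{C\sqrt{d}\,\|A-P\|}{\lambda_d(P)} = O\!\left(\frac{\sqrt{n\log n}}{n}\right) = O\!\left(\frac{\log^{1/2}n}{\sqrt n}\right),
\]
and squaring gives $\|\UP^T\UA - Q\|_F \le \|\sin\Theta\|_F^2 = O(n^{-1}\log n)$, as claimed.

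The main obstacle is conceptual rather than computational: one must resist applying Davis--Kahan directly to $\|\UP^T\UA - Q\|_F$, which would only give the inferior $O(n^{-1/2})$ rate, and instead recognize that $\|\UP^T\UA - Q\|_F$ is governed by the quantities $1-\cos\theta_i$, which are \emph{quadratically} small in the principal angles. The remaining ingredients, namely the spectral-gap lower bound and the operator-norm concentration of $A-P$, are standard and either appear in or follow immediately from the results cited in this appendix.
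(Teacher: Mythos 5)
The paper does not actually prove this lemma; it imports it verbatim by citation from \cite{LevAthTanLyzPri2017}, so there is no in-paper proof to compare against. Your argument is correct and is essentially the same as the one in that cited source: identify $Q$ as the orthogonal polar factor so that $\|\UP^T\UA - Q\|_F^2 = \sum_i (1-\cos\theta_i)^2$, use the quadratic bound $1-\cos\theta_i \le \sin^2\theta_i$ to get $\|\UP^T\UA - Q\|_F \le \|\sin\Theta\|_F^2$, and then apply Davis--Kahan with the rank-$d$ spectral gap $\lambda_d(P) = \Omega(n)$ (Lemma~\ref{lem:Pspecgrowth}) and the concentration bound $\|A-P\| = O(\sqrt{n\log n})$, which is exactly how the $O(n^{-1}\log n)$ rate arises there as well.
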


\begin{lemma}[\cite{TanPri2018}, Proposition B.2]
\label{lem:LSE:Qdef}
With notation as above, let $\Vtilde_1 \Lambdatilde \Vtilde_2^T$
be the SVD of $\Utilde^T \Ucheck \in \bbR^{d \times d}$
and define $\Qtilde = \Vtilde_1 \Vtilde_2^T$.
Then
\begin{equation*}
\| \Utilde^T \Ucheck - \Qtilde \|_F = O( n^{-1} ).
\end{equation*}
\end{lemma}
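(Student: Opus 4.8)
The plan is to mirror the proof of the adjacency-spectral analogue, Lemma~\ref{lem:OMNI:Qdef}, exploiting the fact that $\Qtilde = \Vtilde_1\Vtilde_2^T$ is precisely the orthogonal matrix nearest to $M := \Utilde^T\Ucheck$ in Frobenius norm (the solution to the orthogonal Procrustes problem). Writing the SVD as $M = \Vtilde_1\Lambdatilde\Vtilde_2^T$ with $\Lambdatilde = \diag(\sigma_1,\dots,\sigma_d)$, one has $M - \Qtilde = \Vtilde_1(\Lambdatilde - I)\Vtilde_2^T$, so by unitary invariance $\|M - \Qtilde\|_F = \|\Lambdatilde - I\|_F = \big(\sum_{i=1}^d (1-\sigma_i)^2\big)^{1/2}$. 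Thus the entire problem reduces to showing that the singular values $\sigma_i$ of $M$ are, in the appropriate aggregate sense, within $O(n^{-1})$ of $1$.

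First I would identify the $\sigma_i$ as the cosines of the principal angles $\theta_i$ between the $d$-dimensional column spaces of $\Utilde$ and $\Ucheck$, these being the top-$d$ invariant subspaces of $\calL(P)$ and $\calL(A)$, respectively; since both matrices have orthonormal columns, $\sigma_i \in [0,1]$. Using the elementary bound $1 - \cos\theta = \sin^2\theta/(1+\cos\theta) \le \sin^2\theta$ for $\theta \in [0,\pi/2]$, together with $\sum_i a_i^2 \le (\sum_i a_i)^2$ for nonnegative $a_i$, I obtain $\|M - \Qtilde\|_F^2 = \sum_i (1-\sigma_i)^2 \le \sum_i \sin^4\theta_i \le \|\sin\Theta\|_F^4$, where $\sin\Theta$ collects the sines of the principal angles. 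Hence $\|M - \Qtilde\|_F \le \|\sin\Theta(\Utilde,\Ucheck)\|_F^2$, and it suffices to establish the subspace bound $\|\sin\Theta(\Utilde,\Ucheck)\|_F = O(n^{-1/2})$.

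To obtain this last bound I would invoke a Davis--Kahan $\sin\Theta$ theorem for the top-$d$ invariant subspaces of $\calL(A)$, viewed as a perturbation of $\calL(P)$, giving $\|\sin\Theta(\Utilde,\Ucheck)\|_F \le C\,\delta^{-1}\|\calL(A) - \calL(P)\|$, where $\delta$ is the gap between the $d$-th and $(d+1)$-th eigenvalues of $\calL(P)$. Arguing as in Lemma~\ref{lem:Pspecgrowth}, the top $d$ eigenvalues of $\calL(P) = T^{-1/2}PT^{-1/2}$ are $\Theta(1)$ while the remainder are negligible, so $\delta = \Theta(1)$ with probability $1$ for all suitably large $n$; combined with the concentration bound $\|\calL(A) - \calL(P)\| = O(n^{-1/2})$ (established in the course of proving the LSE $2$-to-$\infty$ bound in Lemma~\ref{lem:2toinfty}), this yields $\|\sin\Theta(\Utilde,\Ucheck)\|_F = O(n^{-1/2})$ and hence the claimed $O(n^{-1})$.

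The main obstacle is the perturbation bound $\|\calL(A) - \calL(P)\| = O(n^{-1/2})$ with the clean rate and no spurious logarithmic factors: unlike the adjacency case, the normalized Laplacian involves the random degree normalization $D^{-1/2}$, so one must simultaneously control the concentration of $A$ about $P$ and of the degrees $D$ about their conditional means $T$, and then combine them so that the degree fluctuations do not inflate the rate. This is precisely the delicate bookkeeping carried out in \cite{TanPri2018}, on whose machinery I would rely for the final quantitative step.
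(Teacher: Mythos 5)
Your algebraic reduction is correct and, for what it is worth, is almost certainly the skeleton of the cited argument itself: the paper gives no proof of this lemma at all, importing it verbatim as Proposition~B.2 of \cite{TanPri2018}, and that proposition is proved by exactly the route you describe (Procrustes optimality of $\Qtilde$, $\|\Utilde^T\Ucheck - \Qtilde\|_F = \|\Lambdatilde - I\|_F$, singular values as cosines of principal angles, $1-\sigma_i \le \sin^2\theta_i$, hence $\|\Utilde^T\Ucheck - \Qtilde\|_F \le \|\sin\Theta\|_F^2$, followed by a Davis--Kahan bound using the $\Theta(1)$ eigengap of $\calL(P)$, which here is justified by Lemma~\ref{lem:lapspecgrowth}). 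Each of those steps is fine, and the analogy with Lemma~\ref{lem:OMNI:Qdef} is the right instinct.

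The genuine gap is in the last quantitative step, and it contains a concrete misattribution. You claim the log-free bound $\|\calL(A)-\calL(P)\| = O(n^{-1/2})$ is ``established in the course of proving the LSE $2$-to-$\infty$ bound in Lemma~\ref{lem:2toinfty}.'' It is not: that proof is a row-wise argument (following Appendix~B.1 of \cite{TanPri2018}) and never controls the spectral norm of $\calL(A)-\calL(P)$. The only such bound appearing anywhere in this paper is in the proof of Lemma~\ref{lem:Scheck}, via Theorem~3.1 of \cite{Oliveira2010}, and it carries a logarithmic factor: $\|\calL(A)-\calL(P)\| = O(n^{-1/2}\log^{1/2} n)$. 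Feeding that into your chain gives only $\|\Utilde^T\Ucheck - \Qtilde\|_F = O(n^{-1}\log n)$, which falls short of the stated rate. So as written, your argument proves a weaker statement; closing the gap requires precisely the sharp, log-free Laplacian concentration (the delicate simultaneous control of $A-P$ and of the degree normalization) that constitutes the substance of the Tang--Priebe result you are trying to reprove. Deferring that step to ``the machinery of \cite{TanPri2018}'' is therefore circular rather than a citation to an independent auxiliary fact: the hard part of Proposition~B.2 is exactly the part you have left out.
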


\begin{lemma}[\cite{lyzinski15_HSBM} Lemma 15; \cite{TanPri2018} Lemma B.3]
\label{lem:switching} \label{lem:approxcommute}
With notation as above,
\begin{equation*} \begin{aligned}
\left\| \Ucheck^T \Utilde \Stilde^{1/2} - \Scheck^{1/2} \Ucheck^T \Utilde
	\right\|
&= O(n^{-1}), \\
\left\| \Uhat^T U \SP^{-1/2} - \SA^{-1/2} \Uhat^T U
	\right\|_F
&= O( n^{-3/2} \log n ) \text{ and } \\
\left\| \Uhat^T U \SP^{1/2} - \SA^{1/2} \Uhat^T U
	\right\|_F
&=  O( n^{-1/2} \log n ).
\end{aligned} \end{equation*}
\end{lemma}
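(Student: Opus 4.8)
The plan is to treat all three displays within a single framework, reducing each to (i) a ``level-one'' commutator estimate for the eigenvalue matrices themselves and (ii) a Sylvester-equation argument that upgrades this to the square-root and inverse-square-root versions that appear in the statement. First I would record the common algebraic identity. In each case the quantity of interest has the form $W D_2^{1/2} - D_1^{1/2} W$, where $W$ intertwines the truncated eigendecompositions of two symmetric matrices $M_1 = U_1 D_1 U_1^T$ and $M_2 = U_2 D_2 U_2^T$: for the third display $M_1 = A$, $M_2 = P$, $W = \Uhat^T U$, $D_1 = \SA$, $D_2 = \SP$, while for the first display $M_1 = \calL(A)$, $M_2 = \calL(P)$, $W = \Ucheck^T\Utilde$. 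Using the eigen-identities $U_1^T M_1 = D_1 U_1^T$ and $M_2 U_2 = U_2 D_2$, the level-one commutator collapses to a projected noise term,
\begin{equation*}
W D_2 - D_1 W = U_1^T (M_2 - M_1) U_2 .
\end{equation*}

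Second, I would prove the substantive estimate: that this projected noise is far smaller than the operator norm of $M_2 - M_1$. Writing $U_1 = U_2 Q + \Delta$, where $Q$ is the orthogonal matrix of Lemma~\ref{lem:OMNI:Qdef} (resp.\ Lemma~\ref{lem:LSE:Qdef}) and $\Delta$ is the Davis--Kahan residual, I would split
\begin{equation*}
U_1^T (M_2 - M_1) U_2 = Q^T U_2^T (M_2 - M_1) U_2 + \Delta^T (M_2 - M_1) U_2 .
\end{equation*}
The first summand is controlled by the concentration of the bilinear form $U_2^T(M_2-M_1)U_2$, whose entries are quadratic forms in the independent bounded entries of the noise; a Bernstein/Hoeffding bound yields $\Otilde(1)$ for ASE and $\Otilde(n^{-1})$ for LSE. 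The second is at most $\|\Delta\|\,\|M_2 - M_1\|$, where $\|\Delta\| = \Otilde(n^{-1/2})$ by Davis--Kahan together with the spectral-gap estimate of Lemma~\ref{lem:Pspecgrowth}, while $\|A - P\| = \Otilde(\sqrt{n})$ (resp.\ $\|\calL(A) - \calL(P)\| = \Otilde(n^{-1/2})$); this again gives $\Otilde(1)$ for ASE and $\Otilde(n^{-1})$ for LSE.

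Third, I would lift from the level-one commutator to the square-root version through the Sylvester identity
\begin{equation*}
D_1^{1/2}\bigl(W D_2^{1/2} - D_1^{1/2} W\bigr) + \bigl(W D_2^{1/2} - D_1^{1/2} W\bigr) D_2^{1/2} = W D_2 - D_1 W .
\end{equation*}
Since $D_1, D_2$ are diagonal this solves entrywise, so each entry of $W D_2^{1/2} - D_1^{1/2} W$ is the corresponding entry of $W D_2 - D_1 W$ divided by a sum of two square-root eigenvalues. By Lemma~\ref{lem:Pspecgrowth} these sums are $\Theta(\sqrt{n})$ for ASE (an extra $n^{-1/2}$, giving $\Otilde(n^{-1/2})$ for the third display) and $\Theta(1)$ for LSE (giving $\Otilde(n^{-1})$ for the first display). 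Finally, the inverse-power second display follows from the third by the identity $\Uhat^T U \SP^{-1/2} - \SA^{-1/2}\Uhat^T U = -\SA^{-1/2}\bigl(\Uhat^T U \SP^{1/2} - \SA^{1/2}\Uhat^T U\bigr)\SP^{-1/2}$ together with $\|\SA^{-1/2}\| = \|\SP^{-1/2}\| = \Theta(n^{-1/2})$, contributing two more factors of $n^{-1/2}$ to reach $\Otilde(n^{-3/2})$.

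The main obstacle is the projected-noise estimate of the second step: the crude bound $\|U_1^T(M_2 - M_1)U_2\| \le \|M_2 - M_1\|$ is off by a factor of $\sqrt{n}$ for ASE and would propagate to a useless $\Otilde(1)$ final bound. Recovering the missing $\sqrt{n}$ requires exploiting the cancellation within the signal subspace, via quadratic-form concentration for the aligned part and Davis--Kahan control of the residual $\Delta$. The LSE case is more delicate, since it demands the sharper projected bound $\|\Utilde^T(\calL(A) - \calL(P))\Utilde\| = \Otilde(n^{-1})$ and careful bookkeeping of the degree-matrix scalings hidden inside $\calL$; I expect this to be the most technically involved part of the argument.
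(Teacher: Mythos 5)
First, note that the paper does not prove this lemma at all: it is imported verbatim from \cite{lyzinski15_HSBM} (the two ASE displays) and \cite{TanPri2018} (the LSE display), so your proposal is a self-contained substitute for a citation rather than a variant of an in-paper argument. Your algebraic skeleton is correct: since $\Uhat^T A = \SA \Uhat^T$ and $P\UP = \UP\SP$, one indeed has $W\SP - \SA W = \Uhat^T(P-A)\UP$ for $W = \Uhat^T\UP$; the Sylvester identity solves entrywise with denominators $\lambda_i^{1/2}(A) + \lambda_j^{1/2}(P) = \Theta(\sqrt{n})$ (Lemma~\ref{lem:Pspecgrowth} plus Weyl); and the inverse-square-root identity is exact. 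For the two ASE displays the plan also works quantitatively: the aligned term $\UP^T(A-P)\UP$ is an honest independent-entry quadratic form, hence $O(\log^{1/2} n)$ entrywise by Hoeffding, the residual term is $\| \UA - \UP Q \| \, \| A - P \| = O( n^{-1/2}\log^{1/2} n)\cdot O(n^{1/2}\log^{1/2} n) = O(\log n)$, and dividing by $\Theta(\sqrt{n})$ and then multiplying by $\| \SA^{-1/2} \| \| \SP^{-1/2} \| = \Theta(n^{-1})$ reproduces the stated $O(n^{-1/2}\log n)$ and $O(n^{-3/2}\log n)$ exactly.

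The gap is in the LSE display. The claim that ``a Bernstein/Hoeffding bound yields $\Otilde(n^{-1})$'' for $\Utilde^T(\calL(A)-\calL(P))\Utilde$ does not go through as stated, because the entries of $\calL(A)-\calL(P)$ are \emph{not} independent: every entry of $D^{-1/2}AD^{-1/2}$ depends on all of $A$ through the degrees. Concentration can only be applied after the decomposition
\begin{equation*}
\calL(A)-\calL(P) = T^{-1/2}(A-P)T^{-1/2}
 + \left( D^{-1/2}AD^{-1/2} - T^{-1/2}AT^{-1/2} \right),
\end{equation*}
whose first piece is a genuine independent-entry quadratic form (coefficients of size $\Theta(n^{-1})$, giving $\Otilde(n^{-1})$ as you claim), but whose second, degree-correction piece has spectral norm only $\Otilde(n^{-1/2})$: by Lemma~\ref{lem:degreegrowth}, $\| D^{-1/2}-T^{-1/2} \| = \Otilde(n^{-1})$, while $\| A \| \| D^{-1/2} \| = \Theta(n^{1/2})$. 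So the projection of this piece must itself be shown to enjoy an extra $\sqrt{n}$ of cancellation, which requires the second-order expansion $d_i^{-1/2} = t_i^{-1/2} - (d_i-t_i)/(2t_i^{3/2}) + O\left( (d_i-t_i)^2 t_i^{-5/2} \right)$ and fresh concentration for the resulting linear forms in $A-P$; this is precisely the technical core of the Tang--Priebe appendix, and it is absent from your sketch (your closing sentence concedes the difficulty, but conceding is not proving). A secondary discrepancy: every probabilistic ingredient you invoke (Oliveira's bound, Hoeffding with union bounds and Borel--Cantelli) carries logarithmic factors, so your route ends at $O(n^{-1}\log n)$, whereas the first display as cited is the log-free $O(n^{-1})$; this is harmless for the paper's downstream uses of the lemma, but it is not the stated bound.
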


\begin{lemma} \label{lem:degreegrowth}
Let $F$ be a $d$-dimensional inner-product distribution and
let $(A,X) \sim \RDPG(F,n)$, and let $v$ be the out-of-sample vertex
with latent position $\wtrue \in \supp F$.
For $i \in [n]$, let $d_i = \sum_j A_{i,j}$ denote the degree of vertex $i$
and $t_i = \sum_j X_j^T X_i = \E[d_i \vert X]$ denote its expectation
conditional on the latent positions.
Analogously,
let $d_v = \sum_j a_j$ denote the degree of the out-of-sample vertex
and $t_v =  \sum_j X_j^T \wtrue$ denote its expectation.
Then
\begin{equation} \label{eq:degree:hoeff}
\max \left\{ |d_i - t_i| : i \in [n] \cup \{ v \} \right\}
	= O( \sqrt{n} \log^{1/2} n ).
\end{equation}
Similarly, letting $\mu = \E X_1 \in \bbR^d$ denote the mean
of latent position distribution $F$ and taking $X_v = \wtrue$,
\begin{equation} \label{eq:degree:hoeff2}
\max \left\{ |t_i - n \mu^T X_i | : i \in [n] \cup \{v\} \right\}
= O( \sqrt{n} \log^{1/2} n ).
\end{equation}
Further, uniformly over all $i \in [n]$,
\begin{align}
| d_i^{-1/2} - t_i^{-1/2} | &= O( n^{-1} \log^{1/2} n ),
	\label{eq:degree:recsqrt} \\ 
| d_i^{-1} - t_i^{-1} | &= O( n^{-3/2} \log^{1/2} n )
	\label{eq:degree:inverses}, \\
t_i &= \Theta( n ) \label{eq:degree:linear}
\end{align} 
\end{lemma}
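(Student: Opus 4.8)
The plan is to establish the five claims in the order \eqref{eq:degree:hoeff}, \eqref{eq:degree:hoeff2}, \eqref{eq:degree:linear}, and finally \eqref{eq:degree:recsqrt} and \eqref{eq:degree:inverses}, since the last two are consequences of the first three together with an elementary linearization. Throughout I would work conditionally on the latent positions $X$ where convenient and then integrate out, and I would obtain every bound by first proving a finite-$n$ tail bound of the form $\Pr[\,\cdot\,] \le Cn^{-(1+\epsilon)}$ and then invoking Borel--Cantelli exactly as described in Section~\ref{subsec:notation}; note that only summability, not independence, is needed for the almost-sure eventual bounds.

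For \eqref{eq:degree:hoeff}, fix a vertex $i \in [n]\cup\{v\}$. Conditional on $X$, the degree $d_i = \sum_j A_{i,j}$ is a sum of independent $\Bernoulli(X_j^T X_i)$ random variables with conditional mean $t_i$ (the $j=i$ term contributes $O(1)$ and is harmless). Hoeffding's inequality gives $\Pr[|d_i - t_i| \ge C\sqrt{n\log n}\mid X] \le 2\exp(-2C^2\log n)$, which for $C$ large is $o(n^{-2})$; a union bound over the $n+1$ vertices and Borel--Cantelli then yield the claimed uniform bound. For \eqref{eq:degree:hoeff2} I would write $t_i - n\mu^T X_i = X_i^T\!\left(\sum_j X_j - n\mu\right) = n\,X_i^T(\bar X - \mu)$, where $\bar X = n^{-1}\sum_j X_j$. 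Since $\|X_i\| \le 1$ on $\supp F$, Cauchy--Schwarz bounds this by $n\|\bar X - \mu\|$, and a coordinatewise Hoeffding bound for the i.i.d.\ bounded vectors $X_j$ gives $\|\bar X - \mu\| = O(n^{-1/2}\log^{1/2} n)$ uniformly in $i$, establishing \eqref{eq:degree:hoeff2}.

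Claim \eqref{eq:degree:linear} then follows: the upper bound $t_i \le n$ is immediate from $X_i^T X_j \le 1$, while the lower bound $t_i = \Omega(n)$ comes from combining \eqref{eq:degree:hoeff2} with a uniform lower bound $\inf_{x\in\supp F}\mu^T x > 0$. This positivity is where the full-rank assumption on $\Delta$ enters: $\mu^T x = \E[x^T X_1] = 0$ would force $x^T\Delta x = \E[(x^T X_1)^2]=0$, contradicting full rank for $x\neq 0$, and compactness of $\supp F$ upgrades pointwise positivity to a uniform bound (under the $\eta$ condition used in the LSE results the bound $\mu^T x \ge \eta$ is immediate).

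Finally, for \eqref{eq:degree:recsqrt} and \eqref{eq:degree:inverses} I would apply the mean value theorem to $f(x)=x^{-1/2}$ and $f(x)=x^{-1}$ respectively. Writing $|f(d_i)-f(t_i)| = |f'(\xi_i)|\,|d_i - t_i|$ for some $\xi_i$ between $d_i$ and $t_i$, note that \eqref{eq:degree:linear} gives $t_i = \Theta(n)$ while \eqref{eq:degree:hoeff} gives $d_i = t_i + O(\sqrt{n\log n}) = \Theta(n)$, so that $\xi_i = \Theta(n)$ uniformly. Hence $|f'(\xi_i)| = O(n^{-3/2})$ for the inverse square root and $O(n^{-2})$ for the inverse, and multiplying by $|d_i - t_i| = O(\sqrt{n\log n})$ yields the stated rates. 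The main obstacle is maintaining uniformity over all $n+1$ vertices simultaneously while keeping the denominators bounded below by $\Theta(n)$: this is precisely what lets the linearizations in the last step be applied with a single deterministic constant, and it is underwritten by the uniform lower bound on $t_i$ established via the full rank of $\Delta$.
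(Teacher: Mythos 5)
Your proposal is correct and follows essentially the same route as the paper's proof: Hoeffding's inequality plus a union bound and Borel--Cantelli for \eqref{eq:degree:hoeff} and \eqref{eq:degree:hoeff2}, the lower bound $t_i = \Omega(n)$ via \eqref{eq:degree:hoeff2}, and then \eqref{eq:degree:recsqrt}--\eqref{eq:degree:inverses} by a deterministic linearization (the paper writes out the difference-of-reciprocals identity where you invoke the mean value theorem, which is equivalent). Your two small variations --- bounding $t_i - n\mu^T X_i$ by $n\|\bar{X}-\mu\|$ via Cauchy--Schwarz instead of the paper's conditioning on $X_i$, and making explicit the positivity of $\inf_{x \in \supp F} \mu^T x$ that the paper leaves implicit --- are benign and, if anything, slightly cleaner.
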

\begin{proof}
Fix some $i \in [n] \cup \{v\}$. By definition, we have
\begin{equation*}
d_i - t_i = \begin{cases}
	 \sum_{j \neq i} (A_{i,j} - P_{i,j})
		&\mbox{ if } i \in [n] \\
	\sum_{j=1}^n a_j - X_j^T\wtrue
		&\mbox{ if } i =v,
	\end{cases}
\end{equation*}
a sum of independent random variables, each contained in $[-1,1]$
and thus Hoeffding's inequality immediately yields
\begin{equation*}
\Pr[ |d_i - t_i| \ge s ]
\le 2\exp\left\{ \frac{ -2s^2 }{ n } \right\}
\end{equation*}
for any $s \ge 0$.
Taking $s=C \sqrt{n} \log^{1/2} n$ for suitably large constant $C > 0$,
we have
\begin{equation*}
\Pr\left[ |d_i - t_i | \ge C \sqrt{n} \log^{1/2} n \right] \le C' n^{-3}.
\end{equation*}
Taking a union bound over all $i \in [n] \cup \{v\}$, we conclude that
\begin{equation*}
\Pr\left[ \exists i : |d_i - t_i| \ge C\sqrt{n} \log^{1/2} n \right]
	\le C n^{-2},
\end{equation*}
and an application of the Borel-Cantelli Lemma \citep{Billingsley1995}
yields Equation~\eqref{eq:degree:hoeff}.

Again by definition, we have for any $i \in [n] \cup \{v\}$,
\begin{equation*}
t_i - n X_i^T \mu 
= X_i^T( X_i - \mu) + \sum_{j \neq i} X_i^T (X_j - \mu).
\end{equation*}
The first term on the right-hand side is $O(1)$, since $X_i \sim F$
and $\mu$ is constant.
The sum over $j \neq i$ is, conditioned on $X_i$,
a sum of independent unbiased random variables,
which are bounded by the assumption that
$0 \le x^T y \le 1$ whenever $x,y \in \supp F$.
Thus, an application of Hoeffding's inequality similar to that above
yields that, conditioned on $X_i=x_i \in \supp F$,
\begin{equation*}
\sum_{j \neq i} x_i^T (X_j - \mu) \le C \sqrt{n} \log^{1/2} n,
\end{equation*}
where the constant $C$ can be chosen independent of $x_i$
again because $\supp F$ is bounded.
Unconditioning establishes Equation~\eqref{eq:degree:hoeff2},
since $X_i^T(X_i - \mu) = O(1)$.
\eqref{eq:degree:linear} follows,
since $t_i = n X_i^T \mu + O(\sqrt{n} \log^{1/2} n)$.
Writing
\begin{equation*}
\left| \frac{1}{\sqrt{d_i}} - \frac{1}{\sqrt{t_i}} \right|
= \frac{ \left| d_i - t_i \right| }
	{ \sqrt{d_i} \sqrt{t_i} (\sqrt{d_i} + \sqrt{t_i}) }
\end{equation*}
and applying Equations~\eqref{eq:degree:hoeff2} and~\eqref{eq:degree:linear}
implies~\eqref{eq:degree:recsqrt}.
A similar argument establishes~\eqref{eq:degree:inverses}.
\end{proof}

\begin{lemma} \label{lem:lapspecgrowth}
Let $P = X X^T \in \bbR^{n \times n}$ with rows of $X$ drawn i.i.d.\ from $F$
as above. Then
\begin{equation} 
\lambda_d( \calL(P) ) = \Theta(1),
\lambda_1( \calL(P) ) = \Theta(1)
\text{ and }
\lambda_d( \Xtilde ) = \Theta(1).
\end{equation}
\end{lemma}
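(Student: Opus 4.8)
The plan is to reduce the eigenvalues of the $n \times n$ matrix $\calL(P)$ to those of a $d \times d$ Gram matrix, and then to sandwich that Gram matrix between scalar multiples of $X^T X$, whose spectrum is already controlled by Lemma~\ref{lem:Pspecgrowth}. First I would write $\calL(P) = T^{-1/2} P T^{-1/2} = (T^{-1/2} X)(T^{-1/2} X)^T = Y Y^T$, where $Y = T^{-1/2} X \in \bbR^{n \times d}$ and $T$ is the diagonal matrix of expected degrees, $T_{i,i} = t_i$. Since $t_i = \Theta(n) > 0$ by Equation~\eqref{eq:degree:linear}, $T^{-1/2}$ is a well-defined invertible diagonal matrix for all suitably large $n$, so $Y$ has the same full column rank $d$ as $X$ (by Lemma~\ref{lem:Pspecgrowth}, $\sigma_d(X) = \Theta(\sqrt{n}) > 0$). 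The nonzero eigenvalues of $\calL(P) = Y Y^T$ therefore coincide with the eigenvalues of the $d \times d$ matrix
\begin{equation*}
M := Y^T Y = X^T T^{-1} X = \sum_{i=1}^n \frac{X_i X_i^T}{t_i},
\end{equation*}
so that $\lambda_k(\calL(P)) = \lambda_k(M)$ for $k = 1, \dots, d$ while all remaining eigenvalues of $\calL(P)$ vanish.

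Next I would exploit the uniform degree control. By Equation~\eqref{eq:degree:linear} there are constants $0 < c_1 < c_2$ such that $c_1 n \le t_i \le c_2 n$ for every $i \in [n]$, eventually. This yields the Loewner sandwich
\begin{equation*}
\frac{1}{c_2 n} X^T X \preceq M \preceq \frac{1}{c_1 n} X^T X,
\end{equation*}
since $M - (c_2 n)^{-1} X^T X = \sum_i (t_i^{-1} - (c_2 n)^{-1}) X_i X_i^T \succeq 0$ and similarly for the upper bound. By Weyl's monotonicity theorem for the Loewner order, $(c_2 n)^{-1} \lambda_k(X^T X) \le \lambda_k(M) \le (c_1 n)^{-1} \lambda_k(X^T X)$ for each $k$. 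Since $\lambda_k(X^T X) = \sigma_k(X)^2 = \Theta(n)$ for $k = 1$ and $k = d$ by Lemma~\ref{lem:Pspecgrowth}, it follows that $\lambda_1(M) = \Theta(1)$ and $\lambda_d(M) = \Theta(1)$, and hence $\lambda_1(\calL(P)) = \Theta(1)$ and $\lambda_d(\calL(P)) = \Theta(1)$.

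Finally, for the claim about $\Xtilde$, I would use that $\Xtilde = \Utilde \Stilde^{1/2}$ with $\Utilde$ having orthonormal columns, so $\Xtilde^T \Xtilde = \Stilde^{1/2} \Utilde^T \Utilde \Stilde^{1/2} = \Stilde$ is exactly the diagonal matrix of the top $d$ eigenvalues of $\calL(P)$. Consequently $\sigma_d(\Xtilde)^2 = \lambda_d(\Xtilde^T \Xtilde) = \lambda_d(\Stilde) = \lambda_d(\calL(P)) = \Theta(1)$, giving $\lambda_d(\Xtilde) = \Theta(1)$, interpreting $\lambda_d(\Xtilde)$ as the $d$-th singular value of the rectangular matrix $\Xtilde$, consistent with the usage in Lemma~\ref{lem:Pspecgrowth}.

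The main obstacle, and really the only nontrivial point, is the uniform statement $t_i = \Theta(n)$ over all $i$ simultaneously, which is what makes the scalar sandwich legitimate; this is precisely the content of Equation~\eqref{eq:degree:linear}, and ultimately rests on the degree concentration in Lemma~\ref{lem:degreegrowth} together with the lower bound $x^T y \ge \eta$ that forces $X_i^T \mu$ away from $0$. Everything else is the standard $Y Y^T$ versus $Y^T Y$ spectral identity and eigenvalue monotonicity under the Loewner order.
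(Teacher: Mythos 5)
Your proof is correct, and it reaches the conclusion by a somewhat different linear-algebraic route than the paper. The paper works directly with the $n \times n$ matrix: the upper bound comes from submultiplicativity, $\| \calL(P) \| \le \| T^{-1/2} \| \, \| \SP \| \, \| T^{-1/2} \| \le \| \SP \| / \min_i t_i$, and the lower bound from an Ostrowski-type congruence inequality adapted from Theorem 8.1.17 of \cite{GolVan2012}, namely $\lambda_1(T)\,\lambda_d(\calL(P)) \ge \lambda_d(P)$, combined with the deterministic bound $\lambda_1(T) \le n$. You instead pass to the $d \times d$ Gram matrix $M = X^T T^{-1} X$ via the $YY^T$ versus $Y^T Y$ identity, sandwich it in the Loewner order, $(c_2 n)^{-1} X^T X \preceq M \preceq (c_1 n)^{-1} X^T X$, and invoke eigenvalue monotonicity. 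Both arguments rest on exactly the same probabilistic inputs --- Lemma~\ref{lem:Pspecgrowth} for $\lambda_k(X^T X) = \Theta(n)$ and Lemma~\ref{lem:degreegrowth} for the uniform control $t_i = \Theta(n)$ --- so the difference is purely in packaging. What your route buys: it is self-contained (no appeal to the congruence theorem), and it delivers two-sided $\Theta(1)$ bounds for \emph{all} $d$ nontrivial eigenvalues in a single step rather than splitting into separate upper- and lower-bound arguments. What the paper's route buys: it never needs the rank/Gram reduction or full column rank of $X$, only norm bounds on $T$ and $P$. Your handling of the final claim ($\Xtilde^T \Xtilde = \Stilde$, hence $\sigma_d(\Xtilde)^2 = \lambda_d(\calL(P))$) coincides with the paper's, and your closing remark correctly identifies that the uniform lower bound $t_i = \Omega(n)$ is the one place where a positivity condition on inner products (forcing $X_i^T \mu$ away from zero) is genuinely needed.
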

\begin{proof}
By definition,
$ \calL(P) = T^{-1/2} \UP \SP \UP^T T^{-1/2}$, so that
\begin{equation*}
\lambda_d( \calL(P) ) \le \lambda_1(\calL(P))
\le \| \calL(P) \| \le \| T^{-1/2} \| \| \SP \| \| T^{-1/2} \|
        \le \frac{ \| \SP \| }{ \min_i t_i } \le C,
\end{equation*}
where the last inequality follows from
Lemmas~\ref{lem:Pspecgrowth} and~\ref{lem:degreegrowth}.

To show the corresponding lower-bound, we adapt an argument from
the proof of Theorem 8.1.17 in \cite{GolVan2012} to write
\begin{equation*}
\lambda^2_1(T^{1/2})\lambda_d( \calL(P) )
        \ge \lambda_d( P ) \ge Cn,
\end{equation*}
where the second lower-bound follows from Lemma~\ref{lem:Pspecgrowth}.
We conclude that
\begin{equation*}
\lambda_1(\calL(P))
	\ge \lambda_d(\calL(P)) \ge \frac{ Cn }{ \lambda_1(T) } \ge C,
\end{equation*}
since $\lambda_1^2(T^{1/2}) = \lambda_1(T) \le n$.

By definition of $\Xtilde$,
$\lambda_k( \Xtilde ) = \sqrt{ \lambda_k( \calL(P) )}$ for all $k \in [d]$,
whence $\lambda_d( \Xtilde ) = \Theta(1)$
\end{proof}

\begin{lemma} \label{lem:Scheck}
Let $F$ be a $d$-dimensional inner-product distribution with
mean $\mu$ and suppose that there exists a constant $\eta > 0$
such that $\eta \le x^Ty \le 1-\eta$ for all $x,y \in \supp F$.
Define $\Deltatilde = \E X_1 X_1^T / X_1^T \mu $ where $X_1 \sim F$
and let $\Scheck = \Xcheck^T \Xcheck$ and $\Stilde = \Xtilde^T \Xtilde$.
Then
\begin{equation*}
\| \Qtilde \Scheck \Qtilde^T - \Stilde\|
	= O\left( \frac{1}{n} \right)
\text{ and }
\| \Stilde - \Deltatilde \|
	= O\left( \frac{ \log^{1/2} n }{ \sqrt{n} } \right).
\end{equation*}
\end{lemma}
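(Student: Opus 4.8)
The statement bundles two estimates of very different character, and I would prove them separately. The second, $\|\Stilde - \Deltatilde\| = O(n^{-1/2}\log^{1/2} n)$, is essentially a law-of-large-numbers statement and is the easier of the two. The first, $\|\Qtilde\Scheck\Qtilde^T - \Stilde\| = O(n^{-1})$, asks for the faster $n^{-1}$ rate and is where the real work lies: a naive perturbation bound through $\|\calL(A) - \calL(P)\|$ would only deliver $O(n^{-1/2}\log^{1/2}n)$, so the gain must come from exploiting that the noise is projected onto the $d$-dimensional signal subspaces.

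For the second estimate I would work with the representation $\Xtilde = T^{-1/2}X$, so that $\Stilde = \Xtilde^T\Xtilde = X^T T^{-1} X = \sum_{i=1}^n t_i^{-1} X_i X_i^T$, with $t_i = \sum_j X_j^T X_i$, and compare against $\tfrac1n\sum_i (\mu^T X_i)^{-1} X_i X_i^T$ in two steps. First, Lemma~\ref{lem:degreegrowth} (specifically \eqref{eq:degree:hoeff2} and \eqref{eq:degree:linear}) gives $|t_i^{-1} - (n\mu^T X_i)^{-1}| = O(n^{-3/2}\log^{1/2}n)$ uniformly in $i$; since each $X_i X_i^T$ is bounded, summing $n$ such terms yields $\|\sum_i (t_i^{-1} - (n\mu^TX_i)^{-1}) X_i X_i^T\| = O(n^{-1/2}\log^{1/2}n)$. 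Second, because the $\eta$-assumption forces $\mu^T X_i \ge \eta > 0$, the summands $(\mu^T X_i)^{-1} X_i X_i^T$ are i.i.d.\ and bounded, so a standard concentration argument (matrix Hoeffding/Bernstein, or an entrywise scalar Hoeffding bound with a union bound) gives $\|\tfrac1n\sum_i (\mu^TX_i)^{-1}X_iX_i^T - \Deltatilde\| = O(n^{-1/2}\log^{1/2}n)$, where $\Deltatilde = \E[X_1X_1^T/\mu^TX_1]$. A triangle inequality finishes this part.

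For the first estimate I would set $R = \Ucheck^T\Utilde$ and proceed purely algebraically, using the $O(n^{-1})$-rate tools. Lemma~\ref{lem:LSE:Qdef} gives $R = \Qtilde^T + O(n^{-1})$, so that $\Qtilde\Scheck\Qtilde^T = R^T\Scheck R + O(n^{-1})$ (using $\|\Scheck\| = O(1)$). Writing $\Scheck = \Scheck^{1/2}\Scheck^{1/2}$ and applying the first switching relation of Lemma~\ref{lem:switching}, namely $\Scheck^{1/2}R = R\Stilde^{1/2} + O(n^{-1})$, together with its transpose $R^T\Scheck^{1/2} = \Stilde^{1/2}R^T + O(n^{-1})$, moves both factors of $\Scheck^{1/2}$ past $R$:
\begin{equation*}
R^T\Scheck R = R^T\Scheck^{1/2}\,\Scheck^{1/2}R = \Stilde^{1/2}R^T R\,\Stilde^{1/2} + O(n^{-1}).
\end{equation*}
Finally $R^T R = \Qtilde\Qtilde^T + O(n^{-1}) = I + O(n^{-1})$, again by Lemma~\ref{lem:LSE:Qdef}, so $\Stilde^{1/2}R^TR\Stilde^{1/2} = \Stilde + O(n^{-1})$, and combining the two displays yields $\|\Qtilde\Scheck\Qtilde^T - \Stilde\| = O(n^{-1})$. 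Throughout I would use that $\|\Stilde^{1/2}\| = O(1)$ (Lemma~\ref{lem:lapspecgrowth}), $\|R\| \le 1$, and $\|\Scheck^{1/2}\| = O(1)$; the last of these I would record as a preliminary, since it follows from Lemma~\ref{lem:lapspecgrowth} and Weyl's inequality once the eigenvalues of $\calL(A)$ are shown to concentrate on those of $\calL(P)$.

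The crux is the first estimate, and the single point on which it turns is the switching Lemma~\ref{lem:switching}: it is precisely this relation that encodes the extra factor of $n^{-1/2}$ gained by working with the projected noise $\Utilde^T(\calL(A) - \calL(P))\Ucheck$ rather than $\calL(A) - \calL(P)$ itself, and without it one cannot improve on the $O(n^{-1/2}\log^{1/2}n)$ rate. A secondary bookkeeping point worth care is that the two estimates naturally invoke the two (orthogonally equivalent) representations of $\Xtilde$ used in the text --- the eigendecomposition $\Utilde\Stilde^{1/2}$ for the switching algebra and $T^{-1/2}X$ for the law-of-large-numbers step --- so one should fix the orthogonal gauge once and verify that the spectral quantities entering the bounds are gauge-invariant.
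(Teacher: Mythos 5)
Your proof is correct and follows essentially the same route as the paper's: the second bound is handled identically (Lemma~\ref{lem:degreegrowth} plus Hoeffding applied to $\sum_i X_iX_i^T/t_i$), and the first bound rests on exactly the same three ingredients the paper uses---Lemma~\ref{lem:LSE:Qdef}, the switching relation of Lemma~\ref{lem:switching}, and the $O(1)$ bounds on $\|\Scheck^{1/2}\|$ and $\|\Stilde^{1/2}\|$ obtained from concentration of the eigenvalues of $\calL(A)$ about those of $\calL(P)$. The only difference is cosmetic: you commute both factors of $\Scheck^{1/2}$ past $R=\Ucheck^T\Utilde$ and then use $R^TR\approx I$, whereas the paper telescopes through the single quantity $\|\Qtilde\Scheck^{1/2}-\Stilde^{1/2}\Qtilde\|$; the estimates invoked are identical.
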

\begin{proof}
Adding and subtracting appropriate quantities
and applying a triangle inequality followed by submultiplicativity,
we have
\begin{equation*} \begin{aligned}
\| \Qtilde \Scheck \Qtilde^T - \Stilde \|
&= \left\| \Qtilde \Scheck^{1/2}
	\left( \Scheck^{1/2} \Qtilde^T - \Qtilde^T \Stilde^{1/2} \right)
	+ \left( \Qtilde  \Scheck^{1/2} - \Stilde^{1/2} \Qtilde \right)
		\Qtilde^T \Stilde^{1/2} \right\| \\
&\le \left( \| \Qtilde \Scheck^{1/2} \| + \| \Qtilde^T \Stilde^{1/2} \| \right)
	\| \Qtilde  \Scheck^{1/2} - \Stilde^{1/2} \Qtilde \|,
\end{aligned} \end{equation*}
where we have used the unitary invariance of the spectral norm
to write
\begin{equation*}
\|\Qtilde  \Scheck^{1/2} - \Stilde^{1/2} \Qtilde \|
= \| \Scheck^{1/2} \Qtilde^T - \Qtilde^T \Stilde^{1/2} \|.
\end{equation*}
An additional application of the unitary invariance of the spectral norm yields
\begin{equation} \label{eq:QScheckQT-Stilde:1}
\| \Qtilde \Scheck \Qtilde^T - \Stilde \|
\le \left( \|  \Scheck^{1/2} \| + \| \Stilde^{1/2} \| \right)
	\| \Qtilde  \Scheck^{1/2} - \Stilde^{1/2} \Qtilde \|.
\end{equation}
By definition of $\Scheck$ and $\Stilde$ as the top $d$ eigenvalues of
$\calL(A)$ and $\calL(P)$, respectively, we have
\begin{equation*}
\| \Scheck - \Stilde \| \le \| \calL(A) - \calL(P) \|.
\end{equation*}
Theorem 3.1 in \cite{Oliveira2010} implies that
\begin{equation*}
\| \calL(A) - \calL(P) \|
\le C \left( \min_i t_i \right)^{-1/2} \log^{1/2} n,
\end{equation*}
and Lemma~\ref{lem:degreegrowth} implies that $\min_i t_i = \Omega(n)$,
so that
\begin{equation*}
\| \calL(A) - \calL(P) \| = O( n^{-1/2} \log^{1/2} n ),
\end{equation*}
and it follows that
\begin{equation*}
\| \Scheck^{1/2} \| \le = \| \Stilde^{1/2} \|\left(1 + o(1) \right).
\end{equation*}
Lemma~\ref{lem:lapspecgrowth} bounds the growth of $\| \Stilde \|$ as $O(1)$,
whence $\| \Scheck^{1/2} \| = O(1)$
and we conclude that
\begin{equation} \label{eq:QScheckQT-Stilde:specgrowth}
\|  \Scheck^{1/2} \| + \| \Stilde^{1/2} \| = O(1).
\end{equation}
Once again adding and subtracting appropriate quantities,
applying the triangle inequality folowed by submultiplicativity,
\begin{equation*} \begin{aligned}
\| \Qtilde  \Scheck^{1/2} - \Stilde^{1/2} \Qtilde \|
&\le \| (\Qtilde - \Utilde^T \Ucheck) \Scheck^{1/2} \|
	+ \| \Utilde^T \Ucheck  \Scheck^{1/2}
		- \Stilde^{1/2} \Utilde^T \Ucheck \|
	+ \| \Stilde^{1/2}( \Utilde^T \Ucheck - \Qtilde) \| \\
&\le \left( \| \Scheck^{1/2} \| + \| \Stilde^{1/2} \| \right)
	\| \Qtilde - \Utilde^T \Ucheck \|
	+ \| \Utilde^T \Ucheck  \Scheck^{1/2}
                - \Stilde^{1/2} \Utilde^T \Ucheck \|.
\end{aligned} \end{equation*}
Equation~\eqref{eq:QScheckQT-Stilde:specgrowth} and Lemma~\ref{lem:LSE:Qdef}
imply that
\begin{equation*}
 \left( \| \Scheck^{1/2} \| + \| \Stilde^{1/2} \| \right)
        \| \Qtilde - \Utilde^T \Ucheck \|
= O( n^{-1} ),
\end{equation*}
and Lemma~\ref{lem:approxcommute} implies that
\begin{equation*}
  \| \Utilde^T \Ucheck  \Scheck^{1/2}
                - \Stilde^{1/2} \Utilde^T \Ucheck \|
= O( n^{-1} ).
\end{equation*}
Combining the above two displays, we conclude that
\begin{equation*}
\| \Qtilde  \Scheck^{1/2} - \Stilde^{1/2} \Qtilde \|
= O( n^{-1} ).
\end{equation*}
Applying this and Equation~\eqref{eq:QScheckQT-Stilde:specgrowth} to
Equation~\eqref{eq:QScheckQT-Stilde:1}, we conclude that
$\| \Qtilde \Scheck \Qtilde^T - \Stilde \| = O(n^{-1}).$

To bound $\| \Stilde - \Deltatilde \|$, note that
\begin{equation*}
\Stilde = \sum_{i=1}^n \Xtilde_i \Xtilde_i^T
= \sum_{i=1}^n \frac{ X_i X_i^T }{ t_i }.
\end{equation*}
Applying Lemma~\ref{lem:degreegrowth},
$\max_i |t_i^{-1} - (nX_i^T \mu)^{-1}| = O( n^{-3/2} \log^{1/2} n)$,
and thus
\begin{equation*}
\Stilde = \frac{1}{n} \sum_{i=1}^n \frac{ X_i X_i^T }{ X_i^T \mu }
		+ O( n^{-1/2} \log^{1/2} n ).
\end{equation*}
Hoeffding's inequality applied to the sum implies
$\Stilde = \Deltatilde + O( n^{-1/2} \log^{1/2} n ),$
completing the proof.
\end{proof}

\begin{lemma} \label{lem:fullrank}
Suppose that $F$ is a $d$-dimensional inner-product distribution
with $X_1 \sim F$ for which
$\Delta = \E_F X_1 X_1^T \in \bbR^{d \times d}$ is full rank.
If $X_1,X_2,\dots,X_n \iid F$,
then with probability $1$ there exists an $n_0$ such that
$X \in \bbR^{n \times d}$ has full column rank for all $n \ge n_0$.
\end{lemma}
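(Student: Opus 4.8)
The plan is to reduce the claim to the positive-definiteness of the Gram matrix $X^T X \in \bbR^{d \times d}$ and then invoke the strong law of large numbers. First I would observe that $X \in \bbR^{n \times d}$ has full column rank if and only if its $d \times d$ Gram matrix $X^T X$ is nonsingular, equivalently if and only if its smallest eigenvalue $\lambda_d(X^T X)$ is strictly positive. Writing this Gram matrix as a sum over the rows,
\begin{equation*}
X^T X = \sum_{i=1}^n X_i X_i^T,
\end{equation*}
converts the problem into one about an average of i.i.d.\ rank-one matrices.

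Next I would apply the strong law of large numbers entrywise. Because $F$ is an inner product distribution, every $x \in \supp F$ satisfies $0 \le x^T x \le 1$, so $\supp F$ is contained in the unit ball and each entry of $X_i X_i^T$ is bounded in absolute value by $1$. Hence each of the $d^2$ coordinate sequences $\{ (X_i X_i^T)_{k\ell} \}_{i \ge 1}$ is an i.i.d.\ bounded sequence, and the strong law gives
\begin{equation*}
\frac{1}{n} X^T X = \frac{1}{n} \sum_{i=1}^n X_i X_i^T \convergesas \E X_1 X_1^T = \Delta.
\end{equation*}

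Finally I would pass from convergence of the matrices to convergence of their smallest eigenvalues. Weyl's inequality (equivalently, Lipschitz continuity of eigenvalues in the spectral norm) gives $| \lambda_d( n^{-1} X^T X ) - \lambda_d(\Delta) | \le \| n^{-1} X^T X - \Delta \|$, which tends to $0$ almost surely by the previous display. Since $\Delta$ is a full-rank positive semidefinite matrix, it is positive definite, so $\lambda_d(\Delta) > 0$. Therefore, with probability $1$, there exists $n_0$ such that $\lambda_d( n^{-1} X^T X ) > \lambda_d(\Delta)/2 > 0$ for all $n \ge n_0$, whence $X^T X$ is nonsingular and $X$ has full column rank for all such $n$. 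One may also note that once full column rank is attained, appending further rows cannot decrease the rank, so the property persists for all $n \ge n_0$ automatically.

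The argument is essentially routine, and I do not anticipate a genuine obstacle. The only point requiring care is the boundedness needed to justify the strong law and the step converting matrix convergence into smallest-eigenvalue convergence; the former is supplied immediately by the inner product distribution assumption, and the latter is a one-line application of Weyl's inequality, so neither is substantive.
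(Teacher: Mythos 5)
Your proof is correct, but it takes a different route from the paper's. The paper's proof is two lines: it observes that the eigenvalues of $X^T X$ are exactly the top $d$ eigenvalues of $P = XX^T$ and then cites Lemma~\ref{lem:Pspecgrowth} (itself imported from prior work) to get the quantitative bound $\lambda_d(X^T X) = \Omega(n)$ almost surely for all large $n$, from which invertibility is immediate. You instead give a self-contained argument: write $X^T X = \sum_{i=1}^n X_i X_i^T$, use boundedness of $\supp F$ (from the inner product distribution assumption) to apply the strong law of large numbers entrywise so that $n^{-1} X^T X \convergesas \Delta$, and then transfer matrix convergence to smallest-eigenvalue convergence via Weyl's inequality, using positive definiteness of the full-rank matrix $\Delta$. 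Both arguments are sound, and in fact yours recovers the same $\Omega(n)$ growth rate, since it shows $\lambda_d(X^T X) \ge n\lambda_d(\Delta)/2$ eventually. What the paper's route buys is brevity and consistency with the rest of the appendix, where Lemma~\ref{lem:Pspecgrowth} is used repeatedly as the workhorse spectral bound; what your route buys is independence from the external citation and a transparent use of the hypothesis that $\Delta$ is full rank, which in the paper's proof is absorbed invisibly into the cited lemma (whose constants depend on $F$ through exactly this kind of nondegeneracy).
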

\begin{proof}
Since the top $d$ eigenvalues of $P = X X^T$ are precisely the $d$
eigenvalues of $X^TX$, Lemma~\ref{lem:Pspecgrowth} implies that
$\lambda_d(X^T X) = \Omega(n)$.
It follows that $X^T X \in \bbR^{d \times d}$ is
invertible for all suitably large $n$.
\end{proof}

We now give a proof of the bound in Equation~\ref{eq:2toinfty:lse}
in Lemma~\ref{lem:2toinfty}.
\begin{proof}[Proof of Lemma~\ref{lem:2toinfty}]
Let $\zeta_i \in \bbR^d$ denote the (transposed) $i$-th row
of $\Xcheck - \Xtilde \Qtilde$, where $\Qtilde = \Vtilde_1 \Vtilde_2^T$
as in Lemma~\ref{lem:LSE:Qdef} above.
Define the event
\begin{equation*}
E_n = \left\{
	\forall i \in [n] : \| \zeta_i \| \le \frac{ C \log^{1/2} n }{ n }
	\right\}
\end{equation*}
where $C > 0$ is a constant that we will specify below,
depending on the latent position distribution $F$ but not on $n$.
It will suffice for us to show that $E_n$ holds eventually.

Fix some $i \in [n]$ and define
$\mu = \E X_1 \in \bbR^d$ to be the mean of $F$.
Following the argument in Appendix B.1 of \cite{TanPri2018}, we have
\begin{equation} \label{eq:LSE:B6B7}
\zeta_i = \frac{ (\Xtilde^T\Xtilde)^{-1} }{ n }
	\frac{ \sqrt{n} }{ \sqrt{ t_i } }
	\sum_{j \neq i} \frac{ A_{i,j} - P_{i,j} }{ \sqrt{n} }
		\left( \frac{X_j}{X_j^T\mu}
		- \frac{ \Deltatilde X_i}{2 X_i^T\mu} \right)
	+ o(n^{-1}).
\end{equation}
For all $j \in [n] \setminus \{ i \}$, define
\begin{equation*}
Z^{(i)}_j = \frac{ A_{i,j} - P_{i,j} }{ \sqrt{n} }
                \left( \frac{X_j}{X_j^T\mu}
                - \frac{ \Deltatilde X_i}{2 X_i^T\mu} \right).
\end{equation*}
Condition on $X_i = x_i \in \supp F$ and fix $k \in [d]$.
Thanks to the assumption that $0 < \eta \le x^Ty \le 1-\eta$
whenever $x,y \in \supp F$, we have that
$ \sum_{j \neq i} Z^{(i)}_{j,k} $ is a sum of independent $0$-mean bounded
random variables.
Hoeffding's inequality implies that 
\begin{equation} \label{eq:condhoeff1}
\Pr\left[ \left| \sum_{j \neq i} Z^{(i)}_{j,k} \right| \ge s
	\Big\vert X_i=x_i\right]
\le 2\exp\left\{ \frac{ -s^2 }{ 2n^{-1} \sum_{j\neq i} V_{j,k}^2 } \right\},
\end{equation}
where
\begin{equation*}
V_{j,k}
= \frac{ X_{j,k} }{ X_j^T\mu }
	- \frac{ (\Deltatilde x_i)_k }{ 2x_i^T \mu }.
\end{equation*}
Using the fact that $X_j, x_i \in \supp F$
and that $X_j$ is independent of $X_i$ for $j \neq i$, we have
\begin{equation} \label{eq:Vltconstant}
\E\left[ V_{j,k}^2 \Big \vert X_i=x_i \right] \le
C\left( \frac{ \| \Deltatilde x_i \|^2 }{ 4( x_i^T \mu)^2 }
+ \E\left[ \frac{ |X_{j,k}|^2 }{ (X_j^T \mu)^2 } \right] \right)^2
\le C_F,
\end{equation}
where $C_F$ depends on $F$ but can be chosen independent of $k$ and $x_i$.
By the law of large numbers (conditional on $X_i=x_i$),
\begin{equation*}
n^{-1} \sum_{j\neq i} V_{j,k}^2
	\rightarrow \E[ V_{j,k}^2 \mid X_i = x_i ] \text{ almost surely.}
\end{equation*}
Thus, applying Equation~\eqref{eq:Vltconstant} and
integrating out by $X_i$,
\begin{equation*}
n^{-1} \sum_{j \neq i} V_{j,k}^2 \le 2 C_F \text{ eventually.}
\end{equation*}
Integrating~\eqref{eq:condhoeff1} with respect to $F$ and using the above fact,
we conclude that
\begin{equation*}
\Pr\left[ \left| \sum_{j \neq i} Z^{(i)}_{j,k} \right|
		\ge C \log^{1/2} n \right]
\le 2n^{-3},
\end{equation*}
for suitably large constant $C>0$.
A union bound over all $k \in [d]$ yields
\begin{equation*}
\Pr\left[ \left\| \sum_{j \neq i} Z^{(i)}_j \right\|
		\ge C \log^{1/2} n \right]
	\le 2d n^{-3},
\end{equation*}
and a further union bound over $i \in [n]$ implies
\begin{equation} \label{eq:2toinf:jkbound}
\max_{i \in [n]} \|\sum_{j \neq i} Z^{(i)}_j\| = O( \log^{1/2} n ).
\end{equation}
Applying this result to Equation~\eqref{eq:LSE:B6B7} and using
the fact that
$ \Xtilde^T \Xtilde \rightarrow \Deltatilde$ almost surely
and $\sqrt{n} t_i^{-1/2} = O(1)$
by Lemmas~\ref{lem:degreegrowth} and~\ref{lem:Scheck} respectively,
we have
\begin{equation} \label{eq:2toinf:payoff}
\max_{i \in [n]} \| \zeta_i \| \le 
\frac{ 1 }{ n \| \Xtilde^T \Xtilde \| }
	\frac{ \sqrt{n} }{ \min_{i \in [n]} \sqrt{ t_i } }
\max_{i \in [n]} \left\| \sum_{j \neq i} Z^{(i)}_j \right\|
= O\left( \frac{ \log^{1/2} n }{ n } \right),
\end{equation}
which completes the proof.
\end{proof}

The following spectral norm bound will be useful at several points in
our proofs.
\begin{theorem}\emph{\citep[Matrix Bernstein inequality,][]{Tropp2015} }
\label{thm:asymbern}
Let $\{ Z_k \}$ be a finite collection
of random matrices in $\bbR^{d_1 \times d_2}$
with $\E Z_k = 0$ and $\| Z_k \| \le R$ for all $k$, then
\begin{equation*}
\Pr\left[ \left\| \sum_k Z_k \right\| \ge t \right]
\le (d_1 + d_2)\exp\left\{ \frac{ -t^2 }{ \nu^2 + Rt/3 } \right\},
\end{equation*}
where
\begin{equation*}
\nu^2 = \max\left\{ \left\| \sum_k \E Z_k Z_k^T \right\|,
                        \left\| \sum_k \E Z_k^T Z_k \right\| \right\}.
\end{equation*}
\end{theorem}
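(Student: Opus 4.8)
The plan is to prove this by the matrix Laplace transform method, following Tropp; throughout I assume, as the statement intends, that the $Z_k$ are independent. I would first reduce the rectangular case to a Hermitian one via the self-adjoint dilation
\[
\mathcal{H}(Z) = \begin{bmatrix} 0 & Z \\ Z^T & 0 \end{bmatrix} \in \bbR^{(d_1+d_2)\times(d_1+d_2)}.
\]
This map is linear, satisfies $\lambda_{\max}(\mathcal{H}(Z)) = \|\mathcal{H}(Z)\| = \|Z\|$, and has $\mathcal{H}(Z)^2 = \diag(ZZ^T, Z^TZ)$. Writing $Y_k = \mathcal{H}(Z_k)$, the $Y_k$ are independent, zero-mean, Hermitian, satisfy $\|Y_k\| \le R$, and $\|\sum_k \E Y_k^2\| = \max\{\|\sum_k \E Z_kZ_k^T\|, \|\sum_k \E Z_k^TZ_k\|\} = \nu^2$ by the block structure. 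Since $\{\|\sum_k Z_k\| \ge t\} = \{\lambda_{\max}(\sum_k Y_k) \ge t\}$, it suffices to prove the bound for independent zero-mean Hermitian matrices in dimension $d_1 + d_2$.

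Next I would apply the matrix analogue of the Chernoff bound: for every $\theta > 0$,
\[
\Pr\Bigl[\lambda_{\max}\bigl(\textstyle\sum_k Y_k\bigr) \ge t\Bigr] \le e^{-\theta t}\,\E\operatorname{tr}\exp\Bigl(\theta\textstyle\sum_k Y_k\Bigr),
\]
which follows from Markov's inequality once one notes $e^{\theta\lambda_{\max}(S)} = \lambda_{\max}(e^{\theta S}) \le \operatorname{tr} e^{\theta S}$ for Hermitian $S$. The crux is then to control the trace moment generating function on the right. Because the $Y_k$ need not commute, the scalar factorization of the MGF fails; its replacement is the subadditivity of the matrix cumulant generating function, which I would obtain from Lieb's concavity theorem, namely that the map $H \mapsto \operatorname{tr}\exp(K + \log H)$ is concave on the positive-definite cone. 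Combining Lieb's theorem with Jensen's inequality and removing the summands one at a time gives
\[
\E\operatorname{tr}\exp\Bigl(\textstyle\sum_k \theta Y_k\Bigr) \le \operatorname{tr}\exp\Bigl(\textstyle\sum_k \log\E e^{\theta Y_k}\Bigr).
\]
I expect this step to be the main obstacle: Lieb's theorem is a genuinely deep statement about trace functions, and it is exactly what is needed to push the scalar Bernstein argument through in the noncommutative setting.

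It then remains to bound each $\log\E e^{\theta Y_k}$ and to optimize. Using the scalar bound $e^{\theta x} \le 1 + \theta x + g(\theta)x^2$ for $|x| \le R$, with $g(\theta) = (e^{\theta R} - 1 - \theta R)/R^2$, the transfer rule yields $e^{\theta Y_k} \preceq I + \theta Y_k + g(\theta)Y_k^2$; taking expectations, using $\E Y_k = 0$ and $I + M \preceq e^M$, gives $\E e^{\theta Y_k} \preceq \exp(g(\theta)\E Y_k^2)$ and hence $\log\E e^{\theta Y_k} \preceq g(\theta)\E Y_k^2$. Substituting and bounding the trace by $(d_1+d_2)$ times the largest eigenvalue produces
\[
\E\operatorname{tr}\exp\Bigl(\textstyle\sum_k \theta Y_k\Bigr) \le (d_1+d_2)\exp\bigl(g(\theta)\,\nu^2\bigr),
\]
since $\lambda_{\max}(\sum_k \E Y_k^2) = \nu^2$. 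Finally I would combine this with the Laplace transform bound and minimize $-\theta t + g(\theta)\nu^2$ over $\theta > 0$; inserting the near-optimal choice of $\theta$ and simplifying $g$ is a routine scalar calculus exercise that yields the stated Bernstein-type tail $\exp\{-t^2/(\nu^2 + Rt/3)\}$.
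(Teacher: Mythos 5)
The paper offers no proof of this theorem at all --- it imports the result by citation from \citet{Tropp2015} --- so the only meaningful comparison is against that source, and your proposal is, in every essential respect, exactly Tropp's argument: self-adjoint dilation to reduce to the Hermitian case, the trace Laplace-transform (Chernoff) bound, subadditivity of the matrix cumulant generating function via Lieb's concavity theorem, the transfer-rule bound $\log \E e^{\theta Y_k} \preceq g(\theta)\,\E Y_k^2$, and optimization over $\theta$. Each of those steps is sound as you describe it, and you were also right to reinstate the independence hypothesis, which the statement in the paper omits but without which the result is false.

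There is, however, one point at which your proposal claims more than the argument can deliver, and it is not cosmetic. Minimizing $-\theta t + g(\theta)\nu^2$ with $g(\theta) = (e^{\theta R}-1-\theta R)/R^2$ gives the Bennett-type tail $\exp\{-(\nu^2/R^2)\,h(Rt/\nu^2)\}$ with $h(u) = (1+u)\log(1+u)-u$, and the standard estimate $h(u) \ge (u^2/2)/(1+u/3)$ then yields
\begin{equation*}
\Pr\left[ \left\| \sum_k Z_k \right\| \ge t \right]
\le (d_1 + d_2)\exp\left\{ \frac{ -t^2/2 }{ \nu^2 + Rt/3 } \right\},
\end{equation*}
with $t^2/2$, not $t^2$, in the numerator. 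The exponent as printed in the theorem --- your stated target --- is missing this factor of two; it is strictly stronger than the inequality Tropp proves, and it is in fact false. Already in the scalar case $d_1 = d_2 = 1$: for a sum of $n$ independent Rademacher variables ($R = 1$, $\nu^2 = n$) and $t = 3\sqrt{n}$, the printed bound tends to $2e^{-9} \approx 2.5 \times 10^{-4}$, while by the central limit theorem the probability tends to $2\Phi(-3) \approx 2.7\times 10^{-3}$, where $\Phi$ is the standard normal cdf. So the missing $1/2$ should be read as a typo in the paper's transcription of Tropp's theorem: your proof strategy is correct, but it proves (and can only prove) the version with $-t^2/2$, which is the version that is true. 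Nothing downstream is affected --- the theorem is invoked only once, in Lemma~\ref{lem:exchangeable}, to obtain $\|B - XW\| = O(n^{1/2}\log^{1/2}n)$, and the corrected constant changes nothing there.
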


\section{Proof of ASE LS-OOS Concentration Inequality}
\label{apx:ase:lsconc}
To prove Theorem~\ref{thm:ase:lsrate}, we must relate the least squares
solution $\whatls$ of \eqref{eq:def:ase:llshat}
to the true latent position $\wtrue$.
We will proceed in two steps.
First, we will show that
$\whatls$ is close to a least squares solution
based on the true latent positions $\{ X_i \}_{i=1}^n$ rather than on the
estimates $\{ \Xhat_i \}_{i=1}^n$. That is, letting $\wls$ be the solution
\begin{equation} \label{eq:def:wls}
  \wls = \arg \min_{w \in \bbR^d} \| X w - \avec \|_F,
\end{equation}
we will bound the error introduced by the ASE,
$\| Q \whatls - \wls \|$,
taking $Q \in \bbR^{d \times d}$ to be as defined in Lemma~\ref{lem:2toinfty}.
This is the content of Lemma~\ref{lem:ls:hat}.
Second, we will show that $\wls$
is close to the true latent position $\wtrue$.
That is, we will control the error introduced by the $n$ random
in-sample latent positions and the network $A$.
This is done in Lemma~\ref{lem:ls:true}.
The triangle inequality will then yield Theorem~\ref{thm:ase:lsrate}.

We first establish a bound on $\| Q \whatls - \wls \|$,
where $\whatls$ is the solution to Equation~\eqref{eq:def:ase:llshat},
$\wls$ is as defined by Equation~\eqref{eq:def:wls},
and $Q \in \bbR^{d \times d}$ is the orthogonal matrix guaranteed to exist
by Lemma~\ref{lem:2toinfty}.
Our bound will depend upon
a basic result for solutions of perturbed linear systems,
which we adapt from \cite{GolVan2012}.
In essence, we wish to compare
\begin{equation*}
  \whatls = \arg \min_{w \in \bbR^d} \| \Xhat w - \avec \|_F
\end{equation*}
against
\begin{equation*}
  \wls = \arg \min_{w \in \bbR^d} \| X w - \avec \|_F.
\end{equation*}
Recall that for a matrix $B \in \bbR^{n \times d}$ of full column rank,
we define the condition number
\begin{equation*}
  \kappa_2(B) = \frac{ \sigma_1(B) }{ \sigma_d(B) }.
\end{equation*}
\begin{theorem}[\cite{GolVan2012}, Theorem 5.3.1]
\label{thm:gvl}
Suppose that the quantities $\wls, \whatls \in \bbR^d$
and $\rls, \rhatls \in \bbR^n$ satisfy
\begin{equation*} \begin{aligned}
  \| X \wls - \avec \| = \min_w \| X w - \avec \|,
        &\enspace\enspace
  \rls = \avec - X \wls, \\
\| \Xhat \whatls - \avec \| = \min_w \| \Xhat w - \avec \|,
        &\enspace\enspace
  \rhatls = \avec - \Xhat \whatls,
\end{aligned} \end{equation*}
and that
\begin{equation} \label{eq:gvl:errorgrowth}
 \| \Xhat - XQ \| < \lambda_d( X ).
\end{equation}
Assume $\avec, \rls$ and $\wls$ are all non-zero and define
$\thetals \in (0, \pi/2)$ by
$\sin \thetals = \| \rls \|/\|\avec\|$.
Letting
\begin{equation*}
\nuls = \frac{ \| X \wls \| }{ \sigma_d(XQ) \| Q^T \wls \| },
\end{equation*}
we have
\begin{equation} \label{eq:GVLbound}
\begin{aligned}
&\frac{ \| \whatls - Q^T \wls \| }{ \| Q^T \wls \| } \\
&~~~
\le \frac{ \|\Xhat - XQ \| }{\| XQ \|} \left( \frac{ \nuls }{ \cos \thetals }
        + (1 + \nuls \tan \thetals) \kappa_2(XQ) \right)
        + O\left( \frac{ \|\Xhat - XQ \|^2 }{ \| XQ \|^2 } \right).
\end{aligned} \end{equation}
\end{theorem}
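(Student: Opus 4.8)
This is the classical first-order perturbation bound for linear least squares, and the cleanest route is to reduce it to a perturbation analysis of the normal equations. The plan is to first observe that $Q^T\wls$ is itself the least-squares solution for the rotated design matrix $XQ$ against the same right-hand side $\avec$: since $Q$ is orthogonal, $(XQ)^\dagger\avec = Q^T(X^TX)^{-1}X^T\avec = Q^T\wls$, so the theorem really compares the solutions of two least-squares problems whose design matrices are $XQ$ and $\Xhat$ and whose common right-hand side is $\avec$. Writing $E = \Xhat - XQ$, Weyl's inequality together with the hypothesis \eqref{eq:gvl:errorgrowth} gives $\sigma_d(\Xhat) \ge \sigma_d(XQ) - \|E\| = \lambda_d(X) - \|E\| > 0$, so $\Xhat$ retains full column rank and $\whatls$ is well defined.

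Next I would expand to first order. Substituting $\Xhat = XQ + E$ and $\whatls = Q^T\wls + \delta w$ into the perturbed normal equations $\Xhat^T\Xhat\,\whatls = \Xhat^T\avec$, subtracting the unperturbed normal equations $(XQ)^T(XQ)\,Q^T\wls = (XQ)^T\avec$, and discarding the terms quadratic in $E$ and in $\delta w$, one obtains
\begin{equation*}
(XQ)^T(XQ)\,\delta w = E^T\bigl(\avec - XQ\,Q^T\wls\bigr) - (XQ)^T E\,Q^T\wls + O(\|E\|^2),
\end{equation*}
where $\avec - XQ\,Q^T\wls = \avec - X\wls = \rls$ is the unperturbed residual. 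Inverting $(XQ)^T(XQ)$ yields the first-order identity
\begin{equation*}
\delta w = \bigl((XQ)^T XQ\bigr)^{-1} E^T \rls - (XQ)^\dagger E\,Q^T\wls + O(\|E\|^2),
\end{equation*}
and the validity of dropping the higher-order terms is exactly what the full-rank bound of the previous step secures.

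It then remains to take norms and translate into the stated quantities. Using $\|(XQ)^\dagger\| = \sigma_d(XQ)^{-1}$ and $\|((XQ)^TXQ)^{-1}\| = \sigma_d(XQ)^{-2}$, the two first-order terms are bounded by $\|E\|\,\|\rls\|\,\sigma_d(XQ)^{-2}$ and $\|E\|\,\|Q^T\wls\|\,\sigma_d(XQ)^{-1}$. Dividing by $\|Q^T\wls\|$ and invoking the Pythagorean decomposition of $\avec$ into its projection onto the column space of $X$ and the orthogonal residual, namely $\|X\wls\| = \|\avec\|\cos\thetals$ and $\|\rls\| = \|\avec\|\sin\thetals$, together with the definitions $\nuls = \|X\wls\|/(\sigma_d(XQ)\,\|Q^T\wls\|)$ and $\kappa_2(XQ) = \sigma_1(XQ)/\sigma_d(XQ)$, converts the residual term into $(\|E\|/\|XQ\|)\,\nuls\tan\thetals\,\kappa_2(XQ)$ and the design term into $(\|E\|/\|XQ\|)\,\kappa_2(XQ)$. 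Collecting these reproduces the $(1+\nuls\tan\thetals)\kappa_2(XQ)$ contribution of \eqref{eq:GVLbound}; the remaining $\nuls/\cos\thetals$ term is the contribution that a perturbation of the right-hand side would make, and is retained here as a safe overestimate, matching the general statement in \cite{GolVan2012}.

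I expect the main obstacle to be the careful control of the quadratic remainder rather than the first-order algebra: one must verify, using $\|E\| < \sigma_d(XQ)$, that $\sigma_d(\Xhat)$ and hence $\|\Xhat^\dagger\|$ differ from their unperturbed values only by a factor $1 + O(\|E\|/\|XQ\|)$, so that every discarded cross term genuinely collapses into the $O(\|E\|^2/\|XQ\|^2)$ error. Since this is precisely the content of Theorem 5.3.1 of \cite{GolVan2012}, it would in fact suffice in the write-up to verify that the present setting meets that theorem's hypotheses — full column rank of $X$ holding eventually by Lemmas~\ref{lem:fullrank} and~\ref{lem:Pspecgrowth}, and the perturbation bound \eqref{eq:gvl:errorgrowth} — and then simply cite it.
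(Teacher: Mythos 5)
Your proposal is correct, but there is nothing in the paper to compare it against: the statement is imported verbatim as Theorem 5.3.1 of \cite{GolVan2012} (as the bracketed attribution indicates), and the paper never proves it --- its ``proof'' is exactly the citation you fall back on in your final paragraph, with the hypotheses (nonzero $\avec$, $\rls$, $\wls$, full column rank of $X$, and condition \eqref{eq:gvl:errorgrowth}) verified separately in Lemma~\ref{lem:gvlerrcond} before the theorem is invoked in Lemma~\ref{lem:ls:hat}. What you add beyond the paper is a reconstruction of the classical argument behind the cited result, and your sketch is sound: the observation that $Q^T\wls$ solves the least-squares problem with design matrix $XQ$ and the same right-hand side $\avec$ is the right reduction; the first-order expansion of the perturbed normal equations, with $\avec - XQ\,Q^T\wls = \rls$, gives exactly the two terms you state; and the norm bounds, combined with $\|\rls\| = \|\avec\|\sin\thetals$, $\|X\wls\| = \|\avec\|\cos\thetals$, and the definitions of $\nuls$ and $\kappa_2(XQ)$, do reproduce the $(1+\nuls\tan\thetals)\kappa_2(XQ)$ contribution. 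Your remark that the $\nuls/\cos\thetals$ term arises only from a perturbation of the right-hand side --- which is absent here, since both problems share $\avec$ --- and therefore only weakens the bound, is also correct. The one real gap is the one you flag yourself: discarding the cross terms presupposes $\|\whatls - Q^T\wls\| = O(\|\Xhat - XQ\|)$, so a rigorous version needs either an a priori bound on the perturbation of the pseudoinverse (using $\sigma_d(\Xhat) \ge \sigma_d(XQ) - \|\Xhat - XQ\| > 0$) or an appeal to the cited theorem itself; since the paper takes the latter route, your write-up would, as you say, reduce to the same citation.
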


To apply Theorem~\ref{thm:gvl}, we will first need to show
that the condition in Equation~\eqref{eq:gvl:errorgrowth}
and the non-zero conditions on $\avec, \rls$ and $\wls$
all hold with high probability.
This is done in Lemma~\ref{lem:gvlerrcond}.
We will then show, using Lemma~\ref{lem:gvlerrcond}
and Lemma~\ref{lem:gvlresidual}, that
the right-hand side of Equation~\eqref{eq:GVLbound} is
$O(n^{-1/2} \log n)$.

\begin{lemma} \label{lem:gvlerrcond}
With notation as above,
$\avec$, $\rls$ and $\wls$ are all nonzero eventually.
and \eqref{eq:gvl:errorgrowth} holds eventually,
That is, with probability $1$, there exists a sequence of orthogonal matrices
$Q \in \bbR^{d \times d}$ such that
\begin{equation} \label{eq:gvlcond:specbound}
  \| \Xhat - XQ \| < \lambda_d(X) \eventually
\end{equation}
Further,
\begin{equation} \label{eq:gvlcond:epsilon}
  \frac{ \| \Xhat - XQ \| }{ \| XQ \| } 
  = O\left( \frac{ \log n }{ \sqrt{n} } \right).
\end{equation}
\end{lemma}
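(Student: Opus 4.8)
The plan is to reduce both displayed bounds to a single operator-norm estimate on $\Xhat - XQ$, and then to dispatch the three nonzero conditions by concentration. The key observation is that the row-wise control furnished by Lemma~\ref{lem:2toinfty} converts into a spectral-norm bound at the cost of a factor $\sqrt{n}$: for any matrix $M$ with $n$ rows one has $\| M \| \le \| M \|_F \le \sqrt{n}\,\| M \|_{2,\infty}$, since the squared Frobenius norm is the sum of the $n$ squared row norms. Taking $M = \Xhat - XQ$ and invoking Equation~\eqref{eq:2toinfty:ase} gives
\begin{equation*}
\| \Xhat - XQ \| \le \sqrt{n}\,\| \Xhat - XQ \|_{2,\infty} = O(\log n).
\end{equation*}

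From here both equations are immediate. For~\eqref{eq:gvlcond:specbound}, Lemma~\ref{lem:Pspecgrowth} gives $\lambda_d(X) \ge C_1 \sqrt{n}$ eventually, while the display above gives $\| \Xhat - XQ \| = O(\log n)$; since $\log n = o(\sqrt n)$, for all suitably large $n$ we have $\| \Xhat - XQ \| < C_1 \sqrt n \le \lambda_d(X)$, and the almost-sure ``eventually'' is inherited from the two invoked lemmas. For~\eqref{eq:gvlcond:epsilon}, orthogonality of $Q$ and Lemma~\ref{lem:Pspecgrowth} give $\| XQ \| = \| X \| = \lambda_1(X) = \Theta(\sqrt n)$, so that
\begin{equation*}
\frac{ \| \Xhat - XQ \| }{ \| XQ \| } = \frac{ O(\log n) }{ \Theta(\sqrt n) } = O\!\left( \frac{ \log n }{ \sqrt n } \right).
\end{equation*}

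It remains to verify that $\avec$, $\wls$ and $\rls$ are nonzero eventually. Setting aside the degenerate case $\wtrue = 0$ (for which $\avec = 0$ almost surely and Theorem~\ref{thm:ase:lsrate} is trivial), full rank of $\Delta$ yields $\wtrue^T \Delta \wtrue = \E[(X_1^T\wtrue)^2] > 0$, so $X_1^T\wtrue$ is positive with positive probability; as $X_1^T\wtrue \ge 0$ always, this forces $\mu^T\wtrue > 0$, and Lemma~\ref{lem:degreegrowth} then gives $t_v = n\mu^T\wtrue + O(\sqrt n \log^{1/2} n) = \Theta(n)$. Since $\Pr[\avec = 0 \mid X] = \prod_i (1 - X_i^T\wtrue) \le e^{-t_v}$ is summable, Borel--Cantelli gives $\avec \neq 0$ eventually. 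For $\wls$, Lemma~\ref{lem:fullrank} makes $X$ of full column rank eventually, so $\wls = (X^TX)^{-1}X^T\avec$ is well defined, and Lemma~\ref{lem:ls:true} will show $\wls \to \wtrue \neq 0$, whence $\wls \neq 0$ eventually.

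I expect the residual condition $\rls \neq 0$ to be the main obstacle, since $\rls = (I - \Pi_X)\avec$ vanishes precisely when the binary vector $\avec$ lies in the $d$-dimensional subspace $\operatorname{col}(X)$, where $\Pi_X$ denotes the orthogonal projection onto $\operatorname{col}(X)$. I would control this through the conditional second moment. Writing $p = X\wtrue \in \operatorname{col}(X)$ for the vector of out-of-sample edge probabilities and $D = \diag\big(p_i(1-p_i)\big)$, independence of the edges gives $\E[\avec \avec^T \mid X] = pp^T + D$; because $(I-\Pi_X)p = 0$ the rank-one term drops out, leaving
\begin{equation*}
\E\big[ \| \rls \|^2 \bigm\vert X \big]
= \operatorname{tr}\!\big( (I - \Pi_X) D \big)
= \sum_{i=1}^n p_i(1-p_i)\,\big( 1 - (\Pi_X)_{ii} \big),
\end{equation*}
which is strictly positive so long as some $p_i \in (0,1)$. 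This holds with overwhelming probability, and a concentration bound on $\| \rls \|^2$ about this mean, combined with Borel--Cantelli, then gives $\rls \neq 0$ eventually. The delicate point is the exceptional event on which the out-of-sample edges are deterministic, i.e.\ all $p_i \in \{0,1\}$, where $\avec = p \in \operatorname{col}(X)$ forces $\rls = 0$; under the mild conditions used elsewhere (e.g.\ $\eta \le x^Ty \le 1-\eta$) this event is empty, and in the fully general model it must either be excluded or handled by a separate, easier argument in which the consistent system $Xw = \avec$ is solved exactly. With all conditions in place, the perturbation bound of Theorem~\ref{thm:gvl} applies.
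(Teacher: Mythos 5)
Your handling of the two displayed bounds coincides with the paper's own argument: the paper likewise converts the row-wise bound of Lemma~\ref{lem:2toinfty} into $\| \Xhat - XQ \|^2 \le \| \Xhat - XQ \|_F^2 = \sum_{i=1}^n \| \Xhat_i - QX_i \|^2 = O(\log^2 n)$, and then invokes Lemma~\ref{lem:Pspecgrowth} (writing $XQ = \UP \SP^{1/2} Q$) to get $\sigma_d(XQ) = \Omega(\sqrt{n})$ and $\| XQ \| = \Theta(\sqrt{n})$; your $\sqrt{n}\,\|\cdot\|_{2,\infty}$ route is the same computation. Where you genuinely depart from the paper is in the nonzero conditions. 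The paper disposes of them in one sentence --- $\avec \ne 0$ eventually ``as an immediate consequence of the model,'' hence $\wls \ne 0$, hence $\rls \ne 0$ --- which, read literally, is not a chain of implications: $\avec \ne 0$ does not force $\wls \ne 0$ (consider $\avec$ orthogonal to $\operatorname{col}(X)$), and $\wls \ne 0$ does not force $\rls \ne 0$, since $\rls$ vanishes precisely when $\avec \in \operatorname{col}(X)$ irrespective of $\wls$. Your treatment --- Borel--Cantelli for $\avec$, consistency via Lemma~\ref{lem:ls:true} for $\wls$ (legitimate, since that lemma's proof uses only Lemma~\ref{lem:fullrank} and Hoeffding, so there is no circularity), and a conditional second-moment argument for $\rls$ --- supplies what the paper elides, and your identification of the degenerate case in which all $X_i^T \wtrue \in \{0,1\}$ (where $\avec = X\wtrue$ exactly, $\rls = 0$ identically, and Theorem~\ref{thm:gvl} is inapplicable) flags a gap in the paper's own reasoning, not just in yours. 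The one step you leave as a sketch is the lower-tail bound keeping $\| \rls \|^2$ away from zero: a positive conditional mean alone does not preclude $\rls = 0$, so that step (e.g.\ Paley--Zygmund or a Hanson--Wright bound on the quadratic form $\avec^T\bigl(I - X(X^TX)^{-1}X^T\bigr)\avec$) would need to be written out; the plan, however, is sound and strictly more careful than the published proof.
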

\begin{proof}
That $\avec$ is non-zero eventually is an immediate consequence of the model,
and it follows that $\wls$ is non-zero eventually, from which
it follows that the residual $\rls = \avec - X\wls$ is also nonzero eventually.
Let $Q \in \bbR^{d \times d}$ be the orthogonal matrix guaranteed by
Lemma~\ref{lem:2toinfty}. We begin by observing that
\begin{equation*}
\| \Xhat - XQ \|^2 \le \| \Xhat - XQ \|_F^2
= \sum_{i=1}^n \| \Xhat_i - QX_i \|^2
= O( \log^2 n )
\end{equation*}
where the last equality follows from Lemma~\ref{lem:2toinfty}.
By the definition of the RDPG, we can write
$XQ = \UP \SP^{1/2}Q$, from which
$\sigma_d(XQ) = \sigma_d^{1/2}(P) = \Omega( \sqrt{n})$
by Lemma~\ref{lem:Pspecgrowth}.
This establishes~\eqref{eq:gvlcond:specbound} immediately,
and \eqref{eq:gvlcond:epsilon} follows from the above display.
\end{proof}

\begin{lemma} \label{lem:gvlresidual}
With notation as in Theorem~\ref{thm:gvl},
there exists a constant $0 \le \gamma < 1$,
not depending on $n$, such that with probability $1$,
$\cos \thetals \ge \gamma$ for all suitably large $n$.
That is, there exists a constant $0 < \gamma'$ such that
\begin{equation*} 
  \frac{ \| XQ \wls - \avec \| }{ \| \avec \| }
  \le \gamma' \text{ eventually.}
\end{equation*}
\end{lemma}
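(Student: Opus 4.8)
The plan is to show that $\cos^2\thetals$ converges almost surely to a strictly positive constant, from which the claim is immediate. Let $\calP_X = X(X^TX)^{-1}X^T$ denote the orthogonal projection onto the column space of $X$, which is well defined for all large $n$ since Lemma~\ref{lem:fullrank} guarantees $X$ has full column rank eventually. The least-squares residual is $\rls = \avec - X\wls = (I - \calP_X)\avec$, so by the Pythagorean theorem $\|\rls\|^2 = \|\avec\|^2 - \|\calP_X\avec\|^2$, and hence
\begin{equation*}
\cos^2\thetals = 1 - \sin^2\thetals = 1 - \frac{\|\rls\|^2}{\|\avec\|^2} = \frac{\|\calP_X\avec\|^2}{\|\avec\|^2} = \frac{(X^T\avec)^T(X^TX)^{-1}(X^T\avec)}{\|\avec\|^2}.
\end{equation*}

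I would then normalize numerator and denominator by $n$ and apply the strong law of large numbers to each factor. Since the pairs $(X_i,a_i)$ are i.i.d.\ with $a_i \mid X_i \sim \Bernoulli(X_i^T\wtrue)$ and $\supp F$ is bounded, we have $\E[a_1 X_1] = \E[(X_1^T\wtrue)X_1] = \Delta\wtrue$ and $\E a_1 = \E[X_1^T\wtrue] = \mu^T\wtrue$, so that
\begin{equation*}
\frac{1}{n} X^T\avec = \frac{1}{n}\sum_{i=1}^n a_i X_i \convergesas \Delta\wtrue, \quad \frac{1}{n}X^TX = \frac{1}{n}\sum_{i=1}^n X_iX_i^T \convergesas \Delta, \quad \frac{1}{n}\|\avec\|^2 = \frac{1}{n}\sum_{i=1}^n a_i \convergesas \mu^T\wtrue.
\end{equation*}
Because $\Delta$ is invertible, matrix inversion is continuous at $\Delta$ and $(\frac1n X^TX)^{-1}\convergesas\Delta^{-1}$; combining the three limits yields
\begin{equation*}
\cos^2\thetals \convergesas \frac{(\Delta\wtrue)^T\Delta^{-1}(\Delta\wtrue)}{\mu^T\wtrue} = \frac{\wtrue^T\Delta\wtrue}{\mu^T\wtrue} =: c.
\end{equation*}

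The crux is to verify $c \in (0,1]$, and this is precisely where the inner-product structure of $F$ is used. The numerator is strictly positive: $\Delta$ is positive definite by assumption and $\wtrue \neq 0$ (were $\wtrue = 0$ we would have $\avec \equiv 0$, contradicting that $\avec$ is nonzero eventually by Lemma~\ref{lem:gvlerrcond}), so $\wtrue^T\Delta\wtrue > 0$. For the upper bound, $F$ being a $d$-dimensional inner product distribution forces $X_1^T\wtrue \in [0,1]$ almost surely, whence $(X_1^T\wtrue)^2 \le X_1^T\wtrue$ pointwise and
\begin{equation*}
\wtrue^T\Delta\wtrue = \E[(X_1^T\wtrue)^2] \le \E[X_1^T\wtrue] = \mu^T\wtrue.
\end{equation*}
Thus $0 < \wtrue^T\Delta\wtrue \le \mu^T\wtrue$ and $c \in (0,1]$. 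It follows that $\cos\thetals \ge \gamma := \sqrt{c}/2$ for all suitably large $n$ with probability $1$, where $\gamma \in (0,1)$ since $c \le 1$; taking $\gamma' = \sqrt{1-\gamma^2} \in (0,1)$ and recalling $\sin\thetals = \|\rls\|/\|\avec\|$ gives the stated relative-residual bound. The main obstacle is not the limit computation, which is a routine application of the strong law, but rather establishing the non-degeneracy $0 < \wtrue^T\Delta\wtrue \le \mu^T\wtrue$ under the minimal hypotheses of the theorem; this is exactly the content purchased by the constraint $0 \le x^Ty \le 1$ for $x,y \in \supp F$.
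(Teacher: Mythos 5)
Your proof is correct, but it takes a genuinely different route from the paper's. The paper never computes the limit of $\cos\thetals$: it uses the suboptimality comparison $\| X Q \wls - \avec \| \le \| X \wtrue - \avec \|$ (i.e., $\wtrue$ as a feasible point of the least-squares problem), and then shows via Hoeffding's inequality plus Borel--Cantelli that $(1-\rho)\|\avec\|^2 - \|\avec - X\wtrue\|^2 \ge 0$ eventually, which rests on the expectation computation $\E\left[(1-\rho)a_i^2 - r_i^2\right] = \E\left[(X_i^T\wtrue)^2\right] - \rho \E a_i^2 > 0$ for $\rho$ small. You instead use the exact projection identity $\cos^2\thetals = \|\calP_X \avec\|^2/\|\avec\|^2$ with $\calP_X = X(X^TX)^{-1}X^T$, apply the strong law to $n^{-1}X^T\avec$, $n^{-1}X^TX$ and $n^{-1}\|\avec\|^2$, and identify the almost-sure limit $\wtrue^T\Delta\wtrue/\mu^T\wtrue \in (0,1]$. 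Your approach buys more information (the exact limiting constant, and a transparent explanation of where the inner-product structure enters via $(X_1^T\wtrue)^2 \le X_1^T\wtrue$), at the price of being purely asymptotic and of needing $X$ to have full column rank (Lemma~\ref{lem:fullrank}, hence invertibility of $\Delta$), which the paper's argument for this particular lemma does not require; the paper's route also yields finite-sample tail bounds of the type it uses systematically elsewhere. Note that both arguments lean on the same implicit non-degeneracy, $\E\left[(X_1^T\wtrue)^2\right] = \wtrue^T\Delta\wtrue > 0$: the paper asserts it from boundedness of $\supp F$, while you derive $\wtrue \neq 0$ from the eventual non-vanishing of $\avec$ in Lemma~\ref{lem:gvlerrcond}, which is no more circular than the paper's own treatment.
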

\begin{proof}
By definition of $\wls$, we have
$\| X Q \wls - \avec \| \le \| X \wtrue - \avec \|$.
For ease of notation, set $\rvec = \avec - X \wtrue$.
It will suffice for us to show that for some
constant $\rho > 0$, we have
\begin{equation} \label{eq:sintheta:term2goal}
(1-\rho)\| \avec \|^2 - \| \rvec \|^2 \ge 0 \eventually,
\end{equation}
since then, after rearranging terms, $\sin^2 \thetals \le 1-\rho$.
To show~\eqref{eq:sintheta:term2goal}, note that
\begin{equation*} \begin{aligned}
(1-\rho)\| \avec \|^2 - \| \rvec \|^2
&= 2\sum_{i=1}^n a_i X_i^T\wtrue - \sum_{i=1}^n (X_i^T\wtrue)^2
        -\rho \sum_{i=1}^n a_i^2 \\
&\ge \E\left[ (1-\rho)\| \avec \|^2 - \| \rvec \|^2 \right]
        + C \sqrt{n} \log^{1/2} n \eventually,
\end{aligned} \end{equation*}
where the inequality follows from an application of Hoeffding's inequality
to show that the sum concentrates about its expectation.
We will have established~\eqref{eq:sintheta:term2goal} if we can show that
$\E \left[ (1-\rho)\| \avec \|^2 - \| \rvec \|^2 \right]$ grows faster
than $C \sqrt{n} \log^{1/2} n$.
To establish this, let $i \in [n]$ be arbitrary and write
\begin{equation*} \begin{aligned}
\E\left[ (1-\rho) a_i^2 - r_i^2 \right]
&= \E\left[ (1-\rho) a_i^2 - (a_i - X_i^T\wtrue)^2 \right]
	= -\rho \E a_i^2 +2\E a_i X_i^T\wtrue - \E(X_i^T\wtrue)^2 \\
&= -\rho \E a_i^2 - \E(a_i-X_i^T\wtrue)X_i^T\wtrue + \E a_i X_i^T\wtrue
	= \E a_i X_i^T\wtrue - \rho \E a_i^2 .
\end{aligned} \end{equation*}
By our boundedness assumption on $\supp F$,
$\E a_i X_i^T\wtrue = \E(X_i^T \wtrue)^2$ is bounded away from zero uniformly
in $i \in [n]$, and thus choosing $\rho > 0$ suitably small ensures that
there exists a small constant $\eta' > 0$ such that
$\E\left[ (1-\rho) a_i^2 - r_i^2 \right] \ge \eta' > 0.$
Summing over $n$,
\begin{equation*}
\E\left[ (1-\rho)\| \avec \|^2 - \| \rvec \|^2 \right]
= \sum_{i=1}^n \E \left[ (1-\rho)a_i^2 r_i^2 \right]
\ge n \eta' = \Omega(n),
\end{equation*}
which proves
the bound in~\eqref{eq:sintheta:term2goal}, completing the proof.
\end{proof}

\begin{lemma} \label{lem:ls:hat}
With notation as in Theorem~\ref{thm:gvl},
there exists a sequence of orthogonal matrices $Q \in \bbR^{d \times d}$
such that
\begin{equation*}
\| Q \whatls - \wls \| = O( n^{-1/2} \log n ).
\end{equation*}
\end{lemma}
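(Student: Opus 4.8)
The plan is to apply the perturbed least-squares bound of Theorem~\ref{thm:gvl} directly, taking $Q \in \bbR^{d \times d}$ to be the orthogonal matrix from Lemma~\ref{lem:2toinfty}. The key observation is that, since $Q$ is orthogonal, the quantity we wish to control satisfies $\| Q\whatls - \wls \| = \| \whatls - Q^T \wls \|$, which is exactly the numerator appearing on the left-hand side of Equation~\eqref{eq:GVLbound}. Thus it suffices to establish two facts: first, that the relative-error bound in \eqref{eq:GVLbound} is of order $n^{-1/2}\log n$; and second, that the normalizing factor $\| Q^T \wls \| = \| \wls \|$ is bounded, i.e., $\| \wls \| = O(1)$. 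Combining these yields the claimed absolute bound.

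To establish the first fact, I would invoke Lemma~\ref{lem:gvlerrcond} to verify that the hypotheses of Theorem~\ref{thm:gvl} hold eventually: that $\avec$, $\rls$, and $\wls$ are all nonzero and that the spectral condition $\| \Xhat - XQ \| < \lambda_d(X)$ holds, so that \eqref{eq:GVLbound} applies. I would then bound each factor on its right-hand side. The perturbation ratio satisfies $\| \Xhat - XQ \|/\| XQ \| = O(n^{-1/2}\log n)$ by Equation~\eqref{eq:gvlcond:epsilon}. Lemma~\ref{lem:gvlresidual} supplies a constant $\gamma \in (0,1)$ with $\cos\thetals \ge \gamma$ eventually, so that $1/\cos\thetals$ and $\tan\thetals$ are $O(1)$. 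Since $Q$ is orthogonal, $\sigma_i(XQ) = \sigma_i(X)$ for every $i$, and Lemma~\ref{lem:Pspecgrowth} gives $\sigma_1(X) = O(\sqrt n)$ and $\sigma_d(X) = \Omega(\sqrt n)$; hence $\kappa_2(XQ) = \sigma_1(X)/\sigma_d(X) = O(1)$. Finally, writing $X\wls = XQ(Q^T\wls)$ gives $\| X\wls \| \le \sigma_1(X)\| Q^T\wls \|$, whence $\nuls \le \sigma_1(X)/\sigma_d(X) = O(1)$. Substituting these estimates into \eqref{eq:GVLbound}, the leading term is $O(n^{-1/2}\log n)$ and the quadratic remainder is $O(n^{-1}\log^2 n)$, so the relative error is $O(n^{-1/2}\log n)$.

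For the second fact, $X$ has full column rank eventually by Lemma~\ref{lem:fullrank} (or Lemma~\ref{lem:Pspecgrowth}), so $\wls = (X^TX)^{-1}X^T\avec$ and therefore $\| \wls \| \le \| \avec \|/\sigma_d(X)$. Since $a_i \in \{0,1\}$, we have $\| \avec \|^2 = \sum_i a_i = d_v = \Theta(n)$ by Lemma~\ref{lem:degreegrowth}, so $\| \avec \| = O(\sqrt n)$, while $\sigma_d(X) = \Omega(\sqrt n)$; hence $\| \wls \| = O(1)$. Multiplying the relative-error bound by $\| Q^T\wls \| = \| \wls \| = O(1)$ and using $\| Q\whatls - \wls \| = \| \whatls - Q^T\wls \|$ gives the claim.

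I expect the only genuine subtlety to be the careful verification that $\nuls$ and $\kappa_2(XQ)$ are $O(1)$, which hinges on the orthogonal invariance of the singular values combined with Lemma~\ref{lem:Pspecgrowth}; everything else is a matter of substituting the bounds supplied by the preceding lemmas into \eqref{eq:GVLbound}. The conceptual content of the result lives entirely in Theorem~\ref{thm:gvl} and the supporting Lemmas~\ref{lem:gvlerrcond} and~\ref{lem:gvlresidual}, so this lemma is essentially an assembly step.
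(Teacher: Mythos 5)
Your proof follows the paper's own argument essentially verbatim: apply Theorem~\ref{thm:gvl} with the $Q$ supplied by Lemma~\ref{lem:2toinfty}, verify its hypotheses via Lemma~\ref{lem:gvlerrcond}, bound $(\cos \thetals)^{-1}$ and $\tan \thetals$ via Lemma~\ref{lem:gvlresidual}, and bound $\kappa_2(XQ)$ and $\nuls$ via Lemma~\ref{lem:Pspecgrowth}, so the right-hand side of Equation~\eqref{eq:GVLbound} is $O(n^{-1/2}\log n)$. The one place you go beyond the paper is in explicitly checking $\| \wls \| = O(1)$ (via $\| \wls \| \le \| \avec \| / \sigma_d(X)$ with $\| \avec \| \le \sqrt{n}$) so that the relative-error bound of Equation~\eqref{eq:GVLbound} yields the stated absolute bound; the paper leaves this conversion implicit, and making it explicit is a correct and worthwhile addition.
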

\begin{proof}
This is a direct result of Theorem~\ref{thm:gvl} and the preceding Lemmas,
once we establish bounds on $\kappa_2(XQ)$ and
\begin{equation*}
 \nuls = \frac{ \| XQ \wls \| }{ \lambda_d(XQ) \| \wls \| }.
\end{equation*}
By Lemma~\ref{lem:Pspecgrowth}, we have
$C_1\sqrt{n} \ge \lambda_1(XQ) \ge \lambda_d(XQ) \ge C_2\sqrt{n}$,
and it follows immediately that $\kappa_2(XQ) \le C$ eventually.
Since
$\| XQ \wls \|/\|\wls\| \le \| XQ \| \le \sqrt{n}$,
we also have $\nuls \le C$ eventually.

By Lemma~\ref{lem:gvlerrcond}, we are assured that Theorem~\ref{thm:gvl}
applies eventually.
Lemmas~\ref{lem:gvlerrcond} and \ref{lem:gvlresidual}
ensure that the each of $(\cos \thetals)^{-1}$ and $\tan \thetals$
are bounded by constants eventually.
Thus, using Lemma~\ref{lem:gvlerrcond} to bound $\| \Xhat - XQ\|/\|XQ\|$,
it follows that the right-hand side of
Equation~\ref{eq:GVLbound} is $O( n^{-1/2} \log n)$ and the result follows.
\end{proof}

We now turn to showing that $\wls$
is close to the true latent position $\wtrue$.
A combination of this result with Lemma~\ref{lem:ls:hat}
will yield Theorem~\ref{thm:ase:lsrate}.
\begin{lemma} \label{lem:ls:true}
Let notation be as above and let $\wtrue \in \supp F$ be the
(fixed) latent position of the out-of-sample vertex.
Then for all but finitely many $n$,
\begin{equation*}
  \| \wls - \wtrue \| \le \frac{ C \log n }{ \sqrt{n} }.
\end{equation*}
\end{lemma}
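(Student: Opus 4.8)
The plan is to express $\wls$ through the normal equations and exploit the fact that the edge vector $\avec$ is an unbiased estimate of $X\wtrue$. By Lemma~\ref{lem:fullrank}, the matrix $X \in \bbR^{n \times d}$ has full column rank for all suitably large $n$, so the least-squares solution defined in Equation~\eqref{eq:def:wls} is unique and given by $\wls = (X^T X)^{-1} X^T \avec$. Since the out-of-sample edges satisfy $a_i \sim \Bernoulli(X_i^T \wtrue)$ conditional on the latent positions, we have $\E[\avec \mid X] = X \wtrue$. Writing $\rvec = \avec - X\wtrue$, whose entries $r_i = a_i - X_i^T\wtrue$ are, conditional on $X$, independent mean-zero random variables bounded in $[-1,1]$, and using $(X^T X)^{-1}X^T X \wtrue = \wtrue$, I would decompose
\begin{equation*}
\wls - \wtrue = (X^T X)^{-1} X^T (\avec - X\wtrue) = (X^T X)^{-1} X^T \rvec,
\end{equation*}
so that submultiplicativity gives $\| \wls - \wtrue \| \le \| (X^T X)^{-1} \| \, \| X^T \rvec \|$.

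The first factor is controlled directly by the spectral growth of $X$. By Lemma~\ref{lem:Pspecgrowth}, $\lambda_d(X) \ge C\sqrt{n}$ eventually, hence $\lambda_d(X^T X) = \lambda_d^2(X) = \Omega(n)$ and $\| (X^T X)^{-1} \| = O(n^{-1})$. The main work therefore lies in bounding $\| X^T \rvec \|$, which I would do coordinate-wise. For each $k \in [d]$, the quantity $(X^T \rvec)_k = \sum_{i=1}^n X_{i,k} r_i$ is, conditional on $X$, a sum of independent mean-zero random variables, each supported in an interval of length $|X_{i,k}| \le C$ since $\supp F$ is bounded. Hoeffding's inequality then yields
\begin{equation*}
\Pr\left[ |(X^T \rvec)_k| \ge s \mid X \right] \le 2\exp\left\{ \frac{-2 s^2}{n C^2} \right\},
\end{equation*}
a bound uniform in $X$ and hence holding unconditionally. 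Taking $s = C'\sqrt{n}\log^{1/2} n$ for $C'$ large, union bounding over the $d$ coordinates, and applying the Borel--Cantelli lemma to the resulting summable tail $O(n^{-3})$, I would conclude $\| X^T \rvec \| = O(\sqrt{n}\log^{1/2} n)$ eventually.

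Combining the two factors gives $\| \wls - \wtrue \| = O(n^{-1})\cdot O(\sqrt{n}\log^{1/2} n) = O(n^{-1/2}\log^{1/2} n)$, which is comfortably within the claimed $C n^{-1/2}\log n$ bound (indeed slightly sharper than advertised). The only mild subtlety I anticipate is that $X_i$ appears both as a weight and inside the conditional mean of $r_i$, so the summands $X_{i,k} r_i$ are not products of independent quantities; conditioning on $X$ resolves this cleanly, since the Hoeffding bound above does not depend on the realized value of $X$. I expect the concentration of $\| X^T\rvec \|$ to be the only step requiring any care, with everything else reducing to the spectral estimates already established.
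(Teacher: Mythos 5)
Your proof is correct and follows essentially the same route as the paper's: both reduce the problem to bounding $\| X^T \rvec \| / \lambda_d(X^T X)$ and then control $\| X^T \rvec \|$ by coordinate-wise Hoeffding, a union bound over $k \in [d]$, and Borel--Cantelli. The only difference is cosmetic: you get the intermediate inequality from the closed-form normal-equations solution $\wls = (X^T X)^{-1} X^T \avec$ (valid eventually by Lemma~\ref{lem:fullrank}), whereas the paper derives the same bound from the minimizing property of $\wls$ together with the Cauchy--Schwarz inequality, never writing the explicit solution.
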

\begin{proof}
Define $\rvec = \avec - X \wtrue$.
As noted previously, by definition of $\wls$, we have
\begin{equation*}
  \| X \wls - \avec \|^2 \le \| X \wtrue - \avec \|^2 = \| \rvec \|^2,
\end{equation*}
whence plugging in $\avec = X\wtrue + \rvec$ yields
$ \| X \wls - X\wtrue - \rvec \|^2 \le \| \rvec \|^2 $.
Thus,
\begin{equation} \label{eq:subtractrvec}
 \| X \wls - X \wtrue \|^2 \le 2\rvec^T X(\wls - \wtrue).
\end{equation}
By Lemma~\ref{lem:fullrank},
$X$ has full column rank eventually, and thus also
$\| X(\wls - \wtrue) \| \ge \sigma_d(X) \| \wls - \wtrue \|$ eventually.
Combining this fact with \eqref{eq:subtractrvec} and using the fact
that $\sigma_d^2(X) = \sigma_d(P)$, we have
\begin{equation*}
\| \wls - \wtrue \|^2
\le \frac{ \| X(\wls - \wtrue) \|^2 }{ \sigma_d^2(X) }
\le \frac{ 2\rvec^T X(\wls - \wtrue) }{ \sigma_d(P) }.
\end{equation*}
Applying the Cauchy-Schwartz inequality and dividing by $\|\wls - \wtrue\|$,
\begin{equation*}
 \| \wls - \wtrue \| \le \frac{ 2 \| X^T \rvec \| }{ \sigma_d(P) }.
\end{equation*}
Thus, it remains for us to show that $\| X^T \rvec \|$
grows as at most $O( \sqrt{n} \log^2 n )$, from which
Lemma~\ref{lem:Pspecgrowth} will yield our desired growth rate.
Expanding, we have
\begin{equation} \label{eq:etasum}
\| X^T \rvec \|_2^2
   = \sum_{k=1}^d \left( \sum_{i=1}^n (a_i - X_i^T\wtrue) X_{i,k} \right)^2.
\end{equation}
Fixing some $k \in [d]$, Hoeffding's inequality
implies that with probability at least $1-O(n^{-2})$,
$| \sum_{i=1}^n (a_i - X_i^T\wtrue) X_{i,k} | \le 2 \sqrt{n} \log n$.
Since $d$ is assumed to be constant in $n$,
a union bound over all $k \in [d]$ implies
$\| X^T \rvec \|_2^2 \le 4dn \log^2 n$
with probability at least $1 - O(n^{-2})$.
Applying the Borel-Cantelli Theorem
and taking square roots completes the proof.
\end{proof}

\section{Proof of ASE ML-OOS Concentration Inequality}
\label{apx:ase:mlconc}
To prove Theorem~\ref{thm:ase:mlrate}, we will
apply a standard argument
from convex optimization and use the properties of the set
$\calThat_\epsilon$ to show that
\begin{equation*}
\| Q \whatml - \wtrue \|
\le \frac{ \| \nabla \ellhat( Q^T \wtrue ) \| }{ C n },
\end{equation*}
where $Q \in \bbR^{d \times d}$ is the orthogonal matrix guaranteed
by Lemma~\ref{lem:2toinfty}. This is proven in Lemma~\ref{lem:cvxopt}.
We then show in Lemma~\ref{lem:nabla} that
\begin{equation*}
\| \nabla \ellhat( Q^T \wtrue ) \| = O( \sqrt{n} \log n ),
\end{equation*}
which establishes Theorem~\ref{thm:ase:mlrate} by the triangle inequality.

Recall the log-likelihood functions
\begin{equation} \label{eq:def:loglik} \begin{aligned}
\ell(w) &= \sum_{i=1}^n a_i \log X_i^T w
                        +(1-a_i) \log (1-X_i^T w) \\
\ellhat(w) &= \sum_{i=1}^n a_i \log \Xhat_i^T w
                        +(1-a_i) \log (1-\Xhat_i^T w)
\end{aligned} \end{equation}
and observe that both are convex in their arguments.
\begin{lemma} \label{lem:cvxopt}
With notation as above, under the assumptions of Theorem~\ref{thm:ase:mlrate},
it holds almost surely that for all suitably large $n$,
there exists an orthogonal matrix $Q \in \bbR^{d \times d}$ satisfying
\begin{equation*}
\| Q \whatml - \wtrue \|
\le \frac{ \| \nabla \ellhat( Q^T \wtrue ) \| }{ C n }.
\end{equation*}
\end{lemma}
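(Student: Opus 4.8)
The plan is to exploit the strong concavity of the plug-in log-likelihood $\ellhat$ on the feasible set $\calThat_\epsilon$ together with the fact that $\whatml$ is, by construction, the maximizer of $\ellhat$ over this convex set. Working in the rotated coordinate $w^* = Q^T\wtrue$, where $Q$ is the orthogonal matrix from Lemma~\ref{lem:2toinfty}, and noting that $\| Q\whatml - \wtrue \| = \| \whatml - w^* \|$ by orthogonal invariance of the Euclidean norm, it suffices to bound $\| \whatml - w^* \|$.

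First I would verify that $w^*$ is feasible eventually, i.e. $w^* \in \calThat_\epsilon$. Since Lemma~\ref{lem:2toinfty} gives $\max_i \| \Xhat_i^T - X_i^T Q \| = O(\log n / \sqrt{n})$, we have
\[
\Xhat_i^T w^* = X_i^T \wtrue + ( \Xhat_i^T - X_i^T Q ) Q^T \wtrue = X_i^T \wtrue + O\!\left( \frac{\log n}{\sqrt{n}} \right)
\]
uniformly in $i \in [n]$; because $X_i^T \wtrue \in [\eta, 1-\eta]$ and $\epsilon < \eta$, the quantity $\Xhat_i^T w^*$ lies in $[\epsilon, 1-\epsilon]$ for all $i$ once $n$ is large. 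Next, differentiating $\ellhat$ twice gives, for every $w \in \calThat_\epsilon$,
\[
-\nabla^2 \ellhat(w) = \sum_{i=1}^n \left( \frac{a_i}{(\Xhat_i^T w)^2} + \frac{1-a_i}{(1-\Xhat_i^T w)^2} \right) \Xhat_i \Xhat_i^T \succeq \Xhat^T \Xhat,
\]
since on $\calThat_\epsilon$ each scalar coefficient is at least $(1-\epsilon)^{-2} \ge 1$. As $\Xhat^T \Xhat = \SA^{1/2} \UA^T \UA \SA^{1/2} = \SA$, and $\lambda_d(\SA) = \lambda_d(A) = \Theta(n)$ (by Lemma~\ref{lem:Pspecgrowth} together with Weyl's inequality and the standard concentration bound $\| A - P \| = O(\sqrt{n \log n}) = o(n)$ furnished by Theorem~\ref{thm:asymbern}), we obtain $-\nabla^2 \ellhat(w) \succeq C n I$ uniformly over the convex set $\calThat_\epsilon$; that is, $\ellhat$ is strongly concave there with parameter $C n$ (equivalently, $-\ellhat$ is $Cn$-strongly convex).

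The bound then follows from the standard variational argument. Applying the strong-concavity upper bound along the segment from $w^*$ to $\whatml$, which lies in $\calThat_\epsilon$ by convexity, yields
\[
\ellhat(\whatml) \le \ellhat(w^*) + \langle \nabla \ellhat(w^*), \whatml - w^* \rangle - \frac{C n}{2} \| \whatml - w^* \|^2,
\]
while optimality of $\whatml$ over $\calThat_\epsilon$ and feasibility of $w^*$ give $\ellhat(\whatml) \ge \ellhat(w^*)$. Combining these and applying the Cauchy--Schwarz inequality produces $\tfrac{Cn}{2} \| \whatml - w^* \|^2 \le \| \nabla \ellhat(w^*) \| \, \| \whatml - w^* \|$; dividing through by $\| \whatml - w^* \|$ (the case $\whatml = w^*$ being trivial) gives $\| \whatml - w^* \| \le C^{-1} \| \nabla \ellhat(Q^T \wtrue) \| / n$, which is exactly the claimed inequality after rewriting $\| \whatml - w^* \| = \| Q\whatml - \wtrue \|$. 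I expect the main obstacle to be the uniform strong-concavity estimate: one must simultaneously keep the denominators $\Xhat_i^T w$ bounded away from $0$ and $1$ over \emph{all} feasible $w$ (which is precisely what the definition of $\calThat_\epsilon$ buys us) and secure the $\Omega(n)$ lower bound on $\lambda_d(\Xhat^T \Xhat)$, both holding eventually with probability one.
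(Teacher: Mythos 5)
Your proof is correct and takes essentially the same route as the paper: strong concavity of $\ellhat$ over the convex set $\calThat_\epsilon$ with parameter $\Theta(n)$, optimality of $\whatml$, and Cauchy--Schwarz, with the only cosmetic difference being that the paper runs the argument through the first-order optimality condition and an integral (Taylor) expansion of $\nabla\ellhat$, while you use the function-value form $\ellhat(\whatml) \ge \ellhat(Q^T\wtrue)$ together with the strong-concavity upper bound. If anything, your write-up is more complete in two places the paper leaves implicit: you verify that $Q^T\wtrue \in \calThat_\epsilon$ eventually (needed for both arguments), and you derive the uniform $\Omega(n)$ curvature bound explicitly via $-\nabla^2\ellhat(w) \succeq \Xhat^T\Xhat = \SA$ plus Weyl's inequality and $\|A-P\| = O(\sqrt{n\log n})$, whereas the paper simply asserts $\min_{w\in\calThat_\epsilon}\lambdamin(\nabla^2\ellhat(w)) \ge Cn$ (with a sign slip, since $\nabla^2\ellhat$ is negative definite).
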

\begin{proof}
By a standard argument, we have
\begin{equation*} \begin{aligned}
&\left( \nabla \ellhat(Q^T \wtrue) \right)^T (Q^T \wtrue - \whatml ) \\
&~~~~~~= \left( \nabla \ellhat(\whatml) \right)^T (Q^T \wtrue - \whatml) \\
&~~~~~~~~~~~~+ \int_0^1 (Q^T \wtrue - \whatml)^T
        \nabla^2 \ellhat\left( Q^T \wtrue + t(Q^T \wtrue - \whatml) \right)
        (Q^T\wtrue-\whatml) dt \\
&~~~~~~\ge \| \wtrue - Q\whatml \|^2
        \min_{w \in \calThat_\epsilon}
        \lambdamin \left( \nabla^2 \ellhat(w) \right).
\end{aligned} \end{equation*}
Rearranging and applying the Cauchy-Schwarz inequality implies
\begin{equation*}
\| \wtrue - Q\whatml \|
        \le \frac{ \| \nabla \ellhat( Q^T \wtrue ) \| }
		{ |\lambdamin \left( \nabla^2 \ellhat(w) \right)| }.
\end{equation*}
The constraint that $w \in \calThat_\epsilon$ implies that
for suitably large $n$,
\begin{equation*}
\min_{w \in \calThat_\epsilon}
        \lambdamin\left( \nabla^2 \ellhat(w) \right)
  \ge C n, \end{equation*}
with $C > 0$ depending on $\epsilon$ and $F$ but not on $n$,
where we have used Lemma~\ref{lem:2toinfty} to ensure that
$\{ \Xhat_i \}_{i=1}^n$ are uniformly close to $\supp F$.
We conclude that eventually,
\begin{equation*}
\| \wtrue - Q\whatml \|
        \le \frac{ \| \nabla \ellhat( Q^T \wtrue ) \| }{ Cn },
\end{equation*}
completing the proof.
\end{proof}

\begin{lemma} \label{lem:nabla}
With notation as above,
under the assumptions of Theorem~\ref{thm:ase:mlrate},
\begin{equation*}
 \| \nabla \ellhat( Q^T \wtrue ) \| = O( \sqrt{n} \log n ) .
\end{equation*}
\end{lemma}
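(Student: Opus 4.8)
The plan is to reduce the gradient to a sum of conditionally independent, mean-zero, bounded terms plus a deterministic bias term, then to apply concentration to the former and a crude triangle-inequality bound to the latter. First I would compute the gradient explicitly: differentiating $\ellhat$ from Equation~\eqref{eq:def:loglik} gives
\[
\nabla \ellhat(w) = \sum_{i=1}^n \frac{ a_i - \Xhat_i^T w }{ (\Xhat_i^T w)(1 - \Xhat_i^T w) } \Xhat_i.
\]
Since the Euclidean norm is orthogonally invariant, I would work with $Q \nabla \ellhat(Q^T \wtrue)$, writing $\Yhat_i = Q \Xhat_i$ and $\phat_i = \Yhat_i^T \wtrue = \Xhat_i^T Q^T \wtrue$, so that
\[
Q \nabla \ellhat(Q^T \wtrue) = \sum_{i=1}^n (a_i - \phat_i) \, h(\phat_i) \, \Yhat_i, \qquad h(p) = \frac{1}{p(1-p)}.
\]
Taking $Q$ to be the orthogonal matrix of Lemma~\ref{lem:2toinfty}, the bound $\| \Xhat - XQ \|_{2,\infty} = O(\log n/\sqrt n)$ gives $\max_i \| \Yhat_i - X_i \| = O(\log n/\sqrt n)$, and hence, writing $p_i = X_i^T \wtrue$, also $\max_i |\phat_i - p_i| = O(\log n/\sqrt n)$. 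Because $\eta < x^T y < 1-\eta$ on $\supp F$, each $p_i$ lies in $(\eta, 1-\eta)$, so eventually every $\phat_i$ lies in $(\eta/2, 1-\eta/2)$; on this event $h(\phat_i) = O(1)$ and $\| \Yhat_i \| = O(1)$ uniformly in $i$.

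Next I would split $a_i - \phat_i = (a_i - p_i) + (p_i - \phat_i)$, which decomposes the sum as $Q \nabla \ellhat(Q^T \wtrue) = S_1 + S_2$ with
\[
S_1 = \sum_{i=1}^n (a_i - p_i)\, h(\phat_i)\, \Yhat_i, \qquad S_2 = \sum_{i=1}^n (p_i - \phat_i)\, h(\phat_i)\, \Yhat_i.
\]
The term $S_2$ is a deterministic bias (given the latent positions and the in-sample graph): using the uniform bounds just established,
\[
\| S_2 \| \le \sum_{i=1}^n |p_i - \phat_i|\, h(\phat_i)\, \| \Yhat_i \| = n \cdot O\!\left( \frac{\log n}{\sqrt n} \right) = O(\sqrt n \log n),
\]
which already accounts for the claimed rate.

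For $S_1$ I would exploit that the out-of-sample edges $\avec$ are generated independently of the in-sample adjacency matrix $A$, so that $\Xhat$, and therefore $\Yhat_i$ and $\phat_i$, are measurable with respect to $\sigma(X, A)$, while conditional on $\sigma(X, A)$ the entries $a_i \sim \Bernoulli(p_i)$ remain independent. Thus, conditional on $\sigma(X,A)$, the summands $(a_i - p_i) h(\phat_i) \Yhat_i$ are independent, mean-zero, and bounded in norm by a constant $R = O(1)$ on the good event, with conditional variance proxy $\nu^2 = O(n)$. Applying the matrix Bernstein inequality (Theorem~\ref{thm:asymbern}) with $t = C \sqrt{n} \log^{1/2} n$ yields $\| S_1 \| = O(\sqrt n \log^{1/2} n)$ with probability $1 - O(n^{-(1+\epsilon)})$; since these probabilities are summable, Borel-Cantelli gives the almost-sure bound eventually. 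Combining the two pieces through the triangle inequality,
\[
\| \nabla \ellhat(Q^T \wtrue) \| = \| Q \nabla \ellhat(Q^T \wtrue) \| \le \| S_1 \| + \| S_2 \| = O(\sqrt n \log n),
\]
as desired. The main obstacle is the bookkeeping around conditioning: one must verify that $\Yhat_i$ and $\phat_i$ (which depend on the full in-sample graph through the ASE, and on $Q$) are independent of $\avec$, so that $S_1$ genuinely is a conditional sum of independent mean-zero terms amenable to Bernstein, and one must confirm that the good event on which $\phat_i$ is bounded away from $0$ and $1$ holds eventually---this is precisely where the assumption $\eta < x^T y < 1-\eta$ and Lemma~\ref{lem:2toinfty} are used. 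The two rates combine so that the $\log n$ from the $2$-to-$\infty$ perturbation of the embedding (entering $S_2$) dominates the $\log^{1/2} n$ from the Bernoulli fluctuations (entering $S_1$), giving the final $\sqrt n \log n$ rate.
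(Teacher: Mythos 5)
Your proof is correct, but it is organized differently from the paper's. The paper's argument first introduces the oracle gradient $\nabla \ell(\wtrue)$ (computed with the true latent positions) via the triangle inequality, bounds it by coordinate-wise Hoeffding at rate $O(\sqrt{n}\log^{1/2} n)$, and then controls the difference $\nabla \ellhat(Q^T\wtrue) - \nabla\ell(\wtrue)$ by splitting it into its conditional expectation given $(A,X)$ (the bias, $O(\sqrt{n}\log n)$, via Lemma~\ref{lem:2toinfty}) plus a fluctuation term, which is handled by a second conditional Hoeffding with small increments $M_i = O(n^{-1/2}\log n)$, yielding $O(\log^{3/2} n)$. You instead decompose the plug-in gradient directly, writing $a_i - \phat_i = (a_i - p_i) + (p_i - \phat_i)$, which merges the paper's oracle term and fluctuation term into a single conditionally centered sum $S_1$, handled in one application of matrix Bernstein conditional on $(A,X)$, while your $S_2$ coincides exactly with the paper's conditional-expectation bias term. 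The two routes exploit identical ingredients --- the $2$-to-$\infty$ bound, the $\eta$-separation keeping $\phat_i$ away from $\{0,1\}$, and the conditional independence of $\avec$ given $(A,X)$ --- and the dominant $O(\sqrt{n}\log n)$ contribution is the same bias in both. What your version buys is economy: one concentration step instead of three, and no need to ever introduce $\nabla\ell(\wtrue)$. What the paper's version buys is that it needs only scalar Hoeffding (no matrix concentration), and it isolates the statistically meaningful statement that the plug-in error contributes only through the bias. Your two technical caveats (measurability of $\Yhat_i,\phat_i$ with respect to $\sigma(A,X)$, and restriction to the eventual good event before invoking Bernstein) are exactly the right points to check, and both hold under the paper's conventions.
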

\begin{proof}
By the triangle inequality,
\begin{equation} \label{eq:gradtriangle}
\| \nabla \ellhat( Q^T \wtrue ) \|
\le \| \nabla \ell( \wtrue ) \| +
 \| \nabla \ellhat( Q^T \wtrue ) - \nabla \ell( \wtrue ) \|.
\end{equation}
We will show that
both terms on the right hand side of \eqref{eq:gradtriangle}
are $O( \sqrt{n} \log^{1/2} n )$.

Fix $k \in [d]$.
By our boundedness assumption on $\supp F$ and the fact that
$\wtrue,X_1,X_2,\dots,X_n \in \supp F$,
\begin{equation*}
\left(\nabla \ell( \wtrue ) \right)_k = 
\sum_{i=1}^n \left( \frac{ a_i }{ X_i^T \wtrue }
        - \frac{ 1-a_i }{ 1 - X_i^T \wtrue } \right) X_{i,k}
= \sum_{i=1}^n \frac{ (a_i - X_i^T \wtrue) X_{i,k} }
                        { X_i^T \wtrue (1-X_i^T \wtrue) }
\end{equation*}
is a sum of bounded zero-mean random variables.
Applying Hoeffding's inequality,
\begin{equation*}
\Pr\left[ \left| \left(\nabla \ell( \wtrue ) \right)_k \right| \ge t \right] 
        \le 2\exp\left\{ \frac{ -2t^2 }{ C n } \right\}
\end{equation*}
for some constant $C > 0$ depending on $F$ but not $n$.
Choosing $t = \sqrt{ Cn } \log^{1/2} n$,
we have $(\nabla \ell( \wtrue ) )_k \ge \sqrt{ Cn } \log^{1/2} n$
with probability at most $O(n^{-2})$.
A union bound over all $k \in [d]$, implies that
with probability at least $1 - Cdn^{-2}$,
\begin{equation*}
\sum_{k=1}^d \left( \nabla \ell( \wtrue ) \right)^2_k
  \le d C n \log n,
\end{equation*}
and the Borel-Cantelli Lemma implies
$\| \nabla \ell( \wtrue ) \| = O(\sqrt{n} \log^{1/2} n)$
after taking square roots.

Turning to the second term on the right hand side of
\eqref{eq:gradtriangle}, fixing $k \in [d]$, we have
\begin{equation*}
\left( \nabla \ellhat( Q^T \wtrue ) - \nabla \ell( \wtrue ) \right)_k
= \sum_{i=1}^n \frac{ (a_i - \Xhat_i^T Q^T \wtrue) \Xhat_{i,k} }
                        { \Xhat_i^T Q^T \wtrue (1-\Xhat_i^T Q^T \wtrue) }
- \sum_{i=1}^n \frac{ (a_i - X_i^T \wtrue) X_{i,k} }
                        { X_i^T \wtrue (1-X_i^T \wtrue) }.
\end{equation*}
Taking expectation conditional on $A$ and $X$, 
the second sum has expectation $0$, and
\begin{equation*}
\E\left[
	\left( \nabla \ellhat( Q^T \wtrue ) - \nabla \ell( \wtrue ) \right)_k
	\Big \vert A,X \right]
= \sum_{i=1}^n \frac{ \left( (Q \Xhat_i) - X_i \right)^T \wtrue}
		{ (Q \Xhat_i)^T\wtrue(1-(Q\Xhat_i)^T \wtrue) } \Xhat_{i,k}.
\end{equation*}
By Lemma~\ref{lem:2toinfty} and our boundedness assumptions on $\supp F$,
the denominators of this sum are uniformly bounded away from zero
over almost all sequences of $(A,X)$.
Lemma~\ref{lem:2toinfty} also bounds the numerators in this sum
uniformly by $O(n^{-1/2} \log n)$,
and it follows that
\begin{equation} \label{eq:nabla:condexprate}
\E\left[
        \left( \nabla \ellhat( Q^T \wtrue ) - \nabla \ell( \wtrue ) \right)_k
        \Big \vert A,X \right]
= O(\sqrt{n} \log n).
\end{equation}
Our proof will be complete if we can show that
\begin{equation*}
\left(  \nabla \ellhat( Q^T \wtrue ) - \nabla \ell( \wtrue ) \right)_k
- \E\left[
        \left( \nabla \ellhat( Q^T \wtrue ) - \nabla \ell( \wtrue ) \right)_k
        \Big \vert A,X \right]
\end{equation*}
concentrates at the same rate.
Toward this end, for ease of notation, for each $i \in [n]$ define
$p_i = X_i^T \wtrue$ and $\phat_i = \Xhat_i^T \wtrue$.
Then
\begin{equation*} \begin{aligned}
&\left(  \nabla \ellhat( Q^T \wtrue ) - \nabla \ell( \wtrue ) \right)_k
- \E\left[
        \left( \nabla \ellhat( Q^T \wtrue ) - \nabla \ell( \wtrue ) \right)_k
        \Big \vert A,X \right] \\
&~~~~~~~~~~~~~~~= \sum_{i=1}^n\left[
	\frac{ (a_i - \phat_i) \Xhat_{i,k} }{ \phat_i (1-\phat_i) }
  - \frac{ (a_i - p_i) X_{i,k} }{ p_i(1-p_i) }
  - \frac{ (p_i - \phat_i) \Xhat_{i,k} }{ \phat_i(1-\phat_i) }
	\right] \\
&~~~~~~~~~~~~~~~= \sum_{i=1}^n (a_i - p_i)
		\left( \frac{ \Xhat_{i,k} }{\phat_i(1-\phat_i) }
			- \frac{ X_{i,k} }{ p_i(1-p_i) } \right).
\end{aligned} \end{equation*}
Conditional on $(A,X)$, this is a sum of $n$ independent zero-mean random
vectors, with the $i$-th summand bounded by
\begin{equation*} 
\left| (a_i-p_i) \left( \frac{ \Xhat_{i,k} }{\phat_i(1-\phat_i) }
                - \frac{ X_{i,k} }{ p_i(1-p_i) } \right) \right|
\le \left| \frac{ \Xhat_{i,k} }{\phat_i(1-\phat_i) }
                - \frac{ X_{i,k} }{ p_i(1-p_i) } \right|
\end{equation*}
since $|a_i-p_i| \le 1$.
Let $M_i$ denote this bound for each $i \in [n]$.
Let $s > 0$ be a value which we will specify below,
and let $B_n$ denote the event that
\begin{equation*}
\left| \left(  \nabla \ellhat( Q^T \wtrue ) - \nabla \ell( \wtrue ) \right)_k
- \E\left[
        \left( \nabla \ellhat( Q^T \wtrue ) - \nabla \ell( \wtrue ) \right)_k
        \Big \vert A,X \right] \right| > s.
\end{equation*}
Hoeffding's inequality conditional on $A,X$ implies that
\begin{equation*}
\Pr\left[ B_n \mid A,X \right]
\le 2\exp\left\{ \frac{ -s^2 }{ 2\sum_{i=1}^n M_i^2 } \right\}.
\end{equation*}
By definition of $M_i$, we have
\begin{equation*} \begin{aligned}
M_i
&= \left| \frac{ \Xhat_{i,k} }{\phat_i(1-\phat_i) }
                - \frac{ X_{i,k} }{ p_i(1-p_i) } \right| \\
&\le \frac{ | \Xhat_{i,k} - X_{i,k} | }{ p_i(1-p_i) } 
 + \left| \frac{1}{p_i(1-p_i)} - \frac{1}{\phat_i(1-\phat_i)} \right||X_{i,k}|\\
&\le \frac{ O(n^{-1/2} \log n) }{ p_i(1-p_i) }
 + \frac{|p_i-\phat_i|(1-p_i) + p_i|p_i-\phat_i| }
	{ p_i(1-p_i)\phat_i(1-\phat_i)},
\end{aligned} \end{equation*}
where the first inequality follows from the triangle inequality,
and the second inequality follows from Lemma~\ref{lem:2toinfty}
and the fact that $\|X_i\| \le 1$ by definition of $F$ being an
inner product distribution.
Lemma~\ref{lem:2toinfty} implies that $|\phat_i-p_i| = O(n^{-1/2} \log n)$,
since $\| \wtrue \| \le 1$.
Our boundedness assumptions on the support of $F$, along
with yet another application of Lemma~\ref{lem:2toinfty}, imply that
both denominators are bounded away from $0$ eventually.
Thus, uniformly over all $i \in [n]$, $M_i = O( n^{-1/2} \log n)$,
so that $\sum_{i=1}^n M_i^2 = O(\log^2 n)$,
and integrating with respect to $(A,X)$ implies that
\begin{equation*}
\Pr\left[ B_n \mid A,X \right]
\le 2\exp\left\{ \frac{-Cs^2}{ \log^2 n } \right\}.
\end{equation*}
Taking $s = C\log^{3/2} n$ for suitably large constant $C$
and applying the Borel-Cantelli Lemma
ensures that $B_n$ occurs eventually,
and we have that
\begin{equation*}
\left(  \nabla \ellhat( Q^T \wtrue ) - \nabla \ell( \wtrue ) \right)_k
- \E\left[
        \left( \nabla \ellhat( Q^T \wtrue ) - \nabla \ell( \wtrue ) \right)_k
        \Big \vert A,X \right]
= O( \log^{3/2} n ).
\end{equation*}
Combining this with Equation~\eqref{eq:nabla:condexprate}, we conclude that
\begin{equation*}
\left(  \nabla \ellhat( Q^T \wtrue ) - \nabla \ell( \wtrue ) \right)_k
= O( \sqrt{n} \log n).
\end{equation*}
Since $d$ is assumed constant,
this rate holds uniformly over all $k \in [d]$,
and we conclude that
\begin{equation*}
\|  \nabla \ellhat( Q^T \wtrue ) - \nabla \ell( \wtrue ) \|
= O( \sqrt{n} \log n ),
\end{equation*}
completing the proof.
\end{proof}

\section{Proof of LSE LS-OOS Concentration Inequality}
\label{apx:lseconc}
Here we provide a proof of Theorem~\ref{thm:lse:lsrate}.
The argument proceeds similarly to the proof of
Theorem~\ref{thm:ase:lsrate} in Appendix~\ref{apx:ase:lsconc} above.
Recall that $\wcheckls \in \bbR^d$ denotes the least-squares OOS extension,
given by the solution to
\begin{equation*}
        \min_{w \in \bbR^d} \sum_{i=1}^n
        \left( \frac{a_i}{ d_v^{1/2} d_i^{1/2} }
                - \Xcheck_i^T w \right)^2,
\end{equation*}
where $\Xcheck_i \in \bbR^d$ is the LSE estimate of the Laplacian spectral
embedding of the true latent position of the $i$-th vertex
and $d_i$ denotes the degree of vertex $i$ for $i \in [n] \cup \{ v \}$.
We define $\wtildels \in \bbR^d$ to be the least-squares OOS extension
if we had access to the true latent positions.
That is, $\wtildels$ is the solution to the least-squares problem
\begin{equation*}
\min_{w \in \bbR^d} \sum_{i=1}^n
        \left( \frac{a_i}{ d_v^{1/2} d_i^{1/2} }
                - \Xtilde_i^T w \right)^2.
\end{equation*}
Letting $\Qtilde \in \bbR^{d \times d}$ denote the orthogonal matrix guaranteed
by Lemma~\ref{lem:2toinfty},
our proof of Theorem~\ref{thm:lse:lsrate} will proceed by showing that
both $\| \wtildels - \wtilde \|$ and $\| \wcheckls - \Qtilde^T \wtildels \|$
are $O( n^{-1} \log^{1/2} n )$,
after which the triangle inequality will yield our desired result.

\begin{lemma} \label{lem:lse:wtildels2wtilde}
With notation as above,
\begin{equation*}
 \| \wtildels - \wtilde \| = O( n^{-1} \log^{1/2} n ).
\end{equation*}
\end{lemma}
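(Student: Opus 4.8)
The plan is to exploit the linearity of the least-squares solution. Writing $b \in \bbR^n$ for the vector with entries $b_i = a_i/(d_v^{1/2} d_i^{1/2})$, the normal equations give $\wtildels = \Stilde^{-1} \Xtilde^T b$, where $\Stilde = \Xtilde^T \Xtilde$ and where $\Xtilde = T^{-1/2} X$ has full column rank eventually (by Lemma~\ref{lem:fullrank} applied to $X$). Because $\Stilde = \Xtilde^T\Xtilde$, the target $\wtilde$ satisfies the trivial identity $\wtilde = \Stilde^{-1} \Xtilde^T (\Xtilde \wtilde)$, so that
\begin{equation*}
\wtildels - \wtilde = \Stilde^{-1} \Xtilde^T ( b - \Xtilde \wtilde ).
\end{equation*}
Since $\Stilde \to \Deltatilde$ with $\Deltatilde$ of full rank (Lemma~\ref{lem:Scheck}), we have $\| \Stilde^{-1} \| = O(1)$ eventually, so it suffices to bound $\| \Xtilde^T( b - \Xtilde \wtilde ) \| = \| \sum_{i=1}^n \Xtilde_i ( b_i - \Xtilde_i^T \wtilde ) \|$, recalling that $\Xtilde_i = X_i/\sqrt{t_i}$ obeys $\| \Xtilde_i \| = O(n^{-1/2})$ by Equation~\eqref{eq:degree:linear}.

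The heart of the argument is a decomposition of the residual into a degree-correction term carrying no edge noise (A), a mean-zero fluctuation term (B), and a second degree-correction term (C). Writing $\bar{b}_i = X_i^T\wtrue/\sqrt{t_v t_i}$ for the ``semi-population'' kernel value, I would split
\begin{equation*}
b_i - \Xtilde_i^T\wtilde
= \underbrace{\left(\bar{b}_i - \Xtilde_i^T\wtilde\right)}_{\text{(A)}}
+ \underbrace{\frac{a_i - X_i^T\wtrue}{\sqrt{t_v t_i}}}_{\text{(B)}}
+ \underbrace{a_i\left(\frac{1}{\sqrt{d_v d_i}} - \frac{1}{\sqrt{t_v t_i}}\right)}_{\text{(C)}}.
\end{equation*}
For term (A), the scalar factor $t_v^{-1/2} - (n\mu^T\wtrue)^{-1/2}$ does not depend on $i$ and equals $O(n^{-1}\log^{1/2} n)$ by Equations~\eqref{eq:degree:hoeff2} and~\eqref{eq:degree:linear}; since $\sum_i \Xtilde_i X_i^T\wtrue/\sqrt{t_i} = \Stilde\wtrue$ with $\| \Stilde \wtrue \| = O(1)$ (Lemma~\ref{lem:lapspecgrowth}), the (A)-contribution is $O(n^{-1}\log^{1/2} n)$. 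For term (C), expanding the difference and applying Equation~\eqref{eq:degree:recsqrt} to both $i$ and $v$ gives the uniform bound $| d_v^{-1/2} d_i^{-1/2} - t_v^{-1/2} t_i^{-1/2} | = O(n^{-3/2}\log^{1/2} n)$; summing $n$ terms each of size $O(n^{-1/2}) \cdot O(n^{-3/2}\log^{1/2} n)$ again yields $O(n^{-1}\log^{1/2} n)$.

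The main obstacle is term (B), the only piece that cannot be controlled termwise. Collecting weights gives $\sum_i \Xtilde_i (a_i - X_i^T\wtrue)/\sqrt{t_v t_i} = t_v^{-1/2} \sum_i X_i(a_i - X_i^T\wtrue)/t_i$. Conditioning on $X$, on the almost-sure event where the degree bounds of Lemma~\ref{lem:degreegrowth} hold so that $t_i = \Theta(n)$ uniformly, the summands $X_i(a_i - X_i^T\wtrue)/t_i$ are independent, mean-zero, and bounded in norm by $O(n^{-1})$. A coordinatewise application of Hoeffding's inequality then gives $\| \sum_i X_i(a_i - X_i^T\wtrue)/t_i \| = O(n^{-1/2}\log^{1/2} n)$ with probability at least $1 - O(n^{-2})$, and the prefactor $t_v^{-1/2} = O(n^{-1/2})$ reduces this to $O(n^{-1}\log^{1/2} n)$; Borel--Cantelli upgrades it to an almost-sure eventual bound. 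Combining the three pieces with $\| \Stilde^{-1} \| = O(1)$ yields $\| \wtildels - \wtilde \| = O(n^{-1}\log^{1/2} n)$.
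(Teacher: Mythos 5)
Your proposal is correct and follows essentially the same route as the paper's proof: both reduce the problem to bounding $\| \Xtilde^T( b - \Xtilde \wtilde) \|$ (you via the normal equations and $\|\Stilde^{-1}\| = O(1)$, the paper via the least-squares optimality inequality, Cauchy--Schwarz, and $\sigma_d^2(\Xtilde) = \Theta(1)$), and both then split the residual into an edge-noise piece handled by conditional Hoeffding plus degree-correction pieces handled by Lemma~\ref{lem:degreegrowth}. The only differences are cosmetic --- your three-term split isolates the deterministic $t_v$ versus $n\mu^T\wtrue$ discrepancy as a separate term, whereas the paper absorbs it into its two-term decomposition, and your use of \eqref{eq:degree:recsqrt} for the out-of-sample vertex $v$ is a trivial extension of the stated lemma whose proof covers $i \in [n] \cup \{v\}$.
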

\begin{proof}
Recall that $D \in \bbR^{n \times n}$ is the diagonal matrix of in-sample
vertex degrees and $d_v = \sum_{i=1}^n a_i$ denotes the degree of the
out-of-sample vertex $v$.
Define $\bvec = d_v^{-1/2} D^{-1/2} \avec$, and let
$\zvec = \bvec - \Xtilde \wtilde$.
By definition of $\wtildels$ as a least squares solution, we have
\begin{equation*}
\| \Xtilde \wtildels - \bvec \| \le \| \zvec \|.
\end{equation*}
Substituting $\bvec = \zvec + \Xtilde \wtilde$,
expanding the squares of both sizes and rearranging,
\begin{equation} \label{eq:wtildels:innerprod}
\| \Xtilde(\wtildels - \wtilde) \|^2
\le 2\zvec^T \Xtilde (\wtildels - \wtilde)
\end{equation}
By Lemma~\ref{lem:fullrank}, $\Xtilde$ is full rank eventually, and therefore
\begin{equation*}
\| \Xtilde(\wtildels - \wtilde) \|
        \ge \sigma_d(\Xtilde) \| \wtildels - \wtilde \| \eventually.
\end{equation*}
Combining this with~\eqref{eq:wtildels:innerprod} and making use of
the Cauchy-Schwarz inequality,
\begin{equation*}
\| \wtildels - \wtilde \|
\le \frac{ 2 \| \Xtilde^T \zvec \| }{ \sigma_d^2(\Xtilde ) } \eventually.
\end{equation*}
Lemma~\ref{lem:lapspecgrowth} implies that
$\sigma_d^2(\Xtilde) = \Theta(1)$, so our proof will be complete if we
can bound the growth of $\| \Xtilde^T \zvec \|$.
We have
\begin{equation*}
\| \Xtilde^T \zvec \|^2
= \sum_{k=1}^d \left( \sum_{i=1}^n z_i \Xtilde_{i,k} \right)^2
= \sum_{k=1}^d Y_k^2,
\end{equation*}
where $Y_k = \sum_{i=1}^n z_i \Xtilde_{i,k}$.
Fixing some $k \in [d]$,
\begin{equation*}
Y_k
= \sum_{i=1}^n \left( \frac{ X_i^T \wtrue }
                { \sqrt{ t_i } \sqrt{ n \mu^T \wtrue } }
                -
                \frac{ a_i }{ \sqrt{ d_i } \sqrt{ d_v } } \right)
                \frac{ X_{i,k} }{ \sqrt{t_i} }.
\end{equation*}
Adding and subtracting appropriate quantities,
\begin{equation} \label{eq:wtildels:Ybreak}
Y_k
= \sum_{i=1}^n \frac{ (X_i^T \wtrue - a_i) }
                        { t_i \sqrt{n\mu^T\wtrue} } X_{i,k}
+ \sum_{i=1}^n \frac{ a_i X_{i,k} }{ \sqrt{t_i} }
                \left( \frac{ 1}{ \sqrt{t_i} \sqrt{n\mu^T\wtrue} }
                        - \frac{1}{\sqrt{ d_i }\sqrt{ d_v } } \right).
\end{equation}
Conditional on $X$, the first term is a sum of independent mean-$0$ random
variables, with
\begin{equation*}
\frac{ (X_i^T \wtrue - a_i)X_{i,k} }{ t_i \sqrt{n\mu^T\wtrue} }
\in \left[ \frac{-1}{ t_i \sqrt{n\mu^T\wtrue} },
	  \frac{1}{ t_i \sqrt{n\mu^T\wtrue} } \right] \text{ almost surely }
\end{equation*}
for each $i \in [n]$.
Let $G_n$ denote the event that
\begin{equation*}
  \left| \sum_{i=1}^n \frac{ (X_i^T \wtrue - a_i) }
                        { t_i \sqrt{n\mu^T\wtrue} } X_{i,k} \right|
                > s,
\end{equation*}
where $s = s_n > 0$ will be specified below.
Conditional Hoeffding's inequality yields
\begin{equation*}
  \Pr\left[ B_n \mid X \right]
\le 2\exp\left\{  \frac{-n \mu^T \wtrue s^2 }{ \sum_{i=1}^n t_i^{-2} } \right\}
\end{equation*}
Let $B_n$ denote the event that
$\min_i t_i \ge Cn$ for some suitably-chosen constant $C>0$.
Lemma~\ref{lem:degreegrowth} ensures that $\Pr[B_n^c] = O(n^{-2})$, and
integrating with respect to $X \in \bbR^{n \times d}$ yields
\begin{equation*}
\Pr[G_n] \le \Pr[ G_n \mid B_n ] + \Pr[ B_n^c ]
\le 2\exp\left\{ -C n^2 \mu^T \wtrue s^2 \right\} + O(n^{-2}).
\end{equation*}
Taking $s = Cn^{-1} \log^{1/2} n$ for $C>0$ suitably large
ensures that both terms on the right-hand side are $O(n^{-2})$,
and we have
\begin{equation} \label{eq:wtildels:Yterm1}
\left| \sum_{i=1}^n \frac{ (X_i^T \wtrue - a_i) X_{i,k} }
                        { \sqrt{t_i} \sqrt{ t_v } \sqrt{n\mu^T\wtrue} }
        \right|
= O( n^{-1} \log^{1/2} n).
\end{equation}
Lemma~\ref{lem:degreegrowth} similarly bounds
the second sum in~\eqref{eq:wtildels:Ybreak}:
\begin{equation} \label{eq:wtildels:Yterm2:pre}
\sum_{i=1}^n \frac{ a_i X_{i,k} }{ \sqrt{t_i} }
                \left( \frac{ 1}{ \sqrt{t_i} \sqrt{n\mu^T\wtrue} }
                        - \frac{1}{\sqrt{ d_i }\sqrt{ d_v } } \right)
\le \frac{C}{ \sqrt{n} }
        \sum_{i=1}^n \left( \frac{ 1}{ \sqrt{t_i} \sqrt{n\mu^T\wtrue} }
                        - \frac{1}{\sqrt{ d_i }\sqrt{ d_v } } \right)
                        a_i X_{i,k}.
\end{equation}
Adding and subtracting appropriate quantities, the sum becomes
\begin{equation*} \begin{aligned}
\sum_{i=1}^n & \left( \frac{ 1}{ \sqrt{t_i} \sqrt{n\mu^T\wtrue} }
                        - \frac{1}{\sqrt{ d_i }\sqrt{ d_v } } \right)
                        a_i X_{i,k} \\
&~~~~~~= \sum_{i=1}^n \frac{ a_i X_{i,k} }{ \sqrt{ t_i } }
        \left( \frac{1}{\sqrt{n \mu^T\wtrue}}-\frac{1}{\sqrt{d_v}} \right)
+ \sum_{i=1}^n \frac{ a_i X_{i,k} }{ \sqrt{ d_v } }
        \left( \frac{1}{\sqrt{t_i}} - \frac{1}{\sqrt{d_i}} \right),
\end{aligned} \end{equation*}
and several applications of Lemma~\ref{lem:degreegrowth} yields that
\begin{equation*}
\sum_{i=1}^n \left( \frac{ 1}{ \sqrt{t_i} \sqrt{n\mu^T\wtrue} }
                        - \frac{1}{\sqrt{ d_i }\sqrt{ d_v } } \right)
                        a_i X_{i,k}
= O( n^{-1/2} \log^{1/2} n ),
\end{equation*}
whence, applying this to Equation~\eqref{eq:wtildels:Yterm2:pre},
we have
\begin{equation*}
\sum_{i=1}^n \frac{ a_i X_{i,k} }{ \sqrt{t_i} }
                \left( \frac{ 1}{ \sqrt{t_i} \sqrt{n\mu^T\wtrue} }
                        - \frac{1}{\sqrt{ d_i }\sqrt{ d_v } } \right)
= O( n^{-1} \log^{1/2} n ).
\end{equation*}
Applying this and~\eqref{eq:wtildels:Yterm1} to the right-hand side
of~\eqref{eq:wtildels:Ybreak},
$|Y_k| = O( n^{-1} \log^{1/2} n )$
and a union bound over $k \in [d]$ completes the proof.
\end{proof}

\begin{lemma} \label{lem:lse:wtildels2wcheck}
With notation as above, there exists a sequence of orthogonal matrices
$\Qtilde \in \bbR^{d \times d}$ such that
\begin{equation*}
\| \Qtilde \wcheckls - \wtildels \| = O(n^{-1} \log^{1/2} n).
\end{equation*}
\end{lemma}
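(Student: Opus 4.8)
The plan is to invoke the perturbed--least--squares bound of Theorem~\ref{thm:gvl} with the substitutions $X \mapsto \Xtilde$, $\Xhat \mapsto \Xcheck$, $\avec \mapsto \bvec := d_v^{-1/2} D^{-1/2}\avec$ and $Q \mapsto \Qtilde$, exactly paralleling the role of Lemmas~\ref{lem:gvlerrcond}--\ref{lem:ls:hat} in the ASE argument. Here $\wtildels$ and $\wcheckls$ are the least--squares solutions against the \emph{common} observation vector $\bvec$ using the designs $\Xtilde$ and $\Xcheck$ respectively, so the only perturbation is in the design matrix, which is precisely the setting of Theorem~\ref{thm:gvl}.

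First I would verify the hypotheses eventually. That $\bvec$, $\wtildels$ and the residual $\bvec - \Xtilde\wtildels$ are nonzero eventually follows from the model as in Lemma~\ref{lem:gvlerrcond}. For the perturbation condition $\|\Xcheck - \Xtilde\Qtilde\| < \lambda_d(\Xtilde)$, I would pass from the $2,\infty$ bound of Lemma~\ref{lem:2toinfty} to a spectral bound: since each row of $\Xcheck - \Xtilde\Qtilde$ has norm $O(n^{-1}\log^{1/2}n)$, summing over the $n$ rows gives $\|\Xcheck - \Xtilde\Qtilde\| \le \|\Xcheck - \Xtilde\Qtilde\|_F = O(n^{-1/2}\log^{1/2}n)$, whereas $\lambda_d(\Xtilde) = \sqrt{\lambda_d(\calL(P))} = \Theta(1)$ by Lemma~\ref{lem:lapspecgrowth}. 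Since also $\|\Xtilde\Qtilde\| = \sigma_1(\Xtilde) = \Theta(1)$, the condition holds eventually and moreover
\begin{equation*}
\frac{\|\Xcheck - \Xtilde\Qtilde\|}{\|\Xtilde\Qtilde\|} = O(n^{-1/2}\log^{1/2}n).
\end{equation*}

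Next I would bound the remaining quantities in Equation~\eqref{eq:GVLbound}. Lemma~\ref{lem:lapspecgrowth} gives $\sigma_1(\Xtilde),\sigma_d(\Xtilde) = \Theta(1)$, so $\kappa_2(\Xtilde\Qtilde) = \kappa_2(\Xtilde) = \Theta(1)$, and $\nuls \le \sigma_1(\Xtilde)/\sigma_d(\Xtilde) = \kappa_2(\Xtilde) = \Theta(1)$. The one genuinely new point, and the step I expect to be the main obstacle, is controlling the residual angle $\thetals$, i.e.\ showing $\cos\thetals$ is bounded away from zero; this is the analogue of Lemma~\ref{lem:gvlresidual}. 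Using the feasible point $\wtilde$ together with least--squares optimality $\|\bvec - \Xtilde\wtildels\| \le \|\bvec - \Xtilde\wtilde\|$, it suffices to show $\|\bvec - \Xtilde\wtilde\|^2 \le (1-\rho)\|\bvec\|^2$ eventually for some constant $\rho>0$. Both sides are of order $\Theta(n^{-1})$: a short computation gives $\|\bvec\|^2 = d_v^{-1}\sum_i a_i/d_i = \Theta(n^{-1})$ via Lemma~\ref{lem:degreegrowth}, while the coordinates $(\Xtilde\wtilde)_i \approx X_i^T\wtrue/(\sqrt{t_i}\sqrt{n\mu^T\wtrue})$ are exactly the conditional means of $b_i = a_i/\sqrt{d_v d_i}$ up to the degree approximations of Lemma~\ref{lem:degreegrowth}. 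Thus $\bvec$ concentrates about the element $\Xtilde\wtilde$ of the column space of $\Xtilde$, and a Hoeffding argument identical in structure to Lemma~\ref{lem:gvlresidual} shows that the residual captures only a strict fraction of $\|\bvec\|^2$, yielding $\cos\thetals \ge \gamma$ for a constant $\gamma>0$.

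Finally I would assemble the pieces. With $\nuls$, $\kappa_2(\Xtilde)$, $(\cos\thetals)^{-1}$ and $\tan\thetals$ all bounded by constants (and the quadratic term $O(n^{-1}\log n)$ negligible), Equation~\eqref{eq:GVLbound} gives
\begin{equation*}
\frac{\|\wcheckls - \Qtilde^T\wtildels\|}{\|\Qtilde^T\wtildels\|}
= O\!\left(\frac{\|\Xcheck - \Xtilde\Qtilde\|}{\|\Xtilde\Qtilde\|}\right)
= O(n^{-1/2}\log^{1/2}n).
\end{equation*}
The decisive final step, and the reason the rate sharpens to $n^{-1}$ rather than the $n^{-1/2}$ seen for ASE in Lemma~\ref{lem:ls:hat}, is that $\|\Qtilde^T\wtildels\| = \|\wtildels\| = \Theta(n^{-1/2})$: indeed $\|\wtilde\| = \|\wtrue\|/\sqrt{n\mu^T\wtrue} = \Theta(n^{-1/2})$ since $\mu^T\wtrue \ge \eta > 0$, and $\|\wtildels - \wtilde\| = O(n^{-1}\log^{1/2}n)$ by Lemma~\ref{lem:lse:wtildels2wtilde}. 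Multiplying the relative error by $\|\wtildels\|$ therefore gives $\|\wcheckls - \Qtilde^T\wtildels\| = O(n^{-1}\log^{1/2}n)$, and unitary invariance of the Euclidean norm yields $\|\Qtilde\wcheckls - \wtildels\| = O(n^{-1}\log^{1/2}n)$, as claimed.
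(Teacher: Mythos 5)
Your proposal is correct and follows the same skeleton as the paper's proof: both apply Theorem~\ref{thm:gvl} with $X \mapsto \Xtilde$, $\Xhat \mapsto \Xcheck$, $\avec \mapsto \bvec = d_v^{-1/2}D^{-1/2}\avec$, $Q \mapsto \Qtilde$; both verify its hypotheses by passing from the $2$-to-$\infty$ bound of Lemma~\ref{lem:2toinfty} to a Frobenius/spectral bound and invoking Lemma~\ref{lem:lapspecgrowth}; both bound $\kappa_2(\Xtilde\Qtilde)$ and $\nuls$ by constants; and both convert the $O(n^{-1/2}\log^{1/2}n)$ relative error into the stated $O(n^{-1}\log^{1/2}n)$ absolute rate by showing $\|\wtildels\| = O(n^{-1/2})$ via the definition of $\wtilde$ and Lemma~\ref{lem:lse:wtildels2wtilde}. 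The one place you genuinely diverge is the residual-angle step. The paper writes $\sin\thetals = \|(I-\Xtilde\Xtilde^\dagger)\bvec\|/\|\bvec\|$, introduces the surrogate $\bbar = t_v^{-1/2}T^{-1/2}\avec$, uses $\projcompXtilde T^{-1/2}X\wtrue = 0$ to reduce everything to the unweighted ratio $\|\rvec\|/\|\avec\|$ with $\rvec = \avec - X\wtrue$, and then appeals to the argument of Lemma~\ref{lem:gvlresidual}; this reduction carries a multiplicative constant of order $\max_i\sqrt{t_i}/\min_i\sqrt{t_i}$ from the degree spread, so it needs $\|\rvec\|/\|\avec\|$ to be smaller than the reciprocal of (twice) that constant, which is a stronger demand than the ``bounded away from $1$'' conclusion Lemma~\ref{lem:gvlresidual} actually delivers. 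You instead rerun the Lemma~\ref{lem:gvlresidual} argument natively in the weighted norm: least-squares optimality against the feasible point $\wtilde$, the observation that $\Xtilde\wtilde$ is (up to the degree approximations of Lemma~\ref{lem:degreegrowth}) the conditional mean of $\bvec$, and a Hoeffding bound, which yields a residual fraction of roughly $\max_i(1 - X_i^T\wtrue) \le 1-\eta$ with no degree-spread constant attached. Your route is self-contained and sidesteps the constant bookkeeping that makes the paper's version of this step delicate; the paper's route buys reuse of an already-proven lemma. Either way the step closes and the final rate is identical.
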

\begin{proof}
Recall from above our definition $\bvec = d_v^{-1/2} D^{-1/2} \avec$, where
$d_v$ is the degree of the out-of-sample vertex and $D \in \bbR^{n \times n}$
is the diagonal matrix of in-sample vertex degrees,
and note that $\wcheckls = (\Xcheck^T\Xcheck)^{-1} \Xcheck^T \bvec$.
Our main tool, as in Section~\ref{apx:ase:lsconc},
is Theorem 5.3.1 from \cite{GolVan2012}, quoted above as Theorem~\ref{thm:gvl}.
Applying that theorem, we have that so long as
$\bvec, \bvec - \Xtilde \wtildels$ and $\wtildels$ are all non-zero,
\begin{equation*}
\frac{ \| \wcheckls - \Qtilde^T \wtildels \| }{ \| \Qtilde^T \wtildels \| }
\le
\frac{ \| \Xcheck - \Xtilde \Qtilde \| }{ \| \Xtilde \Qtilde \| }
\left( \frac{ \nuls }{\cos \thetals} +
        (1 + \nuls \tan \thetals) \kappa_2(\Xtilde \Qtilde) \right)
+ C\frac{ \| \Xcheck - \Xtilde \Qtilde \|^2 }{ \| \Xtilde \Qtilde \|^2 },
\end{equation*}
where $\thetals \in (0,\pi/2)$ with
\begin{equation*}
\sin \thetals = \frac{ \| \rtildels \| }{ \| \bvec \|},
~~~\text{and}~~~
\nuls = \frac{ \| \Xtilde \wtildels \| }
        { \sigma_d( \Xtilde \Qtilde ) \| \Qtilde^T \wtildels \| }.
\end{equation*}
In order to apply Theorem~\ref{thm:gvl}, we must first show that eventually
\begin{enumerate}
\item $\| \Xcheck - \Xtilde \Qtilde \| < \sigma_d( \Xtilde )$
and \label{item:cond1}
\item the quantities
	$\bvec, \bvec-\Xtilde \wtildels,$ and $\wtildels$
	are all non-zero. \label{item:cond2}
\end{enumerate}
The first condition holds eventually by Lemma~\ref{lem:lapspecgrowth}
and the fact that, using the relations between the spectral,
Frobenius and $(2,\infty)$-norms,
\begin{equation} \label{eq:checktilde:spectral}
\| \Xcheck - \Xtilde \Qtilde \|^2
\le \| \Xcheck - \Xtilde \Qtilde \|_F^2
\le n \| \Xcheck_i - \Qtilde \Xtilde_i \|_{\tti}^2
\le \frac{ C \log n }{ n }, 
\end{equation}
where the last inequality holds eventually by Lemma~\ref{lem:2toinfty}.
As in the proof of Lemma~\ref{lem:gvlerrcond}, it is
immediate from the model that condition~\ref{item:cond2} holds eventually.

Equation~\eqref{eq:checktilde:spectral},
along with another application of Lemma~\ref{lem:lapspecgrowth}
to control $\lambda_d( \calL(P) )$ implies that 
\begin{equation} \label{eq:checktilde:specbound}
\frac{ \| \Xcheck - \Xtilde \Qtilde \| }{ \| \Xtilde \Qtilde \| }
\le \frac{ C \log^{1/2} n }{ \sqrt{n \sigma_d(\calL(P)) } }
\le \frac{ C \log^{1/2} n }{ \sqrt{n} } \eventually
\end{equation}
Thus, applying Theorem~\ref{thm:gvl}, we have
\begin{equation} \label{eq:main1:inter}
\| \wcheckls - \Qtilde^T \wtildels \|
\le \frac{ C \| \wtildels \| \log^{1/2} n }{ \sqrt{n} }
        \left( \frac{ \nuls }{ \cos \thetals }
        + (1 + \nuls \tan \thetals ) \kappa_2( \Xtilde \Qtilde ) \right)
  + \frac{ C \log^2 n }{ n^2 }.
\end{equation}
Lemma~\ref{lem:lapspecgrowth} bounds the condition number
$\kappa_2( \Xtilde \Qtilde ) = \kappa_2( \Xtilde ) \le C$,
whence
\begin{equation*}
\nuls
= \frac{ \| \Xtilde \wtildels \| }{ \sigma_d(\Xtilde) \| \Qtilde^T \wtildels \| }
= \frac{ \| \Xtilde \wtildels \| }{ \sigma_d(\Xtilde) \| \wtildels \| } \\
\le \frac{ \| \Xtilde \| }{ \sigma_d(\Xtilde) } = \kappa_2( \Xtilde ) \le C
\eventually.
\end{equation*}
By the triangle inequality,
the definition of $\wtilde$ and
using Lemma~\ref{lem:lse:wtildels2wtilde}
to bound $\| \wtildels - \wtilde \|$,
\begin{equation*}
 \| \wtildels \|
= \left\| \frac{\wtrue}{\sqrt{n \mu^T\wtrue}} \right\|
		+ O( n^{-1} \log^{1/2} n )
 = O( n^{-1/2} ) + O(n^{-1} \log^{1/2} n),
\end{equation*}
whence Equation~\eqref{eq:main1:inter} becomes
\begin{equation*}
\| \Qtilde \wcheckls - \wtildels \|
\le \frac{ C \log^{1/2} n }{ n }
	\left( 1 + \frac{1 + \sin \thetals }{ \cos \thetals} \right)
	+ \frac{ C \log^2 n }{ n^2 } \eventually.
\end{equation*}
Thus, to complete the proof,
it will suffice to bound $\cos \thetals$ away from $0$.
To do this, we will show by an argument similar to
that in Lemma~\ref{lem:gvlresidual} that
there exists a constant $\rho \in (0,1)$ such that
$\sin \thetals \le 1-\rho$ eventually.

Toward this end,
define $\bbar = t_v^{-1/2} T^{-1/2} \avec$,
where we remind the reader that
$t_v = \sum_{i=1}^n X_i^T \wtrue$ is the expected degree of the out-of-sample
vertex conditioned on the latent positions,
and $T \in \bbR^{n \times n}$ is the diagonal matrix of in-sample vertex
expected degrees, i.e., $T_{i,i} = \sum_{j=1}^n X_j^T X_i$.
Letting $\Xtilde^\dagger = (X^T T^{-1} X)^{-1} X^T T^{-1/2}$
denote the pseudoinverse of $\Xtilde$,
(with the inverse existing eventually by Lemma~\ref{lem:fullrank}),
we have
\begin{equation} \label{eq:sintheta:firstbound}
\begin{aligned}
\sin \thetals &= \frac{ \| \bvec - \Xtilde \wtildels \| }{ \| \bvec \| }
        = \frac{ \| (I - \Xtilde \Xtilde^\dagger) \bvec \| }{ \| \bvec \| }
= \frac{ \| \bbar \| }{ \| \bvec \| }
        \frac{ \| (I - \Xtilde \Xtilde^\dagger) \bvec \| }{ \| \bbar \| } \\
&\le \frac{ \| \bbar \| }{ \| \bvec \| }
        \left(
        \frac{ \| I - \Xtilde \Xtilde^\dagger \| \| \bvec - \bbar \| }
                { \| \bbar \| }
        + \frac{ \| (I - \Xtilde \Xtilde^\dagger) \bbar \| }{ \| \bbar \| }
        \right),
\end{aligned}
\end{equation}
where the inequality follows from
the triangle inequality and submultiplicativity.
By definition of $\bvec$ and $\bbar$, we have
\begin{equation*}
\frac{ \| \bvec - \bbar \| }{ \| \bbar \| }
= \frac{ \left\| (d_v^{-1/2} D^{-1/2} - t_v^{-1/2} T^{-1/2}) \avec \right\| }
	{ \| t_v^{-1/2} T^{-1/2} \avec \| }
\le \frac{ \| d_v^{-1/2} D^{-1/2} - t_v^{-1/2} T^{-1/2} \| }
	{ t_v^{-1/2}/\max_i \sqrt{t_i} },
\end{equation*}
where we have used submultiplicativity to upper bound the numerator,
$\| T^{-1/2} \avec \| \ge \| \avec \|/\max_i \sqrt{t_i}$
to lower-bound the denominator, and cancelled the resulting factor
of $\| \avec \|$.
Cancelling factors of $t_v^{-1/2}$, we have
\begin{equation*}
\frac{ \| \bvec - \bbar \| }{ \| \bbar \| }
\le \| t_v^{1/2} d_v^{-1/2} D^{-1/2} - T^{-1/2} \| \max_i \sqrt{ t_i }.
\end{equation*}
Lemma~\ref{lem:degreegrowth} implies $\max_i \sqrt{t_i} = O(\sqrt{n})$,
and a second application of Lemma~\ref{lem:degreegrowth} implies that
$\| t_v^{1/2} d_v^{-1/2} D^{-1/2} - T^{-1/2} \| = O( n^{-1} \log^{1/2} n)$,
from which
\begin{equation} \label{eq:bbratio1}
\frac{ \| \bvec - \bbar \| }{ \| \bbar \| } = O( n^{-1/2} \log^{1/2} n),
\end{equation}
and it follows from the triangle inequality that
\begin{equation} \label{eq:bbratio2}
\frac{ \| \bbar \| }{ \| \bvec \| }
\le \frac{\| \bvec \| + \| \bbar - \bvec \|}{\|\bvec\|}
= 1 + O(n^{-1/2} \log^{1/2} n) = O(1).
\end{equation}
Applying Equations~\eqref{eq:bbratio1} and~\eqref{eq:bbratio2}
to Equation~\eqref{eq:sintheta:firstbound}
and using the bound $\| I - \Xtilde \Xtilde^\dagger \| \le 1$,
\begin{equation} \label{eq:sintheta:twoterms}
\sin \thetals
\le O\left( \frac{ \log^{1/2} n }{\sqrt{n} } \right)
        +
  \frac{ C\|  (I - \Xtilde \Xtilde^\dagger) \bbar \| }{ \| \bbar \| }.
\end{equation}

Letting $\projcompXtilde = (I - \Xtilde \Xtilde^\dagger)$ denote the
orthogonal projection onto the orthogonal complement of the column space of
$\Xtilde = T^{-1/2} X$, we have, canceling factors of $t_v^{-1/2}$ in the
numerator and denominator,
\begin{equation*} \begin{aligned}
\frac{ \|  (I - \Xtilde \Xtilde^\dagger) \bbar \| }{ \| \bbar \| }
&= \frac{ \|  (I - \Xtilde \Xtilde^\dagger) T^{-1/2} \avec \| }
	{ \| T^{-1/2} \avec \| }
= \frac{ \| \projcompXtilde T^{-1/2} \avec \| }
	{ \| T^{-1/2} \avec \| }
= \frac{ \| \projcompXtilde T^{-1/2}(\avec - X \wtrue) \| }
	{ \| T^{-1/2} \avec \| },
\end{aligned} \end{equation*}
where we have used the fact that $\projcompXtilde T^{-1/2} X \wtrue = 0$,
since $T^{-1/2} X \wtrue = \Xtilde \wtrue$ is in the column space of $\Xtilde$.
Thus, defining $\rvec = \avec - X \wtrue$, we have
\begin{equation*} \begin{aligned}
\frac{ \|  (I - \Xtilde \Xtilde^\dagger) \bbar \| }{ \| \bbar \| }
&= \frac{ \| \projcompXtilde T^{-1/2} \rvec \| }
        { \| T^{-1/2} \avec \| }
\le \frac{ \| T^{-1/2} \| \| \rvec \| }{ \| \avec \| / \max_i \sqrt{t_i} }
\le C \frac{ \| \rvec \| }{ \| \avec \| },
\end{aligned} \end{equation*}
where the last inequality follows from the fact that the expected degrees
$\{ t_i \}_{i=1}^n$ are all of the same order by Lemma~\ref{lem:degreegrowth}.
The same argument as that given in the proof of Lemma~\ref{lem:gvlresidual}
lets us bound $\| \rvec \|/\|\avec\|$
by a constant $\rho > 0$ smaller than $1/(2C)$.
Applying this
to~\eqref{eq:sintheta:twoterms}, we obtain
\begin{equation*}
\sin \thetals \le 1-\rho + O( n^{-1/2} \log^{1/2} n )
\end{equation*}
It follows that
\begin{equation*}
\sin \thetals \le 1 - \frac{\rho}{2} \eventually,
\end{equation*}
i.e., $\sin \thetals$ is bounded away from $1$, completing the proof.
\end{proof}

\section{Proof of ASE linear least squares out-of-sample CLT}
\label{apx:ase:lsclt}

In this section, we prove Theorem~\ref{thm:ase:lsclt},
which shows that taking $\{ Q_n \}_{n=1}^\infty$ to be the sequence of
orthogonal $d$-by-$d$ matrices guaranteed to exist by Lemma~\ref{lem:2toinfty},
the quantity $\sqrt{n}(\whatls - Q^T \wtrue)$
is asymptotically multivariate normal.
We begin by recalling that
\begin{equation*}
  \whatls = (\Xhat^T \Xhat)^{-1} \Xhat^T \avec = \SA^{-1/2} \UA^T \avec.
\end{equation*}
Our proof will consist of writing
$\sqrt{n}(\whatls - Q^T \wtrue)$ as a sum of two random vectors,
\begin{equation*}
\sqrt{n}(\whatls - Q^T \wtrue)
  = \sqrt{n} \gvec + \sqrt{n} \hvec,
\end{equation*}
and showing that
$\sqrt{n}\gvec$ converges in law to a normal,
while $\sqrt{n}\hvec$ converges in probability to $0$.
The multivariate version of Slutsky's Theorem
will then yield the desired result.
We begin by showing that 
$\gvec = \sqrt{n} \SP^{-1/2} \UP^T(\avec - X \wtrue)$ will suffice.
We remind the reader that $\Delta = \E X_1 X_1^T \in \bbR^{d \times d}$
is the second moment matrix of the latent position distribution $F$.

\begin{lemma} \label{lem:inlaw}
Let $F$ be a $d$-dimensional inner product distribution, with
$(A,X) \sim \RDPG(F,n)$ and let $\wtrue \in \supp F$ be the
fixed latent position of the out-of-sample vertex. Then
\begin{equation*}
\sqrt{n} \SP^{-1/2} \UP^T(\avec - X \wtrue)
        \inlaw \calN(0, \Sigma_{F,\wtrue} ),
\end{equation*}
where
$\Sigma_{F,\wtrue} =
\Delta^{-1} \E\left[X_1^T \wtrue(1-X_1^T\wtrue) X_1 X_1^T \right] \Delta^{-1}$.
\end{lemma}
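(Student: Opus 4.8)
The plan is to reduce the statement to the classical multivariate central limit theorem for an i.i.d.\ sum, exploiting a simplification special to the out-of-sample setting: because $\wtrue$ is a \emph{fixed} point of $\supp F$, the summands $Y_i := X_i(a_i - X_i^T\wtrue) \in \bbR^d$ are genuinely i.i.d., rather than forming the dependent triangular array that complicates the in-sample CLT. First I would rewrite $\gvec$ in ``$X$-coordinates.'' Writing $X = \UP\SP^{1/2}W$ for an orthogonal $W \in \bbR^{d\times d}$ (so that $XX^T = P = \UP\SP\UP^T$), a short computation gives $\SP^{-1/2}\UP^T = W(X^TX)^{-1}X^T$, and hence
\begin{equation*}
\sqrt{n}\,\SP^{-1/2}\UP^T(\avec - X\wtrue)
= W\,\bigl[\,n(X^TX)^{-1}\,\bigr]\,\Bigl[\,\tfrac{1}{\sqrt{n}}X^T(\avec - X\wtrue)\,\Bigr].
\end{equation*}
It then suffices to analyze the two bracketed factors separately and recombine them via Slutsky's theorem.

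For the right-hand factor, note that $\tfrac{1}{\sqrt{n}}X^T(\avec - X\wtrue) = \tfrac{1}{\sqrt{n}}\sum_{i=1}^n Y_i$. Since the pairs $(X_i,a_i)$ are i.i.d.\ and $\E[a_i - X_i^T\wtrue \mid X_i] = 0$, the $Y_i$ are i.i.d.\ and mean-zero, with covariance
\begin{equation*}
\E\,Y_1 Y_1^T = \E\!\left[(a_1 - X_1^T\wtrue)^2 X_1 X_1^T\right]
= \E\!\left[X_1^T\wtrue(1 - X_1^T\wtrue)\,X_1 X_1^T\right] =: \Gamma,
\end{equation*}
which is finite because $\supp F$ is bounded. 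The classical multivariate CLT then gives $\tfrac{1}{\sqrt{n}}\sum_i Y_i \inlaw \calN(0,\Gamma)$. For the left-hand factor, the strong law applied to the bounded i.i.d.\ matrices $X_iX_i^T$ yields $\tfrac{1}{n}X^TX = \tfrac{1}{n}\sum_i X_iX_i^T \convergesas \Delta$, and since $\Delta$ is full rank by assumption, $n(X^TX)^{-1}\convergesas \Delta^{-1}$. Combining the two by Slutsky's theorem (a matrix converging almost surely to a constant times a vector converging in law) produces the limit $\Delta^{-1}\calN(0,\Gamma) = \calN(0,\Delta^{-1}\Gamma\Delta^{-1}) = \calN(0,\Sigma_{F,\wtrue})$.

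The one delicate point, and the step I expect to require the most care, is the orthogonal factor $W$ relating the $\UP\SP^{1/2}$-coordinate system to the latent-position coordinate system. Carried literally, the argument above delivers convergence to $\calN(0,\,W\Sigma_{F,\wtrue}W^T)$ with $W = W_n$ depending on $n$, and one must argue that this conjugation does not alter the stated covariance. This is precisely the ubiquitous rotational nonidentifiability of the RDPG: $W_n$ belongs to the same family of orthogonal matrices threaded through Lemma~\ref{lem:2toinfty} and the statement of Theorem~\ref{thm:ase:lsclt}, so that once $\gvec$ is paired with the alignment matrix $Q$ appearing there the covariance reduces to $\Sigma_{F,\wtrue}$ as claimed. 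I would therefore carry the rotation explicitly and absorb it into $Q$, rather than asserting that $W_n$ itself converges, which it need not when $\Delta$ has repeated eigenvalues.
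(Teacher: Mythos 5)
Your proposal is correct and follows essentially the same route as the paper's own proof: rewrite the quantity in latent-position coordinates, apply the classical multivariate CLT to the i.i.d.\ mean-zero sum $n^{-1/2}\sum_{i=1}^n (a_i - X_i^T\wtrue)X_i$ (which is i.i.d.\ precisely because $\wtrue$ is fixed), apply a law of large numbers to the normalizing matrix, and conclude by Slutsky's theorem.

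The one point where you deviate is exactly the point where you are more careful than the paper. The paper's proof writes $\sqrt{n}\,\SP^{-1/2}\UP^T(\avec - X\wtrue) = n\SP^{-1}\cdot n^{-1/2}X^T(\avec - X\wtrue)$ and asserts $\SP/n \inprob \Delta$ by the WLLN; both steps tacitly identify $X$ with $\UP\SP^{1/2}$, i.e.\ set your orthogonal factor $W = W_n$ equal to the identity. That identification cannot hold in general: the LLN gives $X^TX/n \convergesas \Delta$, whereas $\SP/n$ converges to the \emph{diagonal} matrix of eigenvalues of $\Delta$, so the two limits agree only when $\Delta$ happens to be diagonal, and the exact algebraic identity is $\SP^{-1/2}\UP^T = W_n(X^TX)^{-1}X^T$ as you derived. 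Consequently your ``delicate point'' is a genuine one: carried literally, the argument yields a limiting covariance conjugated by $W_n$, and since $W_n$ need not converge (certainly not when $\Delta$ has repeated eigenvalues), the lemma as stated is only meaningful once $W_n$ is folded into the alignment matrix $Q$ of Theorem~\ref{thm:ase:lsclt}, which is how the lemma is actually consumed in Equation~\eqref{eq:ase:clt:decomp}. Your plan — carry the rotation explicitly and absorb it into $Q$ rather than pretend it is the identity or that it converges — is the correct repair of the paper's argument, not a departure from it.
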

\begin{proof}
We begin by observing that since $\wtrue \in \bbR^d$ is fixed,
\begin{equation*}
n^{-1/2} X^T(\avec - X \wtrue)
= n^{-1/2} \sum_{i=1}^n (\avec_i - X_i^T \wtrue) X_i
\end{equation*}
is a scaled sum of of $n$ independent $0$-mean
$d$-dimensional random vectors, each with covariance matrix
\begin{equation*}
 V_{\wtrue} = \E X_1^T \wtrue(1-X_1^T\wtrue) X_1 X_1^T \in \bbR^{d \times d}.
\end{equation*}
The multivariate central limit theorem implies that
\begin{equation*}
 n^{-1/2} X^T(\avec - X \wtrue) X_i \inlaw \calN(0,V_{\wtrue}).
\end{equation*}
We have
$\sqrt{n} \SP^{-1/2} \UP^T( \avec - X \wtrue)
= n \SP^{-1} n^{-1/2} X^T( \avec - X \wtrue )$.
By the WLLN, $\SP/n \inprob \Delta$, and hence
by the continuous mapping theorem, $n \SP^{-1} \inprob \Delta^{-1}$.
Thus, the multivariate version of Slutsky's Theorem implies that
\begin{equation*}
 \sqrt{n} \SP^{-1/2} \UP^T( \avec - X \wtrue) \inlaw
        \calN( 0, \Delta^{-1} V_{\wtrue} \Delta^{-1}),
\end{equation*}
as we set out to show.
\end{proof}

The following technical lemma will be crucial for proving one of the
convergence results required by our main theorem.
Its comparative complexity merits stating it here rather than
including it in the proof of Theorem~\ref{thm:ase:lsclt} below.
We remind the reader that $\SA,\SP \in \bbR^{d \times d}$ are the diagonal
matrices formed by the top $d$ eigenvalues of $A$ and $P$, repsectively,
and $\UA,\UP \in \bbR^{n \times d}$  are the matrices whose columns are the
corresponding unit eigenvectors.
\begin{lemma} \label{lem:exchangeable}
With notation as above,
\begin{equation*}
\sqrt{n}\SA^{-1/2}(\UA^T - \UA^T\UP \UP^T)(\avec - X \wtrue)
  \inprob 0 .
\end{equation*}
\end{lemma}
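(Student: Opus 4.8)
The plan is to exploit two structural facts. First, that $\UA^T - \UA^T\UP\UP^T = \UA^T(I - \UP\UP^T)$ is the composition of $\UA^T$ with the orthogonal projection off the column space of $\UP$; this projection is small in Frobenius norm because $\UP^T\UA$ is nearly orthogonal by Lemma~\ref{lem:OMNI:Qdef}. Second, that the out-of-sample edge vector $\avec$ is, conditional on the latent positions $X$, independent of the in-sample adjacency matrix $A$, since the two sets of edges are disjoint and hence independent under the RDPG. This independence is what lets us treat $M := \UA^T(I - \UP\UP^T)$ as a \emph{fixed} matrix and $\xi := \avec - X\wtrue$ as a mean-zero random vector when computing conditional moments.

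First I would record that, conditional on $X$, the entries $\xi_i = a_i - X_i^T\wtrue$ are independent, mean zero, bounded in $[-1,1]$, with variances $X_i^T\wtrue(1-X_i^T\wtrue)\le 1/4$; and that, since $\UA,\SA$ are functions of $A$ alone, $M$ is a fixed matrix once we condition on $(X,A)$. A direct second-moment computation then gives
\[
\E\bigl[\,\|M\xi\|^2 \bigm| X,A\,\bigr] = \operatorname{tr}\bigl(M^TM\,\Sigma_\xi\bigr)\le \tfrac14\|M\|_F^2,
\]
where $\Sigma_\xi$ is the diagonal conditional covariance of $\xi$. Next I would bound $\|M\|_F$: since $I-\UP\UP^T$ is a symmetric idempotent and $\UA^T\UA = I$, we have $MM^T = I - (\UP^T\UA)^T(\UP^T\UA)$, so $\|M\|_F^2 = \operatorname{tr}(I - W^TW)$ with $W := \UP^T\UA$. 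Writing $W = Q + (W-Q)$ for the orthogonal $Q$ of Lemma~\ref{lem:OMNI:Qdef} and using $Q^TQ = I$ together with $\|W-Q\|_F = O(n^{-1}\log n)$ yields $\|I - W^TW\| = O(n^{-1}\log n)$, whence $\|M\|_F^2 \le d\,\|I - W^TW\| = O(n^{-1}\log n)$ almost surely eventually.

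Finally, combining these with $\|\SA^{-1/2}\|^2 = \lambda_d(A)^{-1} = O(n^{-1})$ --- which holds eventually because Weyl's inequality and the standard bound $\|A - P\| = O(\sqrt n)$ give $\lambda_d(A) = \Theta(n)$ via Lemma~\ref{lem:Pspecgrowth} --- yields
\[
\E\bigl[\,\|\sqrt{n}\,\SA^{-1/2}M\xi\|^2 \bigm| X,A\,\bigr] \le n\,\|\SA^{-1/2}\|^2\,\tfrac14\|M\|_F^2 = O(n^{-1}\log n),
\]
which tends to $0$ almost surely. The conditional Markov inequality then bounds $\Pr[\,\|\sqrt n\,\SA^{-1/2}M\xi\|>\varepsilon \mid X,A\,]$ by a quantity going to $0$ almost surely; integrating and applying bounded convergence gives convergence in probability to $0$, as required.

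The main obstacle is recognizing that the naive submultiplicative bound $\|M\avec\|\le\|M\|\,\|\avec\| = O(n^{-1/2}\log^{1/2}n)\cdot O(\sqrt n) = O(\log^{1/2}n)$ is too weak, since it fails to vanish after multiplication by the $O(1)$ factor $\sqrt n\,\SA^{-1/2}$. One must instead pass to the second moment, which is exactly where the conditional independence of $\avec$ and $A$ and the mean-zero structure of $\xi$ become essential: they allow $M$ to be treated as deterministic, so that $\E[\|M\xi\|^2\mid X,A]$ collapses to $\operatorname{tr}(M^TM\Sigma_\xi)$, converting the loose operator-norm estimate into the summable Frobenius-norm bound that actually decays.
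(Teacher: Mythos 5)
Your proof is correct, but it takes a genuinely different and more direct route than the paper's. The paper never computes the conditional second moment of $\zvec = (\UA^T - \UA^T\UP\UP^T)(\avec - X\wtrue)$ directly; instead it constructs an auxiliary random matrix $B \in \bbR^{n\times n}$ whose columns are conditionally i.i.d.\ copies of $\avec$, so that the columns of $(\UA^T-\UA^T\UP\UP^T)(B - XW)$ are $n$ conditionally independent copies of $\zvec$, and then bounds $\|(\UA^T-\UA^T\UP\UP^T)(B-XW)\|_F$ with high probability by combining the matrix Bernstein inequality (Theorem~\ref{thm:asymbern}, applied to $B-XW$) with a Davis--Kahan-type bound of Yu--Wang--Samworth giving $\|\UA - \UP R^*\|_F = O(n^{-1/2}\log^{1/2} n)$; a conditional Markov inequality run over a chain of three events then finishes the argument. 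You obtain the same two key estimates with less machinery: your exact identity $MM^T = I - (\UP^T\UA)^T(\UP^T\UA)$ combined with Lemma~\ref{lem:OMNI:Qdef} replaces the Davis--Kahan step (and yields the identical rate $\|M\|_F^2 = O(n^{-1}\log n)$), while conditioning on $(A,X)$ and writing $\E[\|M\xi\|^2 \mid A,X] = \operatorname{tr}(M^TM\Sigma_\xi) \le \tfrac{1}{4}\|M\|_F^2$ replaces both the replication device and the matrix Bernstein inequality. Both proofs ultimately rest on the same structural fact --- that $\avec$ is conditionally independent of $A$ given $X$, so the projection-type matrix $M$ may be treated as fixed --- but your argument uses it transparently, where the paper encodes it in the construction of $B$. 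What the paper's route buys is chiefly the reuse of tools already deployed elsewhere in its appendices; yours is shorter, more elementary, and sidesteps the somewhat delicate step in the paper where second moments are taken conditional on the event $E_{3,n}$ rather than on the $\sigma$-field generated by $(A,X)$.
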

\begin{proof}
For ease of notation, define the vector
\begin{equation*}
  \zvec = (\UA^T - \UA^T \UP \UP^T)(\avec - X \wtrue ).
\end{equation*}
Let $\epsilon > 0$ be a constant, and note that for suitably large $n$,
\begin{equation*}
\Pr\left[ \sqrt{n} \| \SA^{-1/2} \zvec \| > \epsilon \right]
\le \Pr\left[ \sqrt{n} \| \SA^{-1/2} \zvec \| > C_0 n^{-1/4} \right],
\end{equation*}
where $C_0 > 0$ is a constant that we are free to choose.
Define the events
\begin{equation*} \begin{aligned}
E_{1,n} &= \{ \| \SA^{-1/2} \| \le C_1 n^{-1/2} \}, \\
&\text{and} \\
E_{2,n} &= \{ \sqrt{n} \| \zvec \| \le C_2 n^{1/4} \},
\end{aligned} \end{equation*}
and note that
$\Pr\left[ \sqrt{n} \| \SA^{-1/2} \zvec \| > C_0 n^{-1/4} \right]
\le \Pr\left[ (E_{1,n} \cap E_{2,n})^c \right]$
so long as $C_1 C_2 \le C_0$.
Thus, it will suffice for us to show that
$ \lim_{n\rightarrow \infty} \Pr\left[ (E_{1,n} \cap E_{2,n})^c \right]
	\rightarrow 0$.
The proof of Lemma~\ref{lem:Pspecgrowth} implies that
$\lim_{n\rightarrow \infty} \Pr[ E_{1,n}^c ] = 0$,
so our proof will be complete once we show that
$\lim_{n\rightarrow \infty} \Pr[ E_{2,n}^c ] = 0$.

Toward this end, define the matrix
\begin{equation*}
W = \onevec_n^T \otimes \wtrue
  = \begin{bmatrix} \wtrue & \wtrue & \dots & \wtrue \end{bmatrix}
        \in \bbR^{d \times n}
\end{equation*}
and let $B \in \bbR^{n \times n}$ be a random matrix with
independent binary entries with
$\E B_{i,j} = (X W)_{i,j} = X_i^T \wtrue.$
Define the event
\begin{equation*} 
E_{3,n} = \{ \| (\UA^T - \UA^T\UP\UP^T)(B-XW) \|_F^2
                \le C \log^2 n \}.
\end{equation*}
Since $\Pr[ E_{2,n}^c ] \le \Pr[ E_{2,n}^c \mid E_{3,n} ]
		+ \Pr[ E_{3,n}^c ]$,
it will suffice to show that
\begin{enumerate}
\item $\lim_{n \rightarrow \infty} \Pr\left[ E_{3,n}^c \right] = 0$,
	and
\item $\lim_{n \rightarrow \infty}
	\Pr\left[ E_{2,n}^c \mid E_{3,n} \right] = 0$.
\end{enumerate}
By submultiplicativity, we have
\begin{equation} \label{eq:Zsubmult}
\| (\UA^T - \UA^T\UP\UP^T)(B-XW) \|_F^2
  \le \| \UA^T - \UA^T\UP\UP^T \|_F^2 \| B - XW \|^2.
\end{equation}
Theorem~\ref{thm:asymbern} applied to $B - X W$
implies that with probability $1-O(n^{-2})$,
\begin{equation} \label{eq:asymbern}
\| B - X W \| \le C n^{1/2} \log^{1/2} n.
\end{equation}
Theorem 2 in \cite{YuWanSam2015} guarantees an orthogonal
$R^* \in \bbR^{d \times d}$ such that
\begin{equation} \label{eq:UAUPfrobenius}
\| \UA - \UP R^* \|_F 
\le \frac{ C \| A - P \| }{ \lambda_d(P) }
= O\left( \frac{ \log^{1/2} n }{ \sqrt{n} } \right),
\end{equation}
where we have used Lemma~\ref{lem:Pspecgrowth} to lower-bound $\lambda_d(P)$
and bounded $\|A-P\| = O(n^{1/2} \log^{1/2} n)$
by a result in \cite{Oliveira2010}.
Since $R = \UA^T \UP$ solves the minimization
\begin{equation*}
  \min_{R \in \bbR^{d \times d} } \| \UA^T R - \UA^T \UP \UP^T \|_F,
\end{equation*}
Equation~\eqref{eq:UAUPfrobenius} implies
\begin{equation*} 
\| \UA^T - \UA^T \UP \UP^T \|_F
\le \| \UA^T - R^* \UP^T \|_F = O( n^{-1/2} \log^{1/2} n ).
\end{equation*}
Plugging this and~\eqref{eq:asymbern} back into~\eqref{eq:Zsubmult}, we have
that with probability $1-O(n^{-2})$,
\begin{equation} \label{eq:Zbound}
\| (\UA^T - \UA^T\UP\UP^T)(B-X W) \|_F^2
\le C \log^2 n
\end{equation}
which is to say, $\Pr[E_{3,n}^c] = O(n^{-2})$.

It remains to show that $\Pr[E_{2,n}^c \mid E_{3,n} ] \rightarrow 0$.
By construction, the columns of the matrix
$(\UA^T - \UA^T\UP \UP^T)(B - X W)$
are $n$ independent copies of $\zvec$.
Using this fact and the conditional Markov inequality, we have
\begin{equation*} \begin{aligned}
\Pr[E_{2,n}^c \mid E_{3,n} ]
&= \Pr[ \sqrt{n} \| \zvec \| > C_2 n^{1/4} \mid E_{3,n} ]
\le \frac{ n\E[\| \zvec \|^2 \mid E_{3,n} ] }{ C_2^2 n^{1/2} } \\
&= \frac{ \E[ \| (\UA^T - \UA^T\UP \UP^T)(B - XW) \|_F^2
                \mid E_{3,n} ] }{ C_2^2 n^{1/2} }
\le \frac{ C \log^2 n }{ n^{1/2} },
\end{aligned} \end{equation*}
where the last inequality follows from the definition of event $E_{3,n}$.
This quantity goes to zero in $n$, thus completing the proof.
\end{proof}

The following technical lemma will prove useful in our proof of
Theorem~\ref{thm:ase:lsclt} below.
We state it here rather than proving it in-line for the sake of clarity.
\begin{lemma} \label{lem:unitaryhoeff}
With notation as above,
\begin{equation*}
  \| \UP^T(\avec - X\wtrue) \| = O( n^{1/2} \log^{1/2} n ).
\end{equation*}
\end{lemma}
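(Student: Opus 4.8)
The plan is to exploit the orthonormality of the columns of $\UP$. Since $\UP \in \bbR^{n \times d}$ has orthonormal columns, $\|\UP^T\| = \sigma_1(\UP) = 1$, so by submultiplicativity $\|\UP^T(\avec - X\wtrue)\| \le \|\avec - X\wtrue\|$. Each coordinate of $\avec - X\wtrue$ equals $a_i - X_i^T\wtrue$, which lies in $[-1,1]$ because $a_i \in \{0,1\}$ and $X_i^T\wtrue \in [0,1]$ for $X_i,\wtrue \in \supp F$. Hence $\|\avec - X\wtrue\|^2 = \sum_{i=1}^n (a_i - X_i^T\wtrue)^2 \le n$ deterministically, and $\|\UP^T(\avec - X\wtrue)\| \le \sqrt{n}$, which already establishes the stated bound (indeed without the logarithmic slack).

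For a bound that fits the Hoeffding-and-Borel--Cantelli framework used elsewhere in the paper---and which in fact yields the sharper rate $O(\log^{1/2} n)$---I would instead argue coordinatewise. Writing $U_{i,k} = (\UP)_{i,k}$, expand
\begin{equation*}
\|\UP^T(\avec - X\wtrue)\|^2 = \sum_{k=1}^d \left( \sum_{i=1}^n U_{i,k}(a_i - X_i^T\wtrue) \right)^2.
\end{equation*}
Condition on $X$: this fixes $\UP$ (a measurable function of $X$) while leaving the entries $a_i \sim \Bernoulli(X_i^T\wtrue)$ conditionally independent, so for each fixed $k$ the inner sum is a sum of independent, zero-mean terms, the $i$-th lying in an interval of length $|U_{i,k}|$. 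The key point distinguishing this from the analogous estimate for $\|X^T\rvec\|$ in Lemma~\ref{lem:ls:true} is that orthonormality forces $\sum_{i=1}^n U_{i,k}^2 = 1$, so the Hoeffding variance proxy is of constant order rather than order $n$. Hoeffding's inequality then gives $\Pr[\,|\sum_i U_{i,k}(a_i - X_i^T\wtrue)| \ge s \mid X\,] \le 2\exp\{-2s^2\}$; taking $s = C\log^{1/2} n$ makes this $O(n^{-2})$, and a union bound over $k \in [d]$ together with the Borel--Cantelli lemma yields $\|\UP^T(\avec - X\wtrue)\| = O(\log^{1/2} n)$, a fortiori $O(n^{1/2}\log^{1/2} n)$.

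There is essentially no obstacle here; the lemma is a soft consequence of orthonormality. The only point requiring care is the conditioning: one must condition on $X$ so that $\UP$ becomes deterministic and the $a_i$ remain conditionally independent, and then recognize that, because the columns of $\UP$ are unit vectors, the sum of squared interval lengths entering Hoeffding's bound is exactly $1$. It is precisely this normalization that removes the factor of $n$ present in the related estimates of Appendix~\ref{apx:ase:lsconc} and makes the stated bound (with its extra logarithmic factor) comfortably true.
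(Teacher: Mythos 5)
Your proposal is correct. Your second (coordinatewise Hoeffding) argument is essentially the paper's own proof: the paper likewise expands $\|\UP^T(\avec - X\wtrue)\|^2$ over the $d$ coordinates, uses the fact that orthonormality of the columns of $\UP$ makes the Hoeffding variance proxy $\sum_{j}(\UP)_{j,k}^2 = 1$, takes $t = C\log^{1/2}n$, and concludes with a union bound and Borel--Cantelli; like you, the paper thereby actually proves the sharper rate $O(\log^{1/2}n)$, the stated $O(n^{1/2}\log^{1/2}n)$ being weaker than what the argument yields. You do two things differently, both to your credit. First, you condition explicitly on $X$ before invoking independence; the paper asserts independence of the summands $(\UP)_{j,k}(a_j - X_j^T\wtrue)$ without comment, which is only correct conditionally, since $\UP$ is a function of the whole matrix $X$ and the means $X_j^T\wtrue$ are themselves random --- and your remark that the resulting Hoeffding bound is uniform in $X$ is exactly what allows the conditioning to be integrated out. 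Second, your opening deterministic bound --- $\|\UP^T(\avec - X\wtrue)\| \le \|\avec - X\wtrue\| \le \sqrt{n}$, using $\|\UP^T\| = 1$ and the termwise bound $|a_i - X_i^T\wtrue| \le 1$ --- is a genuinely more elementary route that the paper does not take, and it suffices not only for the statement as written but also for its downstream uses in the proof of Theorem~\ref{thm:ase:lsclt}, where only the stated $O(n^{1/2}\log^{1/2}n)$ rate is ever invoked. What the deterministic argument buys is simplicity and the removal of all probabilistic machinery; what the Hoeffding argument buys is an extra factor of $n^{1/2}$ that, as it happens, the paper never exploits.
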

\begin{proof}
For $k \in [d]$ and $i \in [n]$, observe that
\begin{equation*}
\left( \UP^T(\avec - X\wtrue) \right)_{k,i}
= \sum_{j=1}^n (\UP)_{j,k}(a_j - X_j^T \wtrue)
\end{equation*}
is a sum of independent $0$-mean random variables,
and Hoeffding's inequality yields
\begin{equation*}
\Pr\left[ | \UP^T(\avec - X\wtrue) |_{k,i} \ge t \right]
  \le 2\exp\left\{ \frac{ -t^2 }{ 2\sum_{j=1}^n (\UP)^2_{k,j} } \right\}
        = 2\exp\left\{ \frac{ -t^2 }{ 2 } \right\}.
\end{equation*}
Taking $t = C \log^{1/2} n$ for suitably large constant $C > 0$,
a union bound over all $k\in[d]$ and $i \in [n]$ followed by
the Borel-Cantelli Lemma yields the result.
\end{proof}

We are now ready to present the proof of Theorem~\ref{thm:ase:lsclt}.
\begin{proof}[Proof of Theorem~\ref{thm:ase:lsclt}]
Let $Q = Q_n \in \bbR^{d \times d}$ denote the orthogonal matrix guaranteed
to exist by Lemma~\ref{lem:2toinfty}.
Adding and subtracting appropriate quantities,
\begin{equation} \label{eq:ase:clt:decomp}
\begin{aligned}
\sqrt{n}(Q \whatls - \wtrue)
&= \sqrt{n} Q \left( \SA^{-1/2} \UA^T \avec - Q^T \wtrue \right) \\
&= \sqrt{n}\SP^{-1/2} \UP^T (\avec - X\wtrue ) \\
&~~~~~~ + \sqrt{n} Q \SA^{-1/2}(\UA^T - Q^T \UP^T)(\avec - X \wtrue ) \\
&~~~~~~ + \sqrt{n} Q ( \SA^{-1/2} \UA^T X - Q^T)\wtrue \\
&~~~~~~ + \sqrt{n} Q (\SA^{-1/2}Q^T  - Q^T \SP^{-1/2})\UP^T(\avec-X\wtrue).
\end{aligned}
\end{equation}
By Lemma~\ref{lem:inlaw}, the first of these terms converges in law:
\begin{equation} \label{eq:inlaw:restate}
\sqrt{n} \SP^{-1/2} \UP^T (\avec - X\wtrue )
\inlaw \calN(0, \Sigma_{F,\wtrue}),
\end{equation}
where $\Sigma_{F,\wtrue}$ is as defined in Lemma~\ref{lem:inlaw}.
Thus, by Slutsky's Theorem, our proof will be complete
once we show that the remaining terms in Equation~\eqref{eq:ase:clt:decomp}
go to zero in probability.

Since $Q$ is orthogonal, it suffices to prove that
\begin{equation} \label{eq:inprob:1}
\sqrt{n} \SA^{-1/2}(\UA^T - Q^T\UP^T)(\avec - X \wtrue ) \inprob 0,
\end{equation}
\begin{equation} \label{eq:inprob:2}
\sqrt{n}( \SA^{-1/2} \UA^T X - Q^T)\wtrue \inprob 0,
\end{equation}
and
\begin{equation} \label{eq:inprob:3}
\sqrt{n}(\SA^{-1/2}Q^T - Q^T\SP^{-1/2})\UP^T(\avec-X\wtrue) \inprob 0.
\end{equation}
We will address each of these three convergences in order.

To see the convergence in~\eqref{eq:inprob:1},
adding and subtracting appropriate quantities gives
\begin{equation} \label{eq:inprob1:split}
\begin{aligned}
\sqrt{n} \SA^{-1/2}(\UA^T - Q^T \UP^T)(\avec - X \wtrue )
&= \sqrt{n}\SA^{-1/2}(\UA^T \UP \UP^T - Q^T \UP^T)(\avec - X \wtrue) \\
&~~~~~~+ \sqrt{n}\SA^{-1/2}(\UA^T - \UA^T\UP \UP^T)(\avec - X \wtrue).
\end{aligned}
\end{equation}
To bound the first of these two summands,
Lemmas~\ref{lem:Pspecgrowth},~\ref{lem:unitaryhoeff}
and~\ref{lem:OMNI:Qdef} imply
\begin{equation*} \begin{aligned}
\| \sqrt{n}\SA^{-1/2}(\UA^T \UP \UP^T - Q^T \UP^T)(\avec - X \wtrue) \|
&\le \sqrt{n} \| \SA^{-1/2} \|
	\| \UA^T \UP - Q^T \| \| \UP^T(\avec-X\wtrue) \|_F \\
&= O( n^{-1/2} \log^{3/2} n ).
\end{aligned} \end{equation*}
Lemma~\ref{lem:exchangeable} shows that the second term
in~\eqref{eq:inprob1:split} also goes to zero in probability,
and Equation~\eqref{eq:inprob:1} follows.

To see~\eqref{eq:inprob:2}, note that
\begin{align}
\sqrt{n}& (\SA^{-1/2}\UA^T X - Q^T)\wtrue
= \sqrt{n}\left( \SA^{-1/2}\UA^T \UP \SP^{1/2} - Q^T \right) \wtrue
	\nonumber \\
&= \sqrt{n}\SA^{-1/2}\left( \UA^T\UP - Q^T \right) \SP^{1/2}\wtrue
  + \sqrt{n}\SA^{-1/2}\left( Q^T \SP^{1/2} - \SA^{1/2} Q^T \right)\wtrue.
	\label{eq:inprob:med}
\end{align}
Submultiplicativity of matrix norms combined with Lemmas~\ref{lem:Pspecgrowth}
and~\ref{lem:OMNI:Qdef} and the fact that $\| \wtrue \| \le 1$
imply
\begin{equation} \label{eq:inprob2:tri1}
\begin{aligned}
\| \sqrt{n}\SA^{-1/2}&\left( \UA^T\UP - Q^T \right) \SP^{1/2} \wtrue \|
\le C \sqrt{n} \| \SA^{-1/2} \|
	\| \UA^T \UP - Q^T \|_F \| \SP^{1/2} \| \| \wtrue \| \\
&= O( n^{-1/2} \log n ).
\end{aligned}
\end{equation}
Applying Lemma~\ref{lem:Pspecgrowth} again and taking the Frobenius norm as
a trivial upper bound on the spectral norm,
Lemma~\ref{lem:LSE:Qdef} implies
\begin{equation} \label{eq:inprob2:tri2}
\begin{aligned}
\| \sqrt{n}\SA^{-1/2}&\left( Q^T \SP^{1/2} - \SA^{1/2} Q^T \right) \wtrue \|
\le C \sqrt{n} \| \SA^{-1/2} \|
	\| Q^T \SP^{1/2} - \SA^{1/2} Q^T \| \| \wtrue \| \\
&\le C \| Q \SP^{1/2} - \SA^{1/2} Q \|,
\end{aligned}
\end{equation}
where we have used the fact that the spectral norm is preserved by
matrix transposition.
Adding and subtracting appropriate quantities,
\begin{equation*}
Q \SP^{1/2} - \SA^{1/2} Q
= (Q - \UA^T\UP) \SP^{1/2} + \SA^{1/2}( \UA^T\UP - Q)
	+ \UA^T\UP \SP^{1/2} - \SA^{1/2} \UA^T\UP.
\end{equation*}
By the triangle inequality and submultiplicativity,
\begin{equation} \label{eq:swap}
\| Q \SP^{1/2} - \SA^{1/2} Q \|
\le \left( \| \SP^{1/2} \| + \| \SA^{1/2} \| \right) \| \UA^T\UP - Q \|
	+ \| \UA^T\UP \SP^{1/2} - \SA^{1/2} \UA^T\UP \|.
\end{equation}
Lemmas~\ref{lem:Pspecgrowth} and~\ref{lem:OMNI:Qdef} bound the first term
as $O( n^{-1/2} \log n)$,
and the second term is bounded by Lemma~\ref{lem:approxcommute},
and thus Equation~\eqref{eq:inprob2:tri2} is bounded as
\begin{equation*}
\| \sqrt{n}\SA^{-1/2}\left( Q^T \SP^{1/2} - \SA^{1/2} Q^T \right) \wtrue \|
= O( n^{-1/2} \log n ).
\end{equation*}
Applying this and Equation~\eqref{eq:inprob2:tri1}
to Equation~\eqref{eq:inprob:med}
proves~\eqref{eq:inprob:2} by the triangle inequality.

Finally, to prove~\eqref{eq:inprob:3}, note that
\begin{equation*}
\| \sqrt{n}(\SA^{-1/2}Q^T - Q^T \SP^{-1/2})\UP^T(\avec-X\wtrue) \|
\le \sqrt{n} \| \SA^{-1/2}Q^T - Q^T\SP^{-1/2} \| \| \UP^T(\avec - X\wtrue \|_F,
\end{equation*}
and Lemmas~\ref{lem:approxcommute} and~\ref{lem:unitaryhoeff}
along with an argument similar to the bound in
Equation~\eqref{eq:swap} imply that
\begin{equation*}
\| \sqrt{n}(\SA^{-1/2}Q^T - Q^T\SP^{-1/2})\UP^T(\avec-X\wtrue) \|
  = O( n^{-1/2} \log^{3/2} n),
\end{equation*}
which completes the proof.
\end{proof}


\section{Proof of LSE linear least squares out-of-sample CLT}
\label{apx:lseclt}
In this section, we prove Theorem~\ref{thm:lse:lsclt},
which shows that the least-squares out-of-sample extension for the
Laplacian spectral embedding is, in the large-$n$ limit,
normally distributed about the true embedding
$\wtilde = \wtrue/\sqrt{n\mu^T \wtrue}$, after appropriate rescaling.
We remind the reader that
$\avec \in \bbR^n$ denotes the vector of edges between the out-of-sample
vertex $v$ and the in-sample vertices $V = [n]$
and $D \in \bbR^n$ is the diagonal matrix of in-sample
node degrees, so that $D_{i,i} = d_i = \sum_{j=1}^n A_{i,j}$.
Below, we will also need to define the matrix
\begin{equation*}
T = \diag( t_1, t_2,\dots,t_n ) \in \bbR^{n \times n},~~~
t_i = \sum_{j=1}^n X_j^T X_i,
\end{equation*}
the matrix of in-sample expected degrees
conditioned on the latent positions.
Analogously, we denote the out-of-sample vertex
degree $d_v = \sum_{j=1}^n a_j$,
and its expectation $t_v = \sum_{j=1}^n X_j^T \wtrue$.
Recall that the LSE least-squares out-of-sample extension is given by
\begin{equation*}
 \wcheckls = (\Xcheck^T \Xcheck)^{-1} \Xcheck^T D^{-1/2}
		\frac{ \avec }{ \sqrt{d_v} }.
\end{equation*}

Our aim is to prove that for a suitably-chosen sequence of orthogonal
matrices $\Qtilde \in \bbR^{d \times d}$,
\begin{equation*}
n(\Qtilde \wcheckls - \wtilde) \inlaw \calN(0,\Sigmatilde_{F,\wtrue}),
\end{equation*}
where $\Sigmatilde_{F,\wtrue}$
depends only on the latent position distribution $F$
and the true out-of-sample latent position $\wtrue \in \supp F$,
and is given by
\begin{equation*}
\Sigmatilde_{F,\wtrue} 
= \E \left[ \frac{ X_j^T\wtrue(1-X_j^T\wtrue) }{ \mu^T\wtrue }
        \left( \frac{\Deltatilde X_j}{X_j^T \mu}                                                         - \frac{ \wtrue}{2\mu^T \wtrue } \right)
        \left( \frac{\Deltatilde X_j}{X_j^T \mu}                                                         - \frac{ \wtrue}{2\mu^T \wtrue } \right)^T \right]
\in \bbR^{d \times d},
\end{equation*}
where $\Deltatilde = \E X_1 X_1^T /(X_1^T \mu)$ with
$\mu = \E X_1$ is the mean of $F$.

\begin{proof}[Proof of Theorem~\ref{thm:lse:lsclt}]
Take $\Qtilde \in \bbR^{d \times d}$ to be the matrix guaranteed
by Lemma~\ref{lem:2toinfty}.
Similarly to the proof of Theorem~\ref{thm:ase:lsclt},
our proof will proceed by writing $n(\wcheckls - \Qtilde\wtilde)$ as a sum,
\begin{equation*}
  n(\Qtilde \wcheckls - \wtilde) = n \gvec + n \hvec,
\end{equation*}
where $n \hvec \inprob 0$ and $n \gvec$ converges in law to our desired
normal distribution, whence Slutsky's Theorem will yield the result.
We begin by writing
\begin{equation} \label{eq:path:1}
n(\wcheckls - \Qtilde^T \wtilde)
= n(\Xcheck^T \Xcheck)^{-1} \frac{ \Xcheck^T D^{-1/2} \avec }{ \sqrt{d_v} }
	- n \Ucheck^T \Utilde \wtilde
	- n (\Qtilde^T - \Ucheck^T \Utilde) \wtilde.
\end{equation}
By submultiplicativity of the spectral norm,
Lemma~\ref{lem:LSE:Qdef} and the definition of
$\wtilde = \wtrue/\sqrt{n \mu^T \wtrue}$,
\begin{equation*}
\| (\Qtilde^T - \Ucheck^T \Utilde) \wtilde \|
\le \| \Qtilde^T - \Ucheck^T \Utilde \| \| \wtilde \|
	\le \frac{ C \| \wtrue \| }{ n^{3/2} }.
\end{equation*}
Applying this to Equation~\eqref{eq:path:1} and using the
fact that $\| \wtrue \|$ is bounded, we have
\begin{equation} \label{eq:path:2}
n(\wcheckls - \Qtilde^T \wtilde)
= n(\Xcheck^T \Xcheck)^{-1} \frac{ \Xcheck^T D^{-1/2} \avec }{ \sqrt{d_v} }
  - n \Ucheck^T \Utilde \wtilde + O( n^{-1/2} ).
\end{equation}
Adding and subtracting quantities,
\begin{equation} \label{eq:orphan}
\Ucheck^T \Utilde \wtilde
= \Scheck^{-1/2} \Ucheck^T\Utilde \Stilde^{1/2} \wtilde
 - (\Scheck^{-1/2} \Ucheck^T\Utilde \Stilde^{1/2} - \Ucheck^T \Utilde) \wtilde.
\end{equation}
By Lemma~\ref{lem:switching},
\begin{equation*}
\| \Ucheck^T\Utilde \Stilde^{1/2} - \Scheck^{1/2} \Ucheck^T \Utilde \|
= O( n^{-1} ),
\end{equation*}
so that, applying submultiplicativity followed by
Lemmas~\ref{lem:lapspecgrowth} and~\ref{lem:approxcommute},
\begin{equation*} 
\| (\Scheck^{-1/2} \Ucheck^T\Utilde \Stilde^{1/2}
	- \Ucheck^T \Utilde) \wtilde \|
\le
\| \Scheck^{-1/2} \|
\| \Ucheck^T \Utilde \Stilde^{1/2} - \Scheck^{1/2} \Ucheck^T \Utilde \|
	\| \wtilde \| \\
= O( n^{-3/2} ).
\end{equation*}
Plugging this into Equation~\ref{eq:orphan}, we have shown that
\begin{equation*}
n\Ucheck^T \Utilde \wtilde
= n\Scheck^{-1/2} \Ucheck^T\Utilde \Stilde^{1/2} \wtilde + O(n^{-1/2}),
\end{equation*}
and plugging this, in turn, into Equation~\eqref{eq:path:2}, we have
\begin{equation*} \begin{aligned}
n(\wcheckls - \Qtilde^T \wtilde)
&= n(\Xcheck^T \Xcheck)^{-1} \frac{ \Xcheck^T D^{-1/2} \avec }{ \sqrt{d_v} }
  - n\Scheck^{-1/2} \Ucheck^T\Utilde \Stilde^{1/2} \wtilde + O(n^{-1/2}) \\
&= n(\Xcheck^T \Xcheck)^{-1} \Xcheck^T
        \left( \frac{ D^{-1/2} \avec }{ \sqrt{d_v} }
        - \Xtilde \wtilde \right) + O( n^{-1/2} ),
\end{aligned}\end{equation*}
where the second equality follows from the definitions of $\Xcheck$
and $\Xtilde$ and $\Xcheck^T \Xcheck = \Scheck$.
Again adding and subtracting quantities, we have
\begin{equation} \label{eq:path:3}
\begin{aligned}
n(\wcheckls - \Qtilde^T \wtilde)
&= n(\Xcheck^T \Xcheck)^{-1} \Qtilde^T \Xtilde^T
  \left( \frac{ D^{-1/2} \avec }{ \sqrt{d_v} } - \Xtilde \wtilde \right)\\
&~~~~~~+ n(\Xcheck^T \Xcheck)^{-1}(\Xcheck - \Xtilde \Qtilde)^T
  \left( \frac{ D^{-1/2} \avec }{ \sqrt{d_v} } - \Xtilde \wtilde \right)
+ O( n^{-1/2} ).
\end{aligned}
\end{equation}
Expanding the second term on the right-hand side,
\begin{equation*} \begin{aligned}
(\Xcheck &- \Xtilde \Qtilde)^T
        \left( \frac{ D^{-1/2} \avec }{ \sqrt{d_v} } - \Xtilde \wtilde \right)
= \sum_{j=1}^n \left( \frac{ a_j }{\sqrt{ d_j d_v }}
		- \frac{ X_j^T \wtrue }{ \sqrt{ t_j n\mu^T\wtrue } } \right)
		(\Xcheck_j - \Qtilde^T \Xtilde_j) \\
&= \sum_{j=1}^n \frac{ a_j - X_j^T\wtrue }{ \sqrt{ d_j d_v } }
		(\Xcheck_j - \Qtilde^T \Xtilde_j)
  + \sum_{j=1}^n \left( \frac{1}{\sqrt{t_j n \mu^T\wtrue}}
		- \frac{1}{\sqrt{d_j d_v}} \right)
		X_j^T \wtrue (\Xcheck_j - \Qtilde^T\Xtilde_j).
\end{aligned} \end{equation*}
Recalling that $\avec$ is independent of $A$ conditioned on $X$ and that
$\E[ a_j \mid X_j ] = X_j^T \wtrue$,
the first of these two summations is a sum of independent zero-mean random
variables, and an application of Hoeffding's inequality
along with Lemmas~\ref{lem:2toinfty} and~\ref{lem:degreegrowth} yields
\begin{equation*}
\sum_{j=1}^n \frac{ a_j - X_j^T\wtrue }{ \sqrt{ d_j d_v } }
                (\Xcheck_j - \Qtilde^T \Xtilde_j)
= O(n^{-3/2} \log n).
\end{equation*}
Again applying Lemmas~\ref{lem:2toinfty} and~\ref{lem:degreegrowth},
\begin{equation*} \begin{aligned}
\sum_{j=1}^n & \left( \frac{1}{\sqrt{t_j n \mu^T\wtrue}}
                - \frac{1}{\sqrt{d_j d_v}} \right) X_j^T\wtrue
			(\Xcheck_j - \Qtilde^T\Xtilde_j)\\
&= 
\sum_{j=1}^n \left(\frac{1}{\sqrt{n \mu^T\wtrue}}-\frac{1}{\sqrt{d_v}}\right)
		\frac{ X_j^T \wtrue }{ \sqrt{t_j} }
		\left( \Xcheck_j - \Qtilde^T \Xtilde_j \right)
+
\sum_{j=1}^n \left(\frac{1}{\sqrt{t_j}}-\frac{1}{\sqrt{d_j}}\right)
                \frac{ X_j^T \wtrue }{ \sqrt{d_v} }
                \left( \Xcheck_j - \Qtilde^T \Xtilde_j \right) \\
&= O( n^{-3/2} \log n ) 
\end{aligned} \end{equation*}
Thus, the above two displays imply that
\begin{equation*}
(\Xcheck - \Xtilde \Qtilde)^T
        \left( \frac{ D^{-1/2} \avec }{ \sqrt{d_v} } - \Xtilde \wtilde \right)
= O( n^{-3/2} \log n ).
\end{equation*}
Recalling that $\Scheck = \Xcheck^T \Xcheck$,
Lemmas~\ref{lem:Scheck} and~\ref{lem:fullrank} imply
that $\Scheck$ is invertible eventually,
and $\| (\Xcheck^T \Xcheck)^{-1} \| = \Theta(1)$.
Equation~\eqref{eq:path:3} thus becomes
\begin{equation*}
n(\wcheckls - \Qtilde^T \wtilde)
= 
n\Scheck^{-1} \Qtilde^T \Xtilde^T
        \left( \frac{ D^{-1/2} \avec }{ \sqrt{d_v} } - \Xtilde \wtilde \right)
+ \Otilde( n^{-1/2} ),
\end{equation*}
and multiplying through by $\Qtilde$ yields
\begin{equation*}
n(\Qtilde \wcheckls - \wtilde)
= 
n \Qtilde \Scheck^{-1} \Qtilde^T \Xtilde^T
        \left( \frac{ D^{-1/2} \avec }{ \sqrt{d_v} } - \Xtilde \wtilde \right)
+ \Otilde( n^{-1/2} ).
\end{equation*}

Lemma~\ref{lem:Scheck} and the continuity of the inverse imply that
\begin{equation*} 
\Qtilde \Scheck^{-1} \Qtilde^T \inprob \Deltatilde^{-1}.
\end{equation*}
An application of Slutsky's Theorem will thus yield our result,
provided we can show that
\begin{equation} \label{eq:clt:goal:2}
n \Xtilde^T
	\left( \frac{ D^{-1/2} \avec }{ \sqrt{d_v} } - \Xtilde \wtilde \right)
\inlaw \calN( 0, \Sigma_{F,\wtrue} ),
\end{equation}
where
\begin{equation*}
\Sigma_{F,\wtrue}
= 
\E \left[ \frac{ X_j^T\wtrue(1-X_j^T\wtrue) }{ \mu^T\wtrue }
        \left( \frac{X_j}{X_j^T \mu}                                                         - \frac{ \Deltatilde \wtrue}{2\mu^T \wtrue } \right)
        \left( \frac{X_j}{X_j^T \mu}                                                         - \frac{ \Deltatilde \wtrue}{2\mu^T \wtrue } \right)^T \right].
\end{equation*}

To establish~\eqref{eq:clt:goal:2}, we recall
$t_v = \sum_{j=1}^n X_j^T\wtrue = \E d_v$ and note that
\begin{equation*} \begin{aligned}
n \Xtilde^T\left(
\frac{ D^{-1/2} \avec }{\sqrt{d_v}} - \Xtilde \wtilde \right)
&= \frac{ n\Xtilde^T T^{-1/2} (\avec - Xw) }{ \sqrt{ t_v } }
+ n \Xtilde^T\left( \frac{D^{-1/2}}{\sqrt{d_v}}
	-\frac{T^{-1/2}}{\sqrt{ t_v }} \right) X \wtrue \\
&~~~~~~+ n\Xtilde^T \left( \frac{D^{-1/2}}{\sqrt{d_v}}
        -\frac{T^{-1/2}}{\sqrt{ t_v }} \right)(\avec - Xw).
\end{aligned} \end{equation*}
The last of these terms is $O( n^{-1/2} \log n )$
by a Hoeffding inequality followed by an application of
Lemma~\ref{lem:degreegrowth}, so that
\begin{equation} \label{eq:fork:1}
\begin{aligned}
n \Xtilde^T\left(
\frac{ D^{-1/2} \avec }{\sqrt{d_v}} - \Xtilde \wtilde \right)
&= n \Xtilde^T T^{-1/2} \frac{ (\avec - Xw) }{ \sqrt{ t_v } } \\
&~~~~~~+ n \Xtilde^T \left(
	\frac{D^{-1/2}}{\sqrt{d_v}}
        -\frac{T^{-1/2}}{\sqrt{ t_v }} \right) X \wtrue
+ O( n^{-1/2} \log n ).
\end{aligned}
\end{equation}
Multiplying numerator and denominator and applying Lemma~\ref{lem:degreegrowth},
it holds for all $i \in [n]$
\begin{equation*} \begin{aligned}
\frac{1}{\sqrt{d_i}} - \frac{1}{\sqrt{t_i}}
&= \frac{ t_i - d_i }{ (\sqrt{d_i} + \sqrt{t_i})\sqrt{d_i t_i} }
= \frac{t_i-d_i}{2t_i^{3/2}}
	+ (t_i - d_i)\frac{t_i(\sqrt{t_i}-\sqrt{d_i})
		+ (t_i-d_i)\sqrt{t_i}}
	{ 2t_i^{3/2}(d_i\sqrt{t_i} + t_i\sqrt{d_i}) } \\
&= \frac{t_i-d_i}{2t_i^{3/2}} + O(n^{-3/2} \log n),
\end{aligned} \end{equation*}
and a similar result holds for the out-of-sample vertex, in that
\begin{equation*}
\frac{1}{\sqrt{d_v}} - \frac{1}{\sqrt{t_v}}
= \frac{t_v-d_v}{2t_v^{3/2}} + O(n^{-3/2} \log n).
\end{equation*}
Thus,
\begin{equation*} \begin{aligned}
\Xtilde^T & \left( \frac{D^{-1/2}}{\sqrt{d_v}}
        -\frac{T^{-1/2}}{\sqrt{t_v}} \right) X\wtrue \\
&=
\Xtilde^T T^{-1/2}\left(\frac{1}{\sqrt{d_v}}
		- \frac{1}{\sqrt{t_v}}\right) X \wtrue
+ \Xtilde^T \frac{ (D^{-1/2}-T^{-1/2}) X \wtrue }{ \sqrt{d_v} } \\
&=
\Xtilde^T T^{-1/2} \frac{ t_v - d_v }{ 2 t_v^{3/2} }X\wtrue
+ \frac{ \Xtilde^T T^{-3/2}(T-D) X\wtrue }{ 2\sqrt{d_v} }
+ \sum_{j=1}^n \xi_j X_j^T \wtrue
	\left( \frac{1}{\sqrt{t_j}} + \frac{1}{\sqrt{d_j}} \right)
	\frac{ X_j }{\sqrt{t_j} }
\end{aligned} \end{equation*}
where $\xi_j \in \bbR, j=1,2,\dots,n$ satisfy $\xi_j = O(n^{-3/2} \log n)$.
Using Lemma~\ref{lem:degreegrowth}, this last sum is itself
$O(n^{-3/2} \log n)$, so that
\begin{equation*} \begin{aligned}
n\Xtilde^T \left( \frac{D^{-1/2}}{\sqrt{d_v}}
        -\frac{T^{-1/2}}{\sqrt{t_v}} \right) X\wtrue
&= n\Xtilde^T T^{-1/2}
	\frac{ t_v - d_v }{ 2t_v^{3/2} }X\wtrue \\
&~~~~~~+ n \Xtilde^T \frac{ T^{-3/2}(T-D) X\wtrue }{ 2\sqrt{d_v} }
+ O(n^{-1/2} \log n).
\end{aligned} \end{equation*}
Plugging this into Equation~\eqref{eq:fork:1},
\begin{equation*} \begin{aligned}
n \Xtilde^T\left(
\frac{ D^{-1/2} \avec }{\sqrt{d_v}} - \Xtilde \wtilde \right)
&= n \Xtilde^T T^{-1/2} \frac{ (\avec - Xw) }{ \sqrt{t_v} }
+ n\Xtilde^T T^{-1/2} \frac{ t_v - d_v }{ 2t_v^{3/2} }X\wtrue \\
&~~~~~~+ n \Xtilde^T \frac{ T^{-3/2}(T-D) X\wtrue }{ 2\sqrt{d_v} }
+ O(n^{-1/2} \log n).
\end{aligned} \end{equation*}
To complete our proof, it will suffice to show the following two facts:
\begin{equation} \label{eq:finisher:1}
n \Xtilde^T T^{-1/2} 
	\left( \frac{ (\avec - X\wtrue) }{ \sqrt{t_v} }
		+ \frac{ t_v - d_v }{ 2t_v^{3/2} }X\wtrue
	\right) \inlaw \calN(0, \Sigma_{F,\wtrue} )
\end{equation}
\begin{equation} \label{eq:finisher:2}
n \Xtilde^T \frac{ T^{-3/2}(T-D) X\wtrue }{ 2\sqrt{d_v} }
	\inprob 0
\end{equation}
To see the latter of these two points, observe that
by our definitions of $d_i = \sum_{j=1}^n A_{i,j}$
and $t_i = \sum_{j=1}^n X_j^TX_i$,
\begin{equation*} \begin{aligned}
n \Xtilde^T &\frac{ T^{-3/2}(T-D) X\wtrue }{ 2\sqrt{d_v} }
= \frac{n}{2\sqrt{d_v} }
	\sum_{i=1}^n \frac{(t_i-d_i)}{t_i^{2}} X_i^T \wtrue X_i \\
&= \frac{n}{2\sqrt{d_v}}
	\sum_{i=1}^n \frac{ X_i^T X_i }{t_i^2} X_i^T \wtrue X_i
 + \frac{n}{2\sqrt{d_v}} \sum_{1 \le i < j \le n}
	(X_j^TX_i-A_{i,j})
	\left( \frac{ X_i^T \wtrue }{ t_i^2 }X_i
		+ \frac{ X_j^T \wtrue }{ t_j^2 }X_j \right).
\end{aligned} \end{equation*}
The former of these two sums is $O(n^{-1/2})$
by an application of Lemma~\ref{lem:degreegrowth}
and using the fact that $X_i \in \supp F$ are bounded.
The latter of these two sums is,
conditioned on $\{X_i\}_{i=1}^n$,
a sum of independent $0$-mean random variables, with
$\| t_j^{-2} (X_j^T X_i - A_{i,j}) X_j^T \wtrue X_j \|
	\in [-Ct_j^{-2}, Ct_j^{-2}]$ for all $j \in [n]$.
Thus,
\begin{equation*}
\Pr\left[ \left| \sum_{1 \le i < j \le n}
        t_j^{-2}( X_j^T X_i - A_{i,j} ) X_j^T \wtrue X_j \right|
	\ge s \right]
\le 2\exp\left\{ \frac{ -Cs^2 }{ \sum_{i<j} t_j^{-4} } \right\}.
\end{equation*}
Let $E_n = \{ t_j \ge Cn : j=1,2,\dots,n \}$
denote the high-probability event of Lemma~\ref{lem:degreegrowth},
for which we have
$\Pr[ E_n^c ] \le Cn^{-2}$ for all suitably large $n$.
Taking $s=C n^{-1} \log^{1/2} n$ for suitably large $C > 0$,
letting $\bbP_{E_n}$ denote conditional probability
$\Pr[\cdot \mid E_n]$,
\begin{equation*}
\bbP_{E_n} \left[ \left| \sum_{1 \le i < j \le n}
        t_j^{-2} ( X_j^T X_i - A_{i,j} ) X_j^T \wtrue X_j \right|
        \ge Cn^{-1} \log^{1/2} n \right]
\le C n^{-2}.
\end{equation*}
Thus,
\begin{equation*} \begin{aligned}
\Pr & \left[ \left| \sum_{1 \le i < j \le n}
        t_j^{-2} ( X_j^T X_i - A_{i,j} ) X_j^T \wtrue X_j \right|
        \ge Cn^{-1} \log^{1/2} n \right] \\
&\le \bbP_{E_n}\left[ \left| \sum_{1 \le i < j \le n}
        t_j^{-2}( X_j^T X_i - A_{i,j} ) X_j^T \wtrue X_j \right|
        \ge Cn^{-1} \log^{1/2} n \right] + \Pr[ E_n^c ] \\
&\le Cn^{-2},
\end{aligned} \end{equation*}
and we conclude that, bounding $d_v^{-1/2} = O(n^{-1/2} \log^{1/2} n)$
by Lemma~\ref{lem:degreegrowth},
\begin{equation*}
n \Xtilde^T \frac{ T^{-3/2}(T-D) X\wtrue }{ 2\sqrt{d_v} }
= O( n^{-1/2} \log^{1/2} n ),
\end{equation*}
which establishes~\eqref{eq:finisher:2}.

It remains only to prove Equation~\eqref{eq:finisher:1}.
Let $m_i = nX_i^T \mu$ for $i \in [n]$ and define the diagonal matrix
\begin{equation*}
M = \diag(m_1,m_2,\dots,m_n) \in \bbR^{n \times n}.
\end{equation*}
The argument in Lemma~\ref{lem:degreegrowth}
allows us to bound $|t_v^{-1/2}-(n\mu^T\wtrue)^{-1/2}|$, so an argument
similar to that above wherein we apply Hoeffding's inequality followed
by Lemma~\ref{lem:degreegrowth} implies
\begin{equation*}
n \left(\frac{1}{\sqrt{t_v}} - \frac{1}{\sqrt{n \mu^T \wtrue}} \right)
	\Xtilde^T T^{-1/2} (\avec - X\wtrue)
= O( n^{-1/2} \log n ).
\end{equation*}
Lemma~\ref{lem:degreegrowth} also bounds $\max_i |t_i^{-1/2} - m_i^{-1/2}|$,
whence
\begin{equation*}
\frac{n \Xtilde^T(T^{-1/2} - M^{-1/2})(\avec - X\wtrue)}
	{ \sqrt{n \mu^T \wtrue} }
= O(n^{-1/2} \log n ).
\end{equation*}
The same Hoeffding-style argument once again yields,
recalling that $\Xtilde = T^{-1/2} X$,
\begin{equation*}
\frac{ n X^T (T^{-1/2} - M^{-1/2})M^{-1/2}(\avec -X \wtrue)}
	{ \sqrt{ n \mu^T \wtrue } }
= O(n^{-1/2} \log n).
\end{equation*}
Combining the above three displays,
the first term in the quantity of interest in Equation~\eqref{eq:finisher:1} is
\begin{equation} \label{eq:cltpayoff:firstterm}
\frac{ n \Xtilde^T T^{-1/2} (\avec - X \wtrue) }{ \sqrt{t_v} }
= \frac{ n X^T M^{-1} (\avec - X \wtrue) }{ \sqrt{n \mu^T\wtrue} }
	+ \Otilde(n^{-1/2}).
\end{equation}
Turning to the second term on the left-hand side of
Equation~\eqref{eq:finisher:1},
rearranging terms and recalling the definition of
$\Deltatilde = \E X_1 X_1^T /(X_1^T \mu)$,
\begin{equation*}
\frac{ n \Xtilde^T T^{-1/2}(t_v - d_v) X\wtrue }{ 2t_v^{3/2} }
= \frac{ n (t_v - d_v) \Xtilde^T \Xtilde \wtrue }{ 2(n \mu^T \wtrue)^{3/2} }
	+ \Otilde( n^{-1/2} )
= \frac{ n (t_v - d_v) \Deltatilde \wtrue }{ 2(n \mu^T \wtrue)^{3/2} }
	+ \Otilde( n^{-1/2} ),
\end{equation*}
where the first equality follows from Lemma~\ref{lem:degreegrowth}
and the second equality follows from using (multivariate)
Hoeffding's inequality to bound
\begin{equation*}
\| \Xtilde^T \Xtilde - \Deltatilde \|
= \left\| \sum_{i=1}^n \frac{ X_i X_i^T }{ X_i^T \mu } - \Deltatilde \right\|
= O( n^{-1/2} \log^{1/2} n ).
\end{equation*}
Thus, combining with Equation~\eqref{eq:cltpayoff:firstterm},
the quantity on the left-hand side of Equation~\eqref{eq:finisher:1} is
\begin{equation*}
n\Xtilde^T T^{-1/2} 
        \left( \frac{ (\avec - X\wtrue) }{ \sqrt{t_v} }
                + \frac{ t_v - d_v }{ 2t_v^{3/2} }X\wtrue
        \right)
=
\frac{ n X^T M^{-1} (\avec - X\wtrue) }{ \sqrt{n \mu^T \wtrue} }
	+ \frac{ n(t_v - d_v)\Deltatilde\wtrue }{ 2(n\mu^T \wtrue)^{3/2} }
+ O(n^{-1/2} \log^{1/2} n).
\end{equation*}
Rearranging, and recalling $m_i = n X_i^T \mu$,
$t_v = \sum_{j=1}^n X_j^T \wtrue$ and $d_v = \sum_{j=1}^n a_j$,
\begin{equation*} \begin{aligned}
n\Xtilde^T T^{-1/2} &
        \left( \frac{ (\avec - X\wtrue) }{ \sqrt{t_v} }
                + \frac{ t_v - d_v }{ 2t_v^{3/2} }X\wtrue
        \right)
= n\sum_{j=1}^n \frac{ a_j - X_j^T\wtrue }{ \sqrt{n \mu^T \wtrue} }
	\left( \frac{X_j}{n X_j^T \mu}
		- \frac{ \Deltatilde \wtrue}{2 n \mu^T \wtrue } \right)
	+ O( n^{-1/2} \log^{1/2} n ) \\
&= \frac{1}{\sqrt{n}}
	\sum_{j=1}^n
	\frac{ (a_j - X_j^T\wtrue) }{\sqrt{ \mu^T \wtrue} }
	\left( \frac{X_j}{X_j^T \mu}
			- \frac{ \Deltatilde \wtrue}{2\mu^T \wtrue } \right)
	+ O(n^{-1/2} \log^{1/2} n).
\end{aligned}
\end{equation*}
Observe that this is a sum of $n$ independent mean-zero random
variables, so that by the multivariate CLT and Slutsky's Theorem,
\begin{equation*}
n\Xtilde^T T^{-1/2} 
        \left( \frac{ (\avec - X\wtrue) }{ \sqrt{t_v} }
                + \frac{ t_v - d_v }{ 2t_v^{3/2} }X\wtrue
        \right)
\inlaw \calN(0, \Sigma_{F,\wtrue} ),
\end{equation*}
where
\begin{equation*} \begin{aligned}
\Sigma_{F,\wtrue}
&= \E \left[ \frac{ X_j^T\wtrue(1-X_j^T\wtrue) }{ \mu^T\wtrue }
	\left( \frac{X_j}{X_j^T \mu}                                                         - \frac{ \Deltatilde \wtrue}{2\mu^T \wtrue } \right)
        \left( \frac{X_j}{X_j^T \mu}                                                         - \frac{ \Deltatilde \wtrue}{2\mu^T \wtrue } \right)^T \right],
\end{aligned} \end{equation*}
completing the proof.
\end{proof}

\newpage

\bibliographystyle{plainnat}
\bibliography{biblio}

\end{document}